\documentclass{article}

\usepackage[preprint]{icml2026}  
\makeatletter
\renewcommand{\Notice@String}{}
\makeatother
\definecolor{cvprblue}{rgb}{0.21,0.49,0.74}
\usepackage[pagebackref,breaklinks,colorlinks,allcolors=cvprblue]{hyperref}
\newcommand{\std}[1]{\textbf{\scriptsize\tiny\textcolor{gray}
{$\pm#1$}}} 
\usepackage{xspace}
\newcommand{\Vol}{\operatorname{Vol}}
\newcommand{\E}{\mathbb{E}}
\newcommand{\Prb}{\mathbb{P}}
\newcommand{\R}{\mathbb{R}}
\newcommand{\cX}{\mathcal{X}}
\newcommand{\cY}{\mathcal{Y}}

\newcommand{\diag}{\mathrm{diag}}
\newcommand{\tr}{\mathrm{tr}}
\newcommand{\detm}{\mathrm{det}}
\newcommand{\cN}{\mathcal{N}}
\newcommand{\TV}{d_{\mathrm{TV}}}
\newcommand{\KL}{D_{\mathrm{KL}}}
\newcommand{\method}{\emph{MERS}\xspace}
\newcommand{\probcover}{\emph{ProbCover}\xspace}
\newcommand{\maxherding}{\emph{MaxHerding}\xspace}

\newcommand{\buffer}{$\mathcal{|M|}$\xspace}

\newcommand{\myparagpar}[1]{\noindent\textbf{#1}\;}

\usepackage{algpseudocode}
\renewcommand{\algorithmiccomment}[1]{\hfill$\triangleright$ #1}
\usepackage{graphicx}
\usepackage{xcolor}         

\usepackage{amsthm}      

\theoremstyle{definition}
\newtheorem{definition}{Definition}

\newtheorem{theorem}{Theorem}
\newtheorem{lemma}{Lemma}
\newtheorem{proposition}{Proposition}
\newtheorem{corollary}{Corollary}
\usepackage{mathtools} 

\usepackage{textcomp}

\usepackage{subcaption}
\usepackage{paralist}
\usepackage{enumitem}
\usepackage{xspace}
\usepackage{amsmath}
\usepackage{amsfonts} 
\usepackage{amssymb}
\usepackage{mathtools}
\usepackage{dblfloatfix}

\usepackage{booktabs} 
\usepackage{xcolor}         
\usepackage{tikz}
\usetikzlibrary{positioning, arrows.meta}
\usetikzlibrary{shapes.geometric}

\usepackage{graphicx} 
\usepackage{booktabs,xcolor,colortbl, makecell}
\definecolor{tableShade}{RGB}{245,245,245}
\usepackage{array}
\newcolumntype{C}[1]{>{\centering\arraybackslash}m{#1}}
\begin{document}

\twocolumn[
\icmltitle{Leveraging Complementary Embeddings for Replay Selection in Continual Learning with Small Buffers}

\begin{center}
{\Large\bf Danit Yanowsky$^{1}$ \quad Daphna Weinshall$^{1}$\par}
\vspace{0.4em}
{\normalsize $^{1}$School of Computer Science and Engineering\par}
{\normalsize The Hebrew University of Jerusalem\par}
\vspace{0.4em}
{\normalsize \ttfamily \{danit.yanowsky,daphna.weinshall\}@mail.huji.ac.il\par}
\end{center}
\printAffiliationsAndNotice{}

]

\begin{abstract}
Catastrophic forgetting remains a key challenge in Continual Learning (CL). In replay-based CL with severe memory constraints, performance critically depends on the sample selection strategy for the replay buffer. Most existing approaches construct memory buffers using embeddings learned under supervised objectives. However, class-agnostic, self-supervised representations often encode rich, class-relevant semantics that are overlooked. We propose a new method, \emph{Multiple Embedding Replay Selection} (\method), which replaces the buffer selection module with a graph-based approach that integrates both supervised and self-supervised embeddings. Empirical results show consistent improvements over SOTA selection strategies across a range of continual learning algorithms, with particularly strong gains in low-memory regimes. On CIFAR-100 and TinyImageNet, \method outperforms single-embedding baselines without adding model parameters or increasing replay volume, making it a practical, drop-in enhancement for replay-based continual learning.
\end{abstract}

\section{Introduction}

\begin{figure}[b!]  

    \centering
    \includegraphics[width=\linewidth]{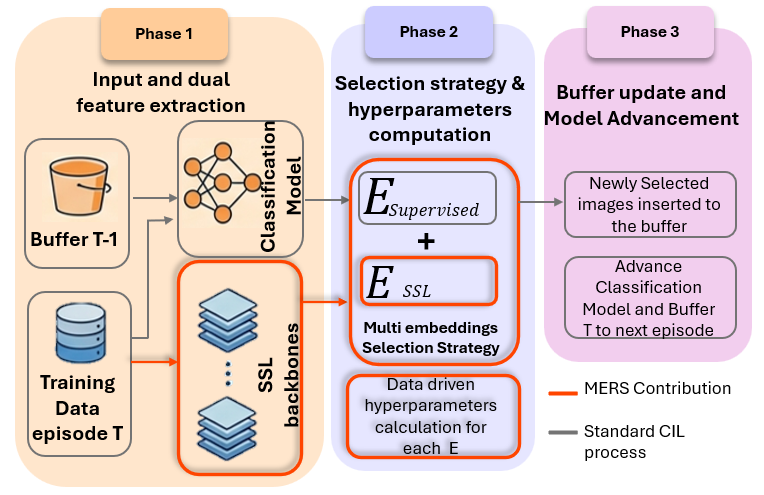}
    \caption{Illustration of our \method in the class-incremental learning (CIL) setup, after training episode T.}
\label{fig:mers_flow}
\vspace{-0.4cm}
\end{figure}


Continual Learning (CL) deals with the challenge of training models while acquiring knowledge from a stream of data whose distribution changes over time. Unlike conventional training on a fixed dataset, many real-world settings, such as autonomous driving, personalized assistants, or robotic agents, must cope with non-stationary environments, where new concepts appear and old ones may become rare or disappear. A central obstacle in this setting is \emph{catastrophic forgetting}~\cite{mccloskey1989catastrophic, ratcliff1990connectionist}: when trained naively on new data, neural networks tend to overwrite previously acquired knowledge, leading to severe performance degradation on past tasks.

This challenge is particularly acute in the \emph{class-incremental learning} (CIL) scenario, where each episode introduces new classes, and at test time the model must jointly classify all classes seen so far. Among the many approaches proposed to mitigate forgetting, replay-based methods have emerged as a simple and effective family of techniques. Experience replay (ER)~\cite{rolnick2019experience}, and its variants such as ER-ACE~\cite{caccia2021new} and MIR~\cite{aljundi2019online}, maintain a small \emph{memory buffer} of past examples and interleave them with current data during training. Under tight memory constraints, however, performance hinges on \emph{which} examples are stored for replay. A large body of work has therefore focused on exemplar selection strategies that aim to maximize diversity or representativeness of the buffer, for example through herding \cite{rebuffi2017icarl}, clustering-based selection~\cite{bang2021rainbowmemorycontinuallearning, chaudhry2021using}, or coverage-based methods 
\cite{shaul2024teal}. 


Most existing selection strategies operate in a \emph{single} representation space: they rely on embeddings produced by the current supervised model, typically the penultimate layer of the classifier. However, a supervised embedding tends to specialize to the current episode: it concentrates geometry along class-discriminative directions and compresses directions that are presently irrelevant. In class-incremental learning, this can make rehearsal and buffer construction fragile - exemplars that look “representative” under the old supervised geometry (e.g., via uniform sampling or mean/coverage criteria) are not necessarily the ones that preserve separability as new classes arrive. This is closely related to distribution shift in domain transfer, which we leverage in the theoretical analysis in Section~\ref{sec:theory}.

To mitigate this risk, we pair the supervised embedding with a self-supervised embedding. The latter typically induces a broader, more nearly uniform feature distribution, and is therefore less likely to neglect directions that are uninformative for current classes but crucial for future ones. Related ideas appear in continual representation learning that incorporates self-supervised objectives (e.g., CaSSLe \citep{fini2022casssle}, SSCIL \citep{ni2021sscil}). In contrast, we do not seek to replace the supervised representation; instead, we integrate supervised and self-supervised geometries through the lens of point coverage. The goal is to reach a sweet spot, retaining strong discrimination on current classes while maintaining a more uniform geometry that better anticipates unseen classes.


Building on these insights, we propose \emph{Multiple-Embedding Replay Selection} (\method), a simple, modular enhancement to replay-based continual learning (see illustration in Fig.~\ref{fig:mers_flow}). Conceptually, \method replaces the usual single-embedding selection step by a \emph{coverage objective} defined jointly over several embedding spaces, e.g., a supervised classifier embedding and a self-supervised SimCLR embedding. We show that this objective can be cast as a weighted maximum $k$-coverage problem over groups, in which each candidate example covers a neighborhood of points in each embedding space. \method automatically adapts the scale of each embedding using non-parametric density estimation, and assigns a weight to each embedding that reflects its effective contribution. Intuitively, this encourages the buffer to cover dense and diverse regions \emph{across} all embeddings, rather than overfitting to the geometry of a single view.

From a methodological standpoint, \method can be understood as a principled extension of coverage-based selection from active learning to the replay setting, preserving the underlying replay backbone while generalizing it to operate jointly over multiple embedding spaces. 
For a fixed memory budget, our greedy selection algorithm retains the classical $(1 - 1/e)$ approximation guarantee for submodular coverage, while remaining practical to implement and efficient in both time and space. Crucially, \method is a \emph{drop-in module}: it requires no architectural modifications to the continual learner, introduces no additional trainable parameters, and can be seamlessly combined with existing replay-based methods, such as ER, ER-ACE, or MIR, by simply replacing the buffer update rule.

We evaluate \method in the class-incremental setting on Split CIFAR-100 and Split TinyImageNet, using three replay-based continual learning algorithms and both supervised and self-supervised embeddings. Across all methods and datasets, \method consistently outperforms single-embedding baselines under the same memory budget, with particularly pronounced gains in low-buffer regimes. We further analyze the role of each embedding and the effect of our data-driven alignment and weighting scheme, showing that the Multiple Embedding formulation is key to the observed improvements.


Summary of main contributions:
\begin{itemize}[itemsep=0.2em, topsep=0.1em, parsep=0pt, partopsep=0pt]
\item We propose \textbf{\method}, a coverage-based replay selection framework that jointly leverages supervised and self-supervised embeddings to capture complementary data geometry under tight memory constraints.
\item We introduce a non-parametric alignment strategy based on $k$-NN density estimation that adapts selection scales and weights across embeddings without adding persistent model parameters.
\item We show that \method achieves \emph{state-of-the-art performance} on Split CIFAR-100 and  Split TinyImageNet, with especially strong gains in low-memory regimes.
\end{itemize}

\section{Related Work}
\label{sec:related}

\paragraph{Continual learning paradigms.}
CL approaches are often grouped into: (i) regularization-based methods that constrain parameter updates to preserve prior knowledge (e.g., EWC \citep{kirkpatrick2017overcoming}, LwF \citep{li2017learning}); (ii) architecture-based methods that expand capacity across tasks (e.g. HAT \cite{serrà2018overcomingcatastrophicforgettinghard}, DAN \cite{yoon2018lifelonglearningdynamicallyexpandable}); and (iii) Replay-based methods that maintain a small memory of exemplars for replay (e.g., ER \cite{rolnick2019experience}, ER-ACE \cite{caccia2021new}). In CIL, rehearsal is particularly competitive under tight memory budgets because it is able to preserve decision boundaries as the label set grows \cite{hou2019learning}. STAR~\cite{eskandar2025starstabilityinducingweightperturbation} introduces a method-agnostic replay mechanism with adaptive sample reweighting, achieving state-of-the-art results under tight memory constraints.

\textbf{Selection strategies.}
A central challenge in replay-based continual learning is \emph{exemplar selection}. Early methods such as iCaRL employed herding to approximate class centroids in a fixed feature space \cite{hou2019learning}. More recent approaches fall into two families: (i) gradient-based methods \citep[e.g., GSS][]{aljundi2019gradient} that prioritize samples likely to induce interference, and (ii) representativeness-oriented methods \citep[e.g., TEAL][]{shaul2024teal} that retain typical samples based on neighborhood structure.

\textbf{Coverage-based selection and its guarantees.}
Coverage-based methods cast exemplar selection as a geometric covering problem. Most prior CL heuristics compute coverage in a single embedding at a fixed scale \citep[e.g.][]{isele2018selectiveexperiencereplaylifelong}
. 
Solving a related problem for active learning, \probcover casts buffer selection as graph coverage 
\citep{yehuda2022active}, while \maxherding introduces kernel smoothing \cite{bae2024generalized}. In contrast, our method generalizes coverage to multiple embeddings and adapts locality per embedding using nonparametric statistics, which is critical in tiny-buffer regimes.

\textbf{Self-supervised representations for CL.}
Self‑supervised learning (SSL) captures class‑agnostic invariances that naturally complement supervised features \citep{uelwer2025survey}. Methods such as SimCLR \citep{chen2020simple}, VICReg \cite{bardes2021vicreg}, and DINO \cite{caron2021emerging} learn rich embeddings without label supervision.

These SSL representations have already demonstrated effective transfer to object detection, semantic segmentation, depth estimation, robotics manipulation, and few‑shot recognition, often rivaling or surpassing supervised pretraining \cite{uelwer2025survey}.  Yet most rehearsal‑based CIL methods still choose exemplars solely in the \emph{supervised} feature space of the current classifier, with only a handful operating purely in an SSL space \citep[e.g.][]{ni2021sscil}, with known selection strategies such as herding applied unchanged \cite{lee2024pretrainedmodelsbenefitequally}. 

In this work we exploit supervised and SSL embeddings in a \emph{complementary manner}, preserving both class-discriminative and class-agnostic structure and yielding consistent gains in tiny-buffer continual-learning regimes.

\textbf{Multi-view learning.} This is an ML paradigm where data is represented through multiple distinct feature sets or "views" (e.g., text and image) \cite{yu2025review}. Common approaches include co-training and multi-view representation learning \cite{zheng2023comprehensive}. The central idea is to leverage the complementary information in these views to improve performance, often by enforcing consistency or agreement across them. In contrast, our approach aims to exploit variability among representation in order to achieve a more representative set of examples, rather than achieving a single coherent view of the data.

\section{Our method: \method}

The proposed method, termed \emph{Multiple Embedding Replay Selection} (\method), is designed to enhance replay-based approaches within the CIL framework. The method, illustrated in Fig.~\ref{fig:mers_flow}, replaces the buffer selection rule with a coverage-based method, which integrates in turn supervised and self-supervised embeddings. Its primary benefits are expected to manifest in low-memory buffer regimes.
Buffer selection is performed independently for each class, using a fixed per-class budget. All definitions below apply to the samples of a single class unless stated otherwise.

\subsection{Notations and definitions}

Let $X=\left\{x_i\right\}_{i=1}^n$ represent a set of $n$ data points, where $x_i\in\ \mathcal{X}$. For this dataset define the graph $G = (V, E)$, with vertices $V=\left\{v_i\right\}_{i=1}^n$ where $v_i\leftrightarrow x_i$, and edges $e_{i,j}=D(x_i,x_j)$ for some distance metric $D \colon \mathcal{X} \times \mathcal{X} \to \mathbb{R}_{\geq 0}$. 

With multiple embeddings, each dataset can now be represented by a collection of graphs $\left\{V,E^{(m)}\right\}$, where $m\in[M]$ indexes the embeddings, $v_i\leftrightarrow x_i$ and $e_{i,j}^{(m)}=D(z_i^{(m)},z_j^{(m)}$) for an embedding $f^{(m)} : \mathcal{X} \to \mathcal{Z}^{(m)}$.
\begin{definition}[$\delta$-ball]
\label{def:ball}
Fix $\delta > 0$, and consider an embedding $f^{(m)} : \mathcal{X} \to \mathcal{Z}^{(m)}$ where $z^{(m)}_{x} = f^{(m)}(x)$.
Define 
\begin{equation*}
B^{(m)}_{\delta}(x)\;=\;\bigl\{x' \in X \;\bigl|\;D\!\bigl(z^{(m)}_{x'}, z^{(m)}_{x}\bigr) \le \delta\bigr\}
\end{equation*}
$B^{(m)}_{\delta}(x)$ denotes the set of points whose embedding lies inside the ball of radius~$\delta$ centered at $x$ in embedding~$m$.
\end{definition}
\noindent

\myparagpar{Maximum k-Coverage with multiple groups.}
The optimization problem, which lies at the heart of our method, can be shown to be a known variant of the k-coverage problem, whose 2-groups version is defined as follows:
\begin{definition}[\texorpdfstring{Maximum $k$-Coverage with two groups}{Maximum k-Coverage with two groups}]
\label{def:coverage}
Let $U$ be a universe of elements, partitioned into two disjoint subsets
$U^1$ and $U^2$ such that $U = U^1 \cup U^2$ and $U^1 \cap U^2 = \emptyset$.
Each element $e \in U^i$ is associated with a nonnegative weight 
$\alpha_i(e) \in \mathbb{R}_{\ge 0}$, where the weight functions 
$\alpha_1, \alpha_2$ may differ between the two groups.

Let $\mathcal{S} = \{S_1, S_2, \dots, S_l\}$ be a family of subsets of $U$,
and let $k \in \mathbb{N}$ be a budget parameter.
For a subcollection $\mathcal{A} \subseteq \mathcal{S}$, define the 
\emph{coverage weight} as
\begin{equation*}
 \mathrm{Coverage}(\mathcal{A}) 
~=~ \sum_{e \in \bigcup_{S \in \mathcal{A}} S \cap U^1} \alpha_1(e) 
   ~+~ \sum_{e \in \bigcup_{S \in \mathcal{A}} S \cap U^2} \alpha_2(e).
\end{equation*}
The goal is to select a sub-collection $\mathcal{A} \subseteq \mathcal{S}$ of size
at most $k$ that maximizes $\mathrm{Coverage}(\mathcal{A})$.
\end{definition}
This formulation can be extended to multiple embeddings. 

\subsection{Coverage-based selection, a single embedding}
\label{subsec:coverage_selection}

A coverage-based selection strategy seeks a small representative subset of $X$ by maximizing a suitable notion of \emph{coverage} on a graph built from the data. 
%
%
More specifically, \probcover~\citep{yehuda2022active} selects a subset $L^* \subset X$ of size at most $b$ that maximizes the number of points covered by the union of corresponding $\delta$-balls:
%
\begin{equation*}
  L^*
  \;=\;
  \arg\max_{L \subseteq X,\; |L| = b}
  \left\vert
    \bigcup_{x \in L} B_\delta(x)
  \right\vert.
\end{equation*}(Superscript $(m)$ is omitted given a single embedding).

Equivalently, \probcover seeks a subset $L^*$ such that the number of points that lie within distance $\delta$ of at least one point in $L^*$ is maximized. This is equivalent to the $b-$set max coverage problem. \maxherding~\citep{bae2024generalized} generalizes this idea by replacing hard $\delta$-ball coverage with a continuous kernel-based similarity measure (e.g., an RBF kernel centered at each selected point), where the underlying objective remains a (soft) notion of coverage.

\subsection{Coverage-based selection, multiple embeddings}
\label{subsec:multi-embedding-coverage}

We now generalize the coverage objective to the multiple embedding setting considered in this work. 
Intuitively, each embedding captures different aspects of the data geometry; we therefore aim to select a buffer that covers dense regions \emph{across all embeddings}. To this end, we define a weighted multiple embedding coverage objective.

\begin{definition}[Buffer selection with weighted Multiple Embedding coverage]
\label{def:weighted-coverage}
Let $\alpha_1,\dots,\alpha_M \ge 0$ denote non-negative weights that reflect the relative importance of each embedding. For a candidate subset $L \subseteq X$, define
\vspace{-6pt}
\begin{equation}
\label{eq:F(L)}
  F(L)
  \;=\;
  \sum_{m=1}^M \alpha_m
  \Bigl|
    \bigcup_{x_i \in L} B^{(m)}_{\delta_m}(x_i)
  \Bigr|.
\end{equation}
\vspace{-6pt}
Given a budget $b$, the buffer-selection problem becomes
\begin{equation*}
  L^*
  \;=\;
  \arg\max_{L \subseteq X,\; |L| = b} F(L).
\end{equation*}
\end{definition}
The optimization problem in~\eqref{eq:F(L)} is equivalent to a special case of the \emph{weighted maximum $k$-coverage} problem with $M$ groups. To make this connection explicit, define, for each embedding $m$, a ground set $U_m$ that contains one element $u_i^{(m)}$ for every datapoint $x_i \in X$, and define the global ground set $U = \biguplus_{m=1}^M U_m$ (disjoint union). For each datapoint $x_i$, associate the subset
\vspace{-6pt}
\begin{equation}
  S_i
  \;=\;
  \bigcup_{m=1}^M
  \bigl\{
    u_j^{(m)} \in U_m
    \,\big|\,
    x_j \in B^{(m)}_{\delta_m}(x_i)
  \bigr\}.
\end{equation}
For any $L \subseteq X$ we can rewrite (\ref{eq:F(L)}) as follows:
\begin{equation}
\label{eq:weighted-coverage}
  F(L)
  \;=\;
  \sum_{m=1}^M \alpha_m
  \Bigl|
    \bigcup_{i : x_i \in L} \bigl(S_i \cap U_m\bigr)
  \Bigr|.
\end{equation}
From~(\ref{eq:weighted-coverage}) and Def.~\ref{def:coverage}, maximizing $F(L)$ subject to $|L| = b$ is equivalent to a weighted maximum $k$-coverage problem with $M$ groups over the family $\{S_i\}_{i=1}^N$, where all elements $e \in E^{(m)}$ share a common weight $\alpha_m$. The resulting objective is a non-negative, normalized, monotone, submodular set function
(see Appendix~\ref{app:submodular}).Therefore, the greedy algorithm that iteratively selects the element with the largest marginal gain achieves a $(1 - 1/e)$-approximation to the optimal solution~\citep{vazirani2001approximation}. A full proof is provided in Appendix~\ref{sec:Detailed_Theoretical_Analysis}.

\subsection{Embedding alignment}

\textbf{Bandwidth selection for the RBF kernel in \maxherding.}  
Coverage-based selection methods rely on hyper-parameters that control similarity range and partition granularity, which become especially problematic in Multiple Embedding settings where embeddings $\{\mathcal{E}^{(m)}\}_{m=1}^{M}$ originate from heterogeneous backbones with incompatible geometric scales.
When integrating \maxherding\ into \method, the relevant parameter is the RBF bandwidth $\sigma$ 
where
\[
\kappa_{RBF}(\mathbf{x},\mathbf{x}')=\exp\!\bigl(-\lVert\mathbf{x}-\mathbf{x}'\rVert^{2}/(2\sigma^{2})\bigr).
\]
Following the widely adopted \emph{median heuristic} \cite{garreau2018largesampleanalysismedian}, we set $\sigma$ to the median cosine distance among all exemplars in the current episode. This choice aligns kernel similarities with the intrinsic geometry and sparsity of each embedding $\mathcal{E}^{(m)}$, ensuring consistent behavior across embeddings, as validated in Section~\ref{sec:ablation}.

\textbf{Weighting each embedding.}
We now discuss the estimation of the vector of weights $\{\alpha_m\}$ defined in (\ref{eq:weighted-coverage}).

First, we recall the definition of the $k$-NN density estimation. Once again,  let $\mathcal{M}_c = \{  x\in X \mid y(x)=c\}$. For any $x\in\mathcal{M}_c$,  let $\mathcal{N}^{(m)}_K(\mathbf{x})$ denote the set of its $k$ nearest neighbors in $\mathcal{M}_c \setminus {\mathbf{x}}$ in embedding $\mathcal{E}^{(m)}$. Let $\rho^{(m)}_k(x) $ denote the mean distance from $x$ to set $\mathcal{N}_K(\mathbf{x})$. In embedding $m$, the kNN density estimate at $x$ is defined as follows:
\begin{equation}\label{eq:knn_density}
\widehat{f}^{(m)}_k(x) = \frac{k}{\rho^{(m)}_k(x)}
\end{equation}
For embedding $m$, we now define its weight as follows:
\vspace{-4pt}
\begin{equation}
\label{eq:weight}
\alpha_m=\frac{\operatorname{median}(\widehat f^{(m)}_k(x))}{\operatorname{median}(\widehat f^{(m)}_1(x))}
\end{equation}
The reasoning behind this definition is as follows: if two point clouds differ only by a scale factor, the distribution of $\alpha$ remains unchanged, resulting in $\alpha_1 = \alpha_2$. In practice, however, the supervised embedding $\mathcal{E}_{\text{Supervised}}$ tends to exhibit \emph{micro-clusters} - tightly grouped, nearly identical samples within a class - more so than the self-supervised embedding $\mathcal{E}_{\text{self-supervised}}$. These local geometric effects reduce the nearest-neighbor distance $\rho_1$ without a proportional
reduction in $\rho_k$, thereby increasing the ratio $\rho_k / \rho_1$.
This effect is significantly weaker in the self-supervised embedding
$\mathcal{E}_{\text{self-supervised}}$, whose geometry is more uniform.
As a result, we typically observe:
\begin{equation}
\label{eq:beta}
\beta \;=\; \frac{\alpha_{\text{Supervised}}}{\alpha_{\text{self-supervised}}} \;>\; 1 .
\end{equation}
Our greedy algorithm maximizes the \emph{weighted coverage score} defined in (\ref{eq:weighted-coverage}). Because the algorithm also enforces \emph{diversity} through disjoint $k$-NN balls, dense supervised balls contain far fewer candidate edges than large self-supervised balls. Multiplying the supervised edge count by $\beta$ thus equalizes the edge mass that each selected point can cover, ensuring that the sampler does not over‑represent the sparse self-supervised space and achieves a balanced, diverse subset across both embeddings.

\subsection{The \method algorithm}

Pseudo-code for the \maxherding-variant  of \method is provided in Alg.~\ref{alg:MaxHerding-multiple-embedding} (see Appendix for the \probcover-variant).

\begin{algorithm}[ht]
\caption{\method \maxherding}
\label{alg:MaxHerding-multiple-embedding}
\textbf{Input:} Dataset $C = \{x_1, \dots, x_n\}$, kernels $k_m$, weights $\alpha_m$, buffer $\mathcal{m}$, budget $b$.

\textbf{Output:} Updated memory buffer $\mathcal{M}$.

\begin{algorithmic}[1]

\State $k(x, x') \gets \sum_{m=1}^M \alpha_m k_m(z^{(m)}_x, z^{(m)}_{x'})$

\For{$t = 1$ to $b$}  \Comment{Greedy MaxHerding selection}
    \For{each $x_j \in C \setminus S$}
    \State \parbox[t]{0.8\linewidth}{%
    $\begin{aligned}
                &G(x_j) \gets \frac{1}{n} \sum_{i=1}^n \max(k(x_i, x_j) - c_i, 0)
            \end{aligned}$}
    \EndFor
    \State $x_t \gets \arg\max_{x_j \in C \setminus S} G(x_j)$; \quad $S \gets S \cup \{x_t\}$

    \For{$i = 1$ to $n$}
        \State $c_i \gets \max(c_i, k(x_i, x_t))$
    \EndFor
\EndFor

\State $\mathcal{M} \gets \mathcal{M} \cup S$
\end{algorithmic}

\Return $\mathcal{M}$
\end{algorithm}
\vspace{-6pt}

\section{Theoretical analysis}
\label{sec:theory}

Appendix~\ref{sec:Detailed_Theoretical_Analysis}  provides a theoretical justification for sampling from a
mixture of Supervised (SL) and Self-Supervised (SSL) embeddings. The key premise
is that SL representations can become \emph{episode-specialized}, concentrating
variation in class-discriminative directions and compressing directions that are
currently irrelevant, whereas SSL representations tend to preserve a broader set
of non-label factors and induce a more isotropic geometry. While SL specialization is clearly beneficial, encoding task-relevant structure and domain knowledge, we show that the broader, less discrimination-oriented geometry of SSL representations can improve robustness to domain shift and to the emergence of new classes.

\textbf{Modeling SL/SSL as class-conditional perturbations.}
We model each class-conditional distribution in a reference feature space
$\R^n$ by a Gaussian $\cN(\mu,\Sigma)$. We investigate a reference class whose conditional distribution is $G_0=\cN(\mu_0,\Sigma)$, fixing $\mu_0=0$ w.l.o.g.
Each embedding used for buffer sampling induces a \emph{modified} class-conditional
distribution for the reference class. With SSL embedding, we model the modified distribution by one of two isotropic proxies:
$G^{(1)}_{\mathrm{SSL}}=\cN(0,\sigma\Sigma)$ or $G^{(2)}_{\mathrm{SSL}}=\cN(0,\sigma I_n)$, which are justified by the presumed uniformity of SSL embeddings. The distribution over the SL embedding is modeled by an anisotropic proxy
$G_{\mathrm{SL}}=\cN(0,\Sigma^{1/2}D\Sigma^{1/2})$ with
$D=\diag(\alpha,\ldots,\alpha,\beta,\ldots,\beta)$, $\alpha>\beta>0$, where $\alpha$
acts on $m$ discriminative directions and $\beta$ compresses the remaining $n-m$
directions. To factor out irrelevant global-scale effects, we enforce equal global compression between the SL and SSL embeddings by matching the volumes of their covariance ellipsoids, i.e., their determinants.

\textbf{Anisotropy increases KL under equal volume.} Under equal-volume normalization, the anisotropic SL perturbation yields a larger class-conditional distortion than the SSL proxies as measured by $\KL(G_0\|\cdot)$. In particular,
\vspace{-6pt}
\begin{equation*}
\label{eq:KL}
\begin{split}
\KL\!\left(G_0 \,\middle\|\, G_{\mathrm{SL}}\right)
&\ge
\KL\!\left(G_0 \,\middle\|\, G^{(1)}_{\mathrm{SSL}}\right), \\
\KL\!\left(G_0 \,\middle\|\, G_{\mathrm{SL}}\right)
&\ge
\KL\!\left(G_0 \,\middle\|\, G^{(2)}_{\mathrm{SSL}}\right),
\quad\beta < \beta_0.
\end{split}   
\end{equation*}
for some $\beta_0$. Moreover, in the highly anisotropic regime $\beta \to 0$, the resulting KL gap can grow arbitrarily large.

\textbf{A domain-adaptation view of class-incremental training.}
We now formulate the episode-to-episode shift as a domain adaptation problem. A central quantity in this framework is the \emph{train--test risk gap}, defined as the difference between the empirical risk on the training set and the risk on a test set; a larger gap indicates poorer generalization. Accordingly, our objective is to minimize this gap.

In our setting, only class $Y = 1$ is carried over from the previous episode and is therefore represented by a limited buffer of stored examples, while all remaining classes are represented by freshly sampled data. As a result, the training and test distributions coincide for all but the buffered class: $P_{\mathrm{tr}}(X \mid Y = y) = P_{\mathrm{te}}(X \mid Y = y)\quad \forall y \neq 1$, whereas $P_{\mathrm{tr}}(X \mid Y = 1) \neq P_{\mathrm{te}}(X \mid Y = 1)$. The following result characterizes the effect of this class-conditional shift on the risk gap for any classifier $h$:
\begin{equation*}
\label{eq:risk}
RiskGap \!\le \!\KL\big(P_{\mathrm{te}}(X\!\mid\! Y=1)\|P_{\mathrm{tr}}(X\!\mid\! Y=1)\big).
\end{equation*}
\begin{table*}[!b]
\centering
\fontsize{9}{11}
\rmfamily
\setlength{\tabcolsep}{1mm}\caption{Average Accuracy Across All Tasks (AAA) on CIFAR-100 \textbf{ER ACE STAR}. For each \buffer, the best AAA is highlighted in bold.}
\label{tab:aaa_cifar-100_er_ace_star}
\rowcolors{2}{tableShade}{white}%
\begin{tabular}{lcccccc}
\toprule
 & \multicolumn{1}{c}{Random} & \multicolumn{1}{c}{\probcover} & \multicolumn{2}{c}{\maxherding} & \multicolumn{1}{c}{Herding} & \multicolumn{1}{c}{TEAL}  \\
\cmidrule(lr){2-2} \cmidrule(lr){3-3} \cmidrule(lr){4-5} \cmidrule(lr){6-6} \cmidrule(lr){7-7} 
\buffer & Supervised & Supervised & Supervised & MERS & Supervised & Supervised  \\
\midrule
100 & 41.71 \std{0.18} & 47.98 \std{0.13} & 49.32 \std{0.13} & \textbf{50.96} \std{0.19} & 41.98 \std{0.20} & 48.41 \std{0.33}  \\
300 & 50.25 \std{0.35} & 57.10 \std{0.25} & 57.11 \std{0.14} & \textbf{58.96} \std{0.21} & 51.79 \std{0.27} & 56.56 \std{0.23}  \\
500 & 54.14 \std{0.32} & 59.88 \std{0.17} & 60.32 \std{0.27} & \textbf{61.64} \std{0.16} & 55.22 \std{0.30} & 60.06 \std{0.15}  \\
1000 & 60.03 \std{0.27} & 64.17 \std{0.30} & 63.92 \std{0.12} & \textbf{65.54} \std{0.29} & 61.50 \std{0.08} & 64.05 \std{0.28}\\
2000 & 65.12 \std{0.22} & 67.82 \std{0.20} & 68.08 \std{0.34} & \textbf{69.23} \std{0.27} & 65.38 \std{0.19} & 67.57 \std{0.07}  \\

\bottomrule
\end{tabular}
\label{tab:aaa_er_ace_cifar100}
\end{table*}
In other words, the train-test risk gap is bounded by the KL-divergence between the class conditional distribution of the buffered class in the train and test distributions.

\textbf{Implication for SSL vs.\ SL sampling.}
Together, \eqref{eq:KL} and \eqref{eq:risk} give us our final result: under equal-volume normalization, sampling in
an SSL geometry leads to a tighter bound on the
train-test risk gap than sampling in the anisotropic SL geometry, especially
in the small-$\beta$ regime, and therefore implies better generalization.

\section{Methodology}
\label{sec:empirical_methodology}

In our empirical evaluation, \method is evaluated while enhancing 3 distinct experience replay continual learning algorithms, detailed in Section~\ref{sec:ER-methods}. We report results in comparison with several common exemplar selection strategies, which are described in Section ~\ref{sec:selection-methods}. 
Section~\ref{sec:ssl-methods} describes the 3 alternative SSL methods used for evaluation. Section~\ref{sec:dataset} describes the two datasets used in our evaluation, following customary practice in the evaluation of CIL methods.  Evaluation metrics are described in Section~\ref{sec:metrics}. All experiments use a class-balanced replay buffer.

\subsection{Continual Learning Algorithms}
\label{sec:ER-methods}
We evaluate \method with three rehearsal-based continual learning baselines: \textbf{ER}~\cite{rolnick2019experience}, which replays buffered past examples; \textbf{ER-ACE}~\cite{caccia2021new}, which decouples losses for new and replayed data; and \textbf{ER-ACE-STAR}~\cite{eskandar2025starstabilityinducingweightperturbation}, which augments ER-ACE with an adaptive, method-agnostic replay reweighting strategy.

\subsection{Baseline selection strategies}
\label{sec:selection-methods}

We compare against representative exemplar selection strategies: (i) \textbf{Random} selects exemplars uniformly at random from each class; (ii) \textbf{Herding}~\cite{welling2009herding,rebuffi2017icarl} selects samples to approximate the class mean in feature space; (iii) \textbf{Rainbow Memory}~\cite{bang2021rainbow} balances multiple criteria such as diversity and uncertainty; (iv) \textbf{TEAL}~\cite{shaul2024teal} clusters samples and selects representative exemplars; 
(v) \textbf{\probcover}~\cite{bae2024generalized} selects points based on class-coverage using the \probcover approach; (vi) \textbf{\maxherding}~\cite{bae2024generalized} selects points based on class-coverage using the \maxherding approach.


\subsection{Self-supervised learning baselines}
\label{sec:ssl-methods}
We evaluate three SOTA self-supervised representation learning methods: \textbf{SimCLR}~\cite{chen2020simple}, a contrastive approach maximizing agreement between augmented views; \textbf{VICReg}~\cite{bardes2021vicreg}, which enforces invariance with variance and covariance regularization without negatives; and \textbf{DINOv2}~\cite{9709990,Oquab2023DINOv2}, which learn transferable representations via self-distillation from an EMA teacher. VICReg and SimCLR are trained from scratch at each episode using only the current episode’s data. DINOv2 embeddings are extracted from a frozen model (see Appendix~\ref{subsec:ssl_training}).
\subsection{Datasets} 
\label{sec:dataset}

We evaluate on two standard CIL benchmarks: \textbf{Split CIFAR-100}~\citep{chaudhry2019continual,rebuffi2017icarl}, which divides CIFAR-100 into 10 episodes of 10 classes each (500 training and 100 test images per class), and \textbf{Split TinyImageNet}~\citep{le2015tiny}, which splits TinyImageNet into 10 episodes of 20 classes each (500 training and 50 test images per class).

\subsection{Evaluation Metrics in CIL}
\label{sec:metrics}

We report five standard CIL metrics: 
%
\begin{itemize}[leftmargin=*,itemsep=0.em, topsep=0.em, parsep=0pt, partopsep=0pt]
\item \emph{Average Accuracy}:  $AA_t$ is the mean accuracy over all tasks learned up to task $t$.
\item \emph{Final Average Accuracy}: $FAA = AA_T$.
\item \emph{Anytime Average Accuracy}: $AAA = \frac{1}{T}\sum_{t=1}^{T} AA_t$.
\item \emph{Forgetting}: $F = \frac{1}{T-1}\sum_{i=1}^{T-1}\left(\max_{j \le T} A_{i,j} - A_{i,T}\right)$, where $A_{i,j}$ - accuracy on task $i$ after learning task $j$.
\item \emph{Stability}: accuracy on previously learned tasks. 
$S = \frac{1}{T-1} \sum_{t=2}^{T} \frac{1}{t-1} \sum_{i=1}^{t-1} A_{i,t},$
where $A_{i,t}$ denotes the accuracy on task $i$ after learning task $t$.
\end{itemize}



\begin{table*}[!t]
\centering
\fontsize{9}{11}
\rmfamily
\setlength{\tabcolsep}{1mm}\caption{Average Accuracy Across All Tasks (AAA) on CIFAR-100 \textbf{ER ACE}. For each \buffer, the best AAA is highlighted in bold.}
\label{tab:aaa_cifar-100_er_ace}
\rowcolors{2}{tableShade}{white}%
\begin{tabular}{lcccccc}
\toprule
 & \multicolumn{1}{c}{Random} & \multicolumn{1}{c}{\probcover} & \multicolumn{2}{c}{\maxherding} & \multicolumn{1}{c}{Herding} & \multicolumn{1}{c}{TEAL}  \\
\cmidrule(lr){2-2} \cmidrule(lr){3-3} \cmidrule(lr){4-5} \cmidrule(lr){6-6} \cmidrule(lr){7-7} 
\buffer & Supervised & Supervised & Supervised & MERS & Supervised & Supervised  \\
\midrule
100 & 41.31 \std{0.30} & 45.98 \std{0.35} & 47.04 \std{0.21} & \textbf{48.32} \std{0.15} & 40.77 \std{0.19} & 42.91\std{0.13}  \\
300 & 49.90 \std{0.28} & 54.03 \std{0.21} & 54.19 \std{0.29} & \textbf{55.68} \std{0.39} & 48.70 \std{0.29} & 53.13 \std{0.19}  \\
500 & 53.72 \std{0.20} & 57.17 \std{0.07} & 58.01 \std{0.11} & \textbf{58.99} \std{0.19} & 52.94 \std{0.10} & 56.80 \std{0.14}\\
1000 & 58.88 \std{0.09} & 61.72 \std{0.24} & 62.35 \std{0.30} & \textbf{63.52} \std{0.23} & 58.80 \std{0.22} & 61.10 \std{0.28}  \\
2000 & 64.21 \std{0.35} & 66.47 \std{0.25} & 66.35 \std{0.25} & \textbf{66.96} \std{0.10} & 64.53 \std{0.14} & 65.66 \std{0.24} \\

\bottomrule
\end{tabular}
\end{table*}

\section{Empirical Results}
\subsection{Main results}\label{subsec:main_results}
In our empirical evaluation, we assess two variants of \method that rely on two related coverage-based methods, denoted \method \probcover and \method \maxherding, as described above. 
%
To assess robustness to memory constraints, we varied the capacity of the replay-buffer, from 100 to 1000 on the Split CIFAR-100 benchmark; the resulting FAA is reported in Fig.~\ref{fig:mers_max_herding_er_ace_cifar100_faa}, while AAA is reported in Tables~\ref{tab:aaa_cifar-100_er_ace_star}-~\ref{tab:aaa_cifar-100_er_ace}. On Split TinyImageNet benchmark, we consider buffer sizes ranging from 200 to 6000, with FAA results shown in Fig~\ref{fig:max_herding_tinyimg}. The complete results, including AAA, are provided in Appendix~\ref{sec:appendix} (see Fig.~\ref{fig:mers_max_herding_er_ace_cifar100_aaa}).

\begin{figure}[H]
    \centering
\begin{subfigure}[H]{\columnwidth}
        \centering
        \includegraphics[width=\linewidth]{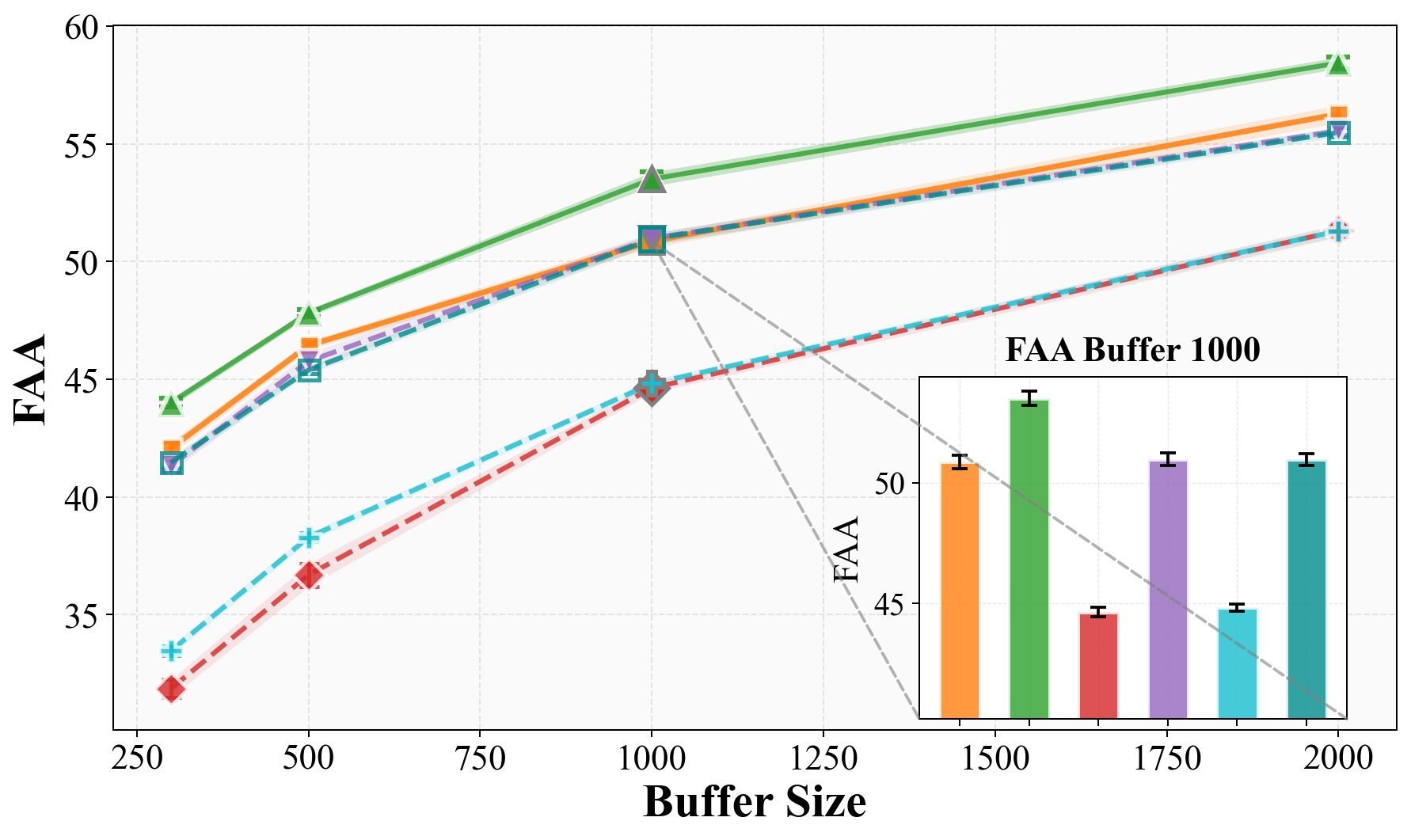}
        \caption{ER-ACE-STAR}
    \end{subfigure}
    \hfill
    \begin{subfigure}[H]{\columnwidth}
        \centering
        \includegraphics[width=\linewidth]{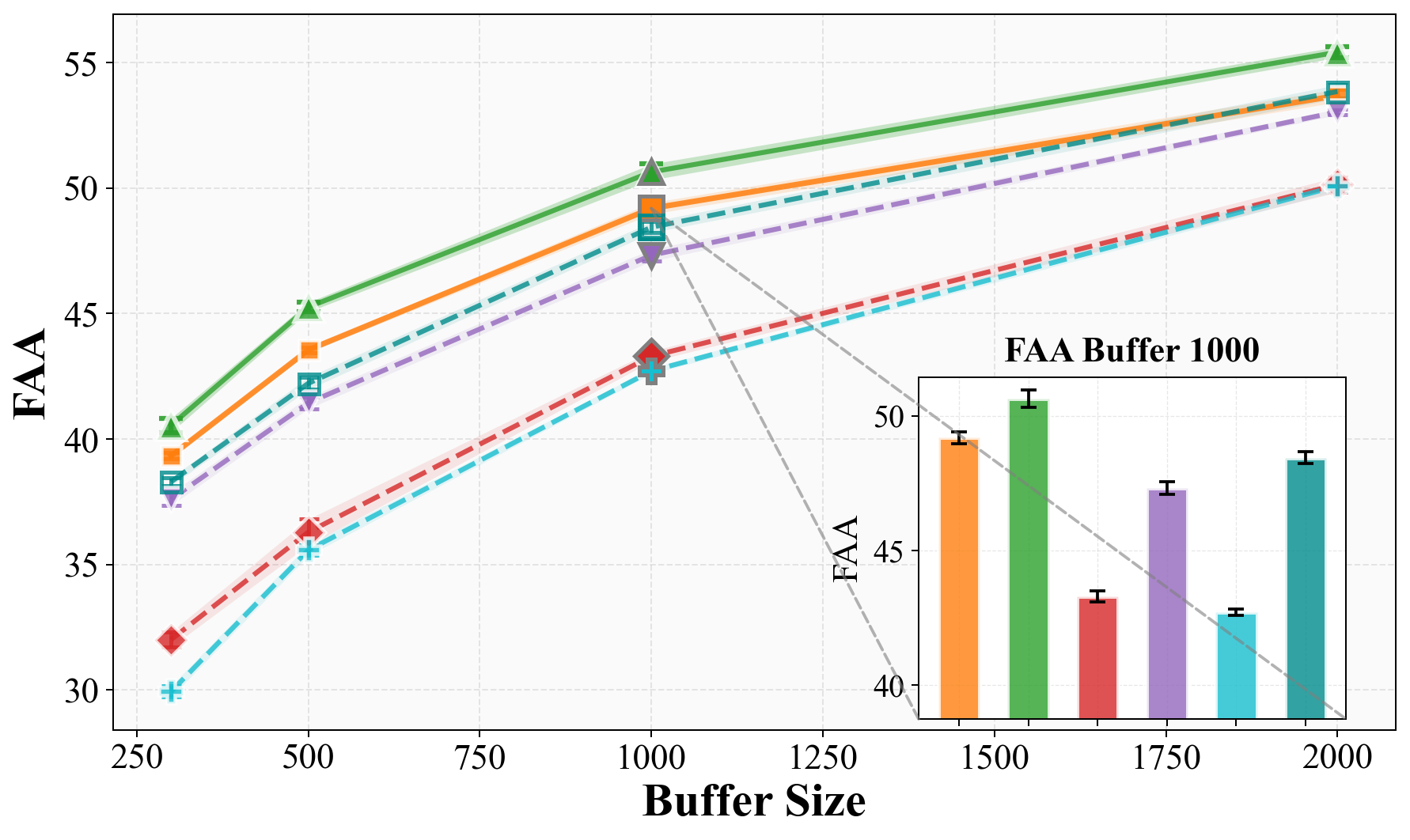}
        \caption{ER-ACE}
    \end{subfigure}
    \begin{subfigure}
    [H]{\columnwidth}
        \centering    \includegraphics[width=\linewidth]{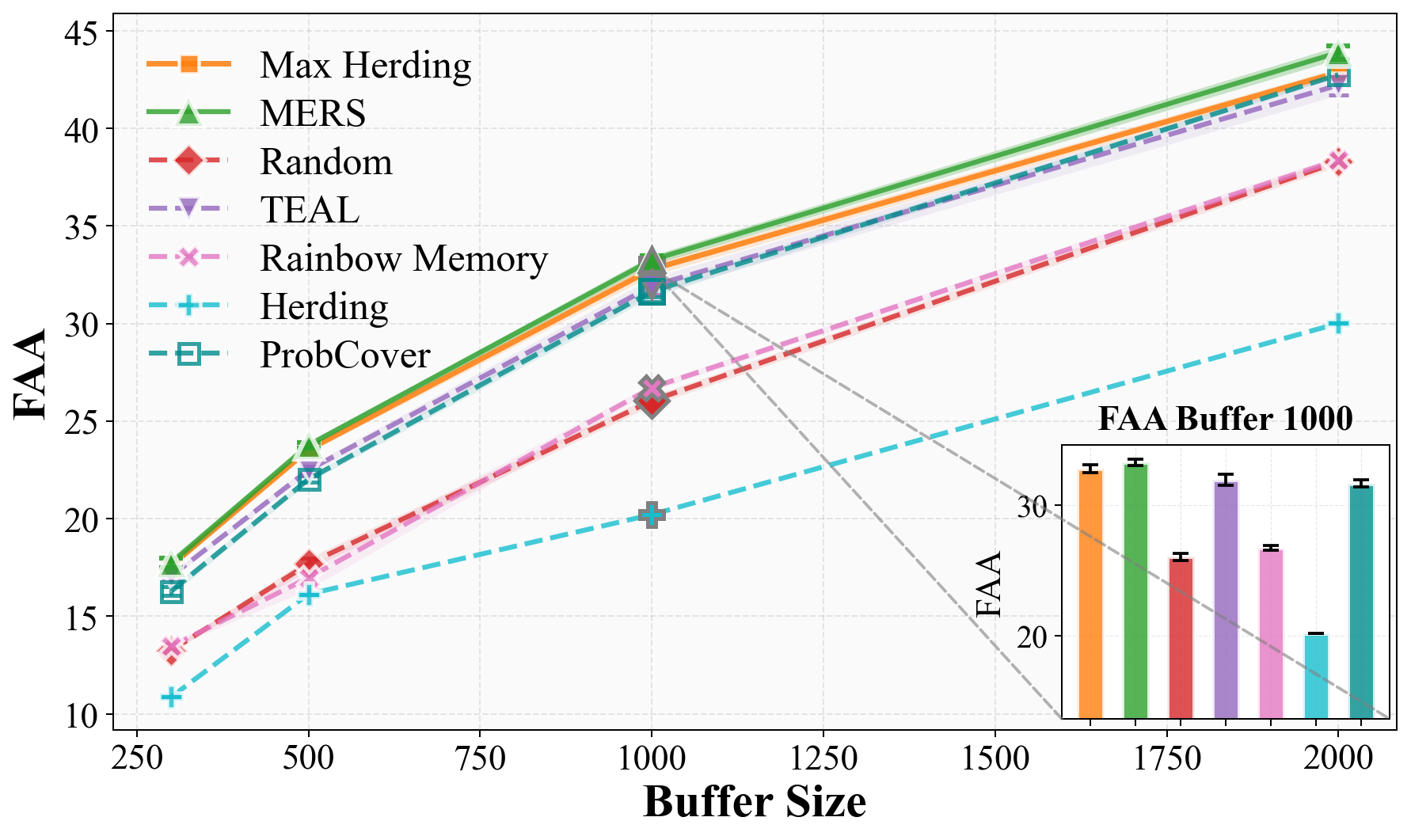}
        \caption{ER}
    \end{subfigure}
    \vspace{0.5em}

\caption{FAA as a function of memory size $|M|$ on Split CIFAR-100 for three continual learning algorithms, described in Section~\ref{sec:ER-methods}. Results with \method are compared against alternative selection strategies, described in Section~\ref{sec:selection-methods}. The selection-strategy legend is shown in panel (c).}
    \label{fig:mers_max_herding_er_ace_cifar100_faa}
\end{figure}

\subsection{Pretrained vs. Episodic Embeddings}
Following the same protocol as outlined above, results when using different SSL embeddings (see Section~\ref{sec:ssl-methods}) are presented in Fig.~\ref{fig:MERS-embeddings}, with complete FAA and AAA tables reported in Appendix~\ref{sec:appendix}.
\begin{figure}[t]
    \centering
    \begin{subfigure}[t]{0.48\linewidth}
        \centering
    \includegraphics[width=\linewidth]{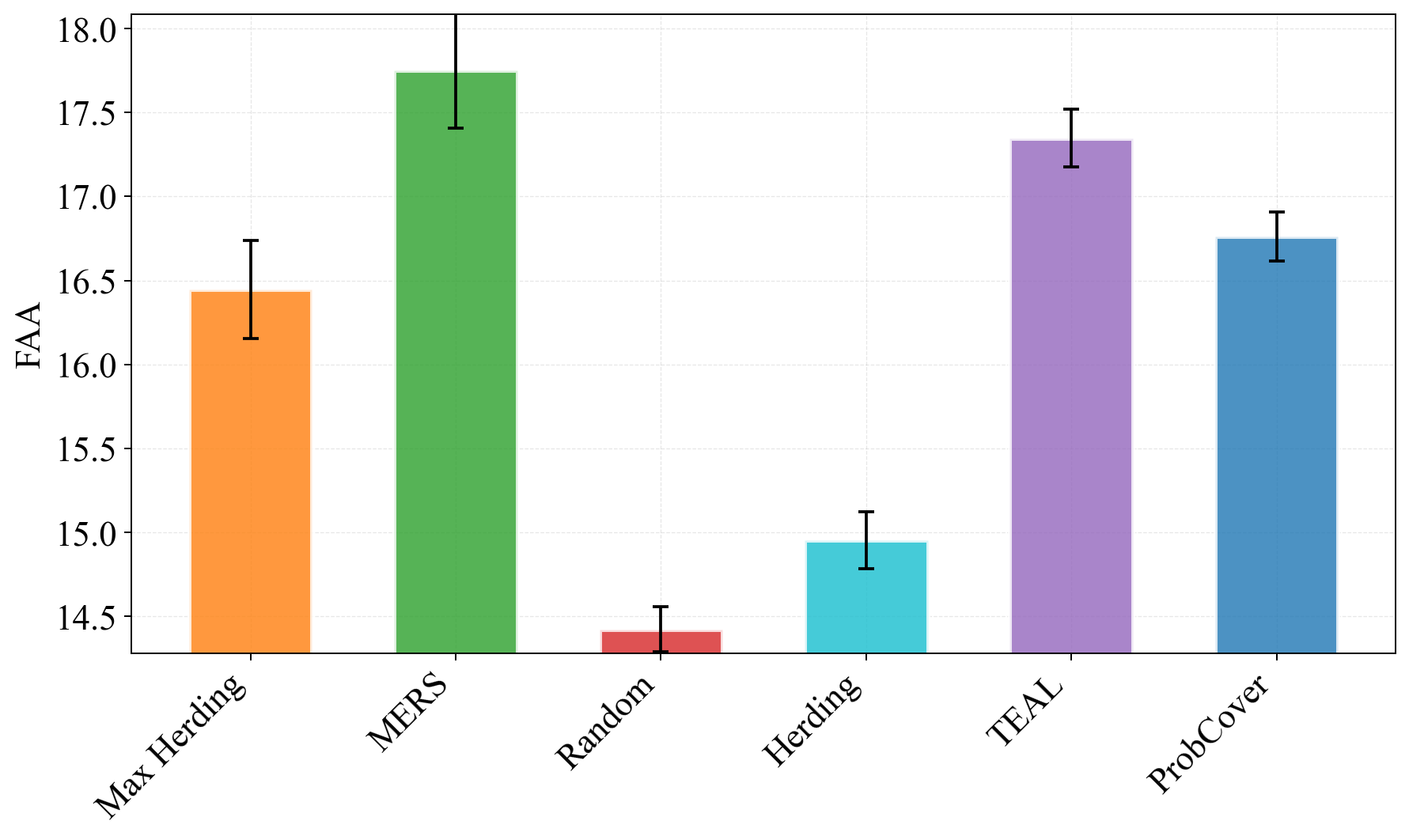}
        \caption{FAA}
    \end{subfigure}
    \hfill
    \begin{subfigure}[t]{0.48\linewidth}
        \centering
        \includegraphics[width=\linewidth]{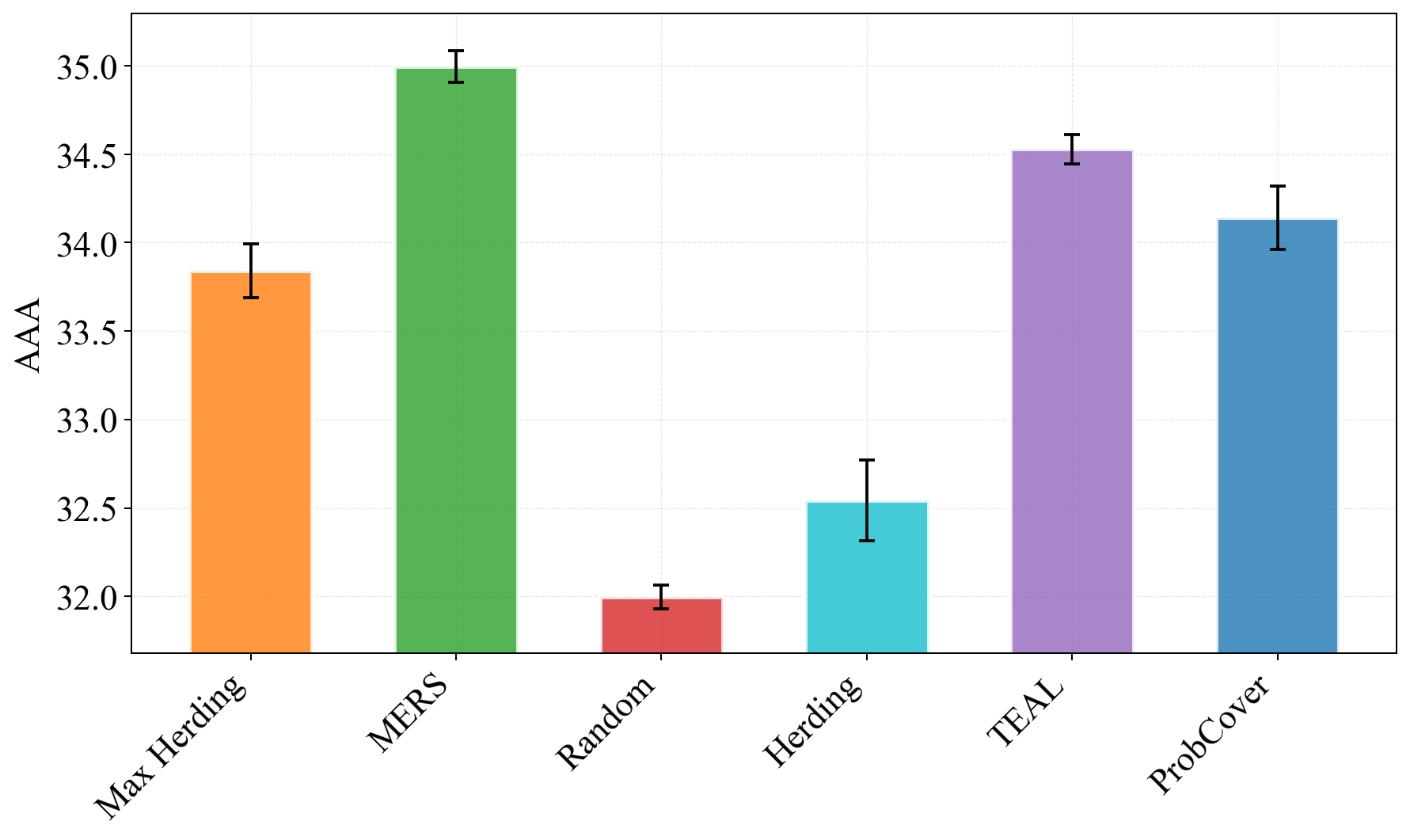}
        \caption{AAA}
    \end{subfigure}
\caption{FAA (left) and AAA (right) on  Split TinyImageNet for \textbf{ER-ACE} with buffer size $|\mathcal{M}| = 1000$, using \method, compared against alternative selection strategies.}
\label{fig:max_herding_tinyimg}
\end{figure}

\begin{figure}[ht]
    \centering
    \includegraphics[width=\linewidth]{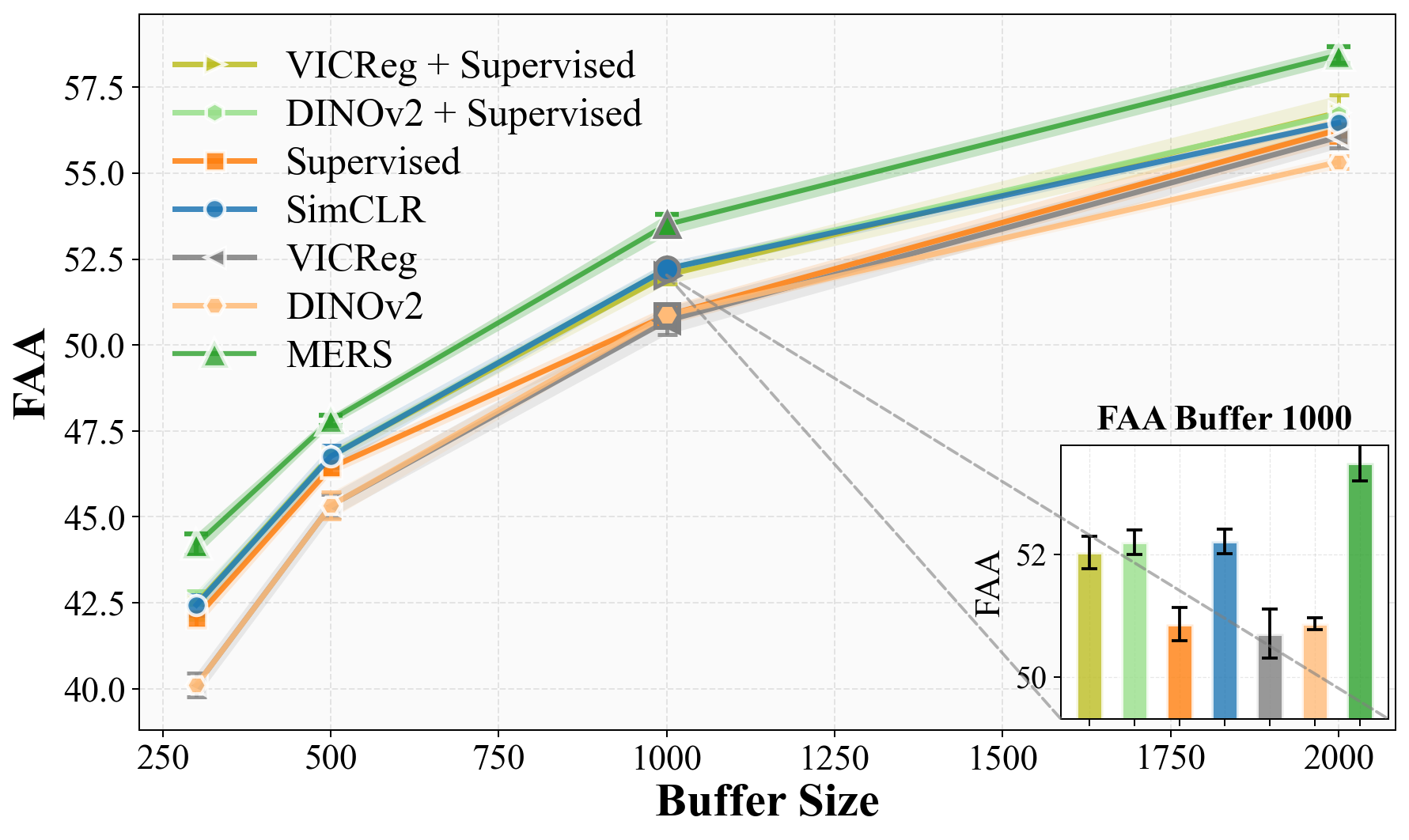}
    \caption{FAA of \method with ER-ACE-STAR on Split CIFAR-100 using different embeddings: SimCLR, VICReg and DINOv2}
    \label{fig:MERS-embeddings}
\end{figure}

\subsection{Selection stability and forgetting}
\label{sec:selection_stability}

We analyze selection stability and forgetting for Max-Herding with a supervised embedding, Max-Herding with SimCLR embedding, and the integrated \method approach. Results are reported in Fig.~\ref{fig:stability}, with complete stability and forgetting statistics provided in Appendix~\ref{sec:appendix_selection_stability}.

\begin{figure}[ht]
    \centering
    \begin{subfigure}[h]{0.48\textwidth}
        \centering
        \includegraphics[width=\linewidth]{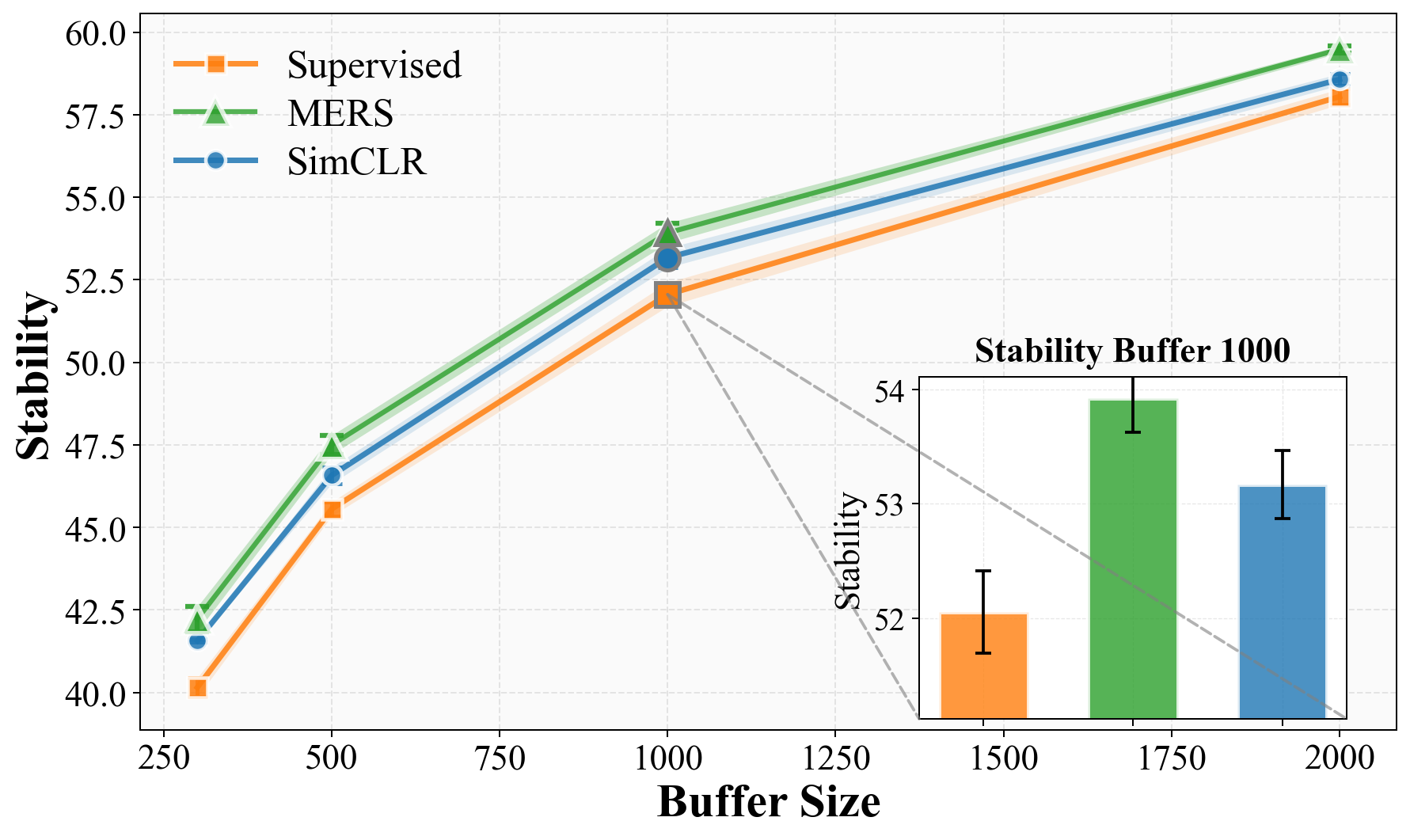}
        \caption{Stability}
    \end{subfigure}
    \hfill
    \begin{subfigure}[h]{0.48\textwidth}
        \centering
        \includegraphics[width=\linewidth]{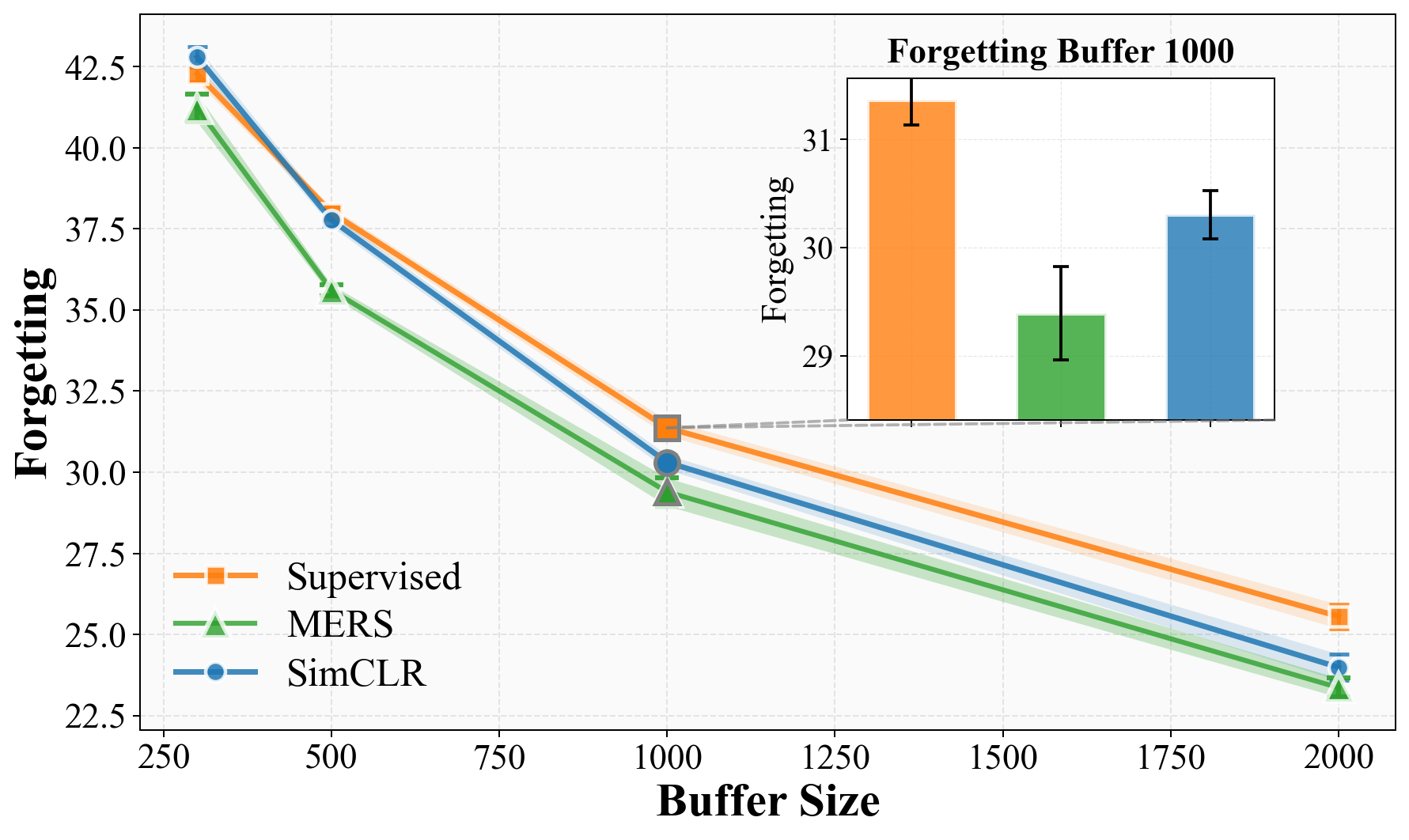}
        \caption{Forgetting}
    \end{subfigure}

    \caption{Stability and forgetting of ER-ACE-STAR with \method as a function of $|M|$ on Split CIFAR-100.}
    \label{fig:stability}
\end{figure}

We observe that Max-Herding based on SimCLR embeddings consistently yields higher stability and lower forgetting compared to its supervised counterpart. Furthermore, \method, which integrates supervised and self-supervised embedding spaces, achieves the most stable selection behavior overall, outperforming both Max-Herding variants across all evaluated settings.


\subsection{Discussion}

Across all buffer sizes, replay methods, and datasets, \method\ achieves the strongest performance. The integrated variant consistently matches or outperforms its constrained counterparts, with the largest gains in the low-budget regime (up to 1000 exemplars). While the gap narrows as the buffer grows, the integrated \method\ remains top-ranked, often tying for best. Overall, \method\ outperforms either embedding alone, with integration yielding the greatest benefit under tight memory constraints. Notably, these gains coincide with increased selection stability and reduced forgetting, 
suggesting that embedding integration plays a key role in the observed performance improvements.

The empirical findings reported in Section~\ref{sec:selection_stability} are consistent with our theoretical analysis in Section~\ref{sec:theory}. The improved stability and reduced forgetting observed with SimCLR and the integrated \method approach reflect a reduced distributional drift between stored exemplars and data encountered in later episodes.


\section{Ablation study}
\label{sec:ablation}

We conducted targeted ablations to identify which design choices of our \emph{\method} are most critical:

\textbf{RBF bandwidth \texorpdfstring{$\boldsymbol{\sigma}$}{sigma}
 in \texorpdfstring{\maxherding}.}
We tested three settings for $\sigma$: (i) median cosine distances, (ii) $\sigma=1$, and (iii) median $k$-NN distances. On CIFAR-100, (i) and (iii) coincide, while the constant value reduces FAA by \textbf{$\approx$ 1\%} in the small-buffer regime (see Fig.~\ref{fig:sigma}). As (i) is dataset-agnostic and robust across budgets, we adopt it as the default.

\begin{figure}[H]
    \centering
    \includegraphics[width=\linewidth]{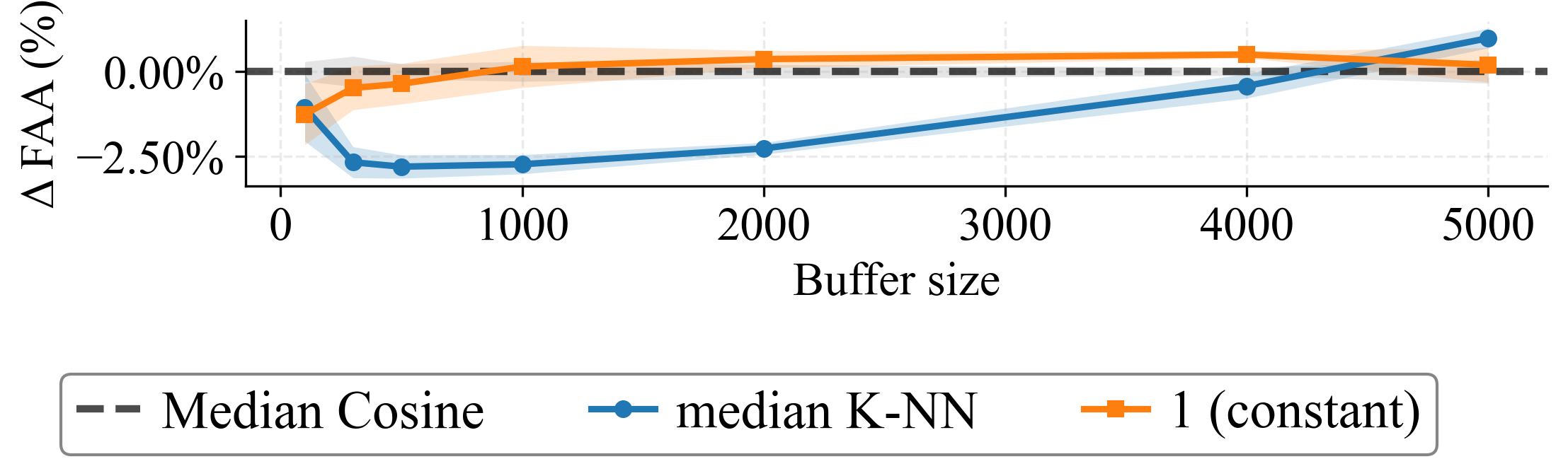}
      \caption{Improvements in FAA on CIFAR-100 as a function of \buffer while varying the RBF bandwidth $\sigma$ in \maxherding.}
    \label{fig:sigma}
\end{figure}

We conducted an ablation study on the embedding weight $\alpha$ using different density estimators. The results show a slight improvement when using the $\alpha$ defined in (\ref{eq:weight}), as reported in Appendix~\ref{app:ablation}.

We also conducted an ablation study using \maxherding with only SimCLR embeddings, and showed that \method achieves higher FAA and AAA accuracy, as reported in Fig.~\ref{fig:MERS-embeddings}

\section{Summary}
We present Multiple Embedding Replay Selection (\method), a plug-and-play sampler for replay-based continual learning that merges supervised and self-supervised feature spaces in a complementary manner. By building $k$-NN coverage graphs in each space, re-scaling them with density-aware weights, and greedily selecting exemplars that maximize a combined coverage score, \method  fills both class-discriminative and invariant regions of the data manifold. Across Split CIFAR-100 and Split TinyImageNet, it boosts final-average accuracy over single-embedding baselines when memory is tight, all without increasing the buffer size or changing model parameters. The method is plug-and-play, incurs only double selection-time overhead and self-supervised training. The approach opens avenues for dynamic, task-aware embedding integration in future work.
\subsection*{\textbf{Acknowledgments}}
This work was supported by a grant from the Gatsby Charitable Foundation and AFOSR award FA8655-24-1-7006.

{
    \bibliographystyle{icml2026}
    \small
    \bibliography{icml2026}
}

\appendix
\section{\probcover-based variant of \method\ }
\label{sec:probcover_variation}
\label{sec:appendix}

We study the integration of supervised and self-supervised embeddings within a coverage-based
selection strategy, namely \textbf{\probcover}~\cite{yehuda2022active}. \probcover is an active-learning
algorithm that formulates sample selection as a maximum coverage problem on a $\delta$-neighborhood
graph: given a small budget, it greedily selects points that maximize the number of previously
uncovered neighbors within a fixed radius $\delta$.

To adapt \probcover\ to the continual learning setting, we treat the current memory buffer as the
unlabeled pool and the exemplar set as the selected subset. We further extend the method to operate
over multiple embedding spaces, following the weighted multi-coverage formulation described in
Section~\ref{subsec:coverage_selection} of the main paper. The resulting procedure is summarized in
Algorithm~\ref{alg:Probcover-multiple-embedding}.

\subsection*{Selection of $\delta$ in \probcover}
\label{subsec:Selection_of_delta_probcover}

A critical hyperparameter in \probcover\ is the cover-ball radius $\delta$, which determines the
granularity of the induced neighborhood graph. Since different embeddings exhibit markedly
different geometric and density characteristics, using a fixed $\delta$ across embeddings is
suboptimal.

Following the nonparametric alignment strategy proposed in the main paper, we estimate $\delta$
from the data using class-conditional $k$-NN statistics. For a class $c$, let
$\mathcal{D}_c = \{x_i \mid y_i = c\}$. For each $x_i \in \mathcal{D}_c$, denote by
$\mathcal{N}_k(x_i)$ its $k$ nearest neighbors in $\mathcal{D}_c \setminus \{x_i\}$, and define
\[
r_i = \operatorname{median}_{x_j \in \mathcal{N}_k(x_i)} \|x_i - x_j\|.
\]
We then set
\[
\delta = \operatorname{median}_{x_i \in \mathcal{D}_c} r_i.
\]

The neighborhood size $k$ is chosen adaptively via the \emph{memory-aware ratio}
\[
k = \frac{|\mathcal{D}_c|}{\mathcal{M}_c},
\]
where $|\mathcal{D}_c|$ is the number of class-$c$ samples observed in the current episode and
$\mathcal{M}_c$ is the class-specific buffer capacity. This choice links the effective resolution of
the coverage graph to both the stream statistics and the available memory budget: larger buffers
yield finer partitions, while smaller buffers induce coarser coverage.


\begin{algorithm}[t]
\caption{\method \probcover}
\label{alg:Probcover-multiple-embedding}
\begin{algorithmic}[1]
\Require Dataset $C = \{x_1, \dots, x_n\}$, distances $D_m$, weights $\alpha_m$, buffer $\mathcal{M}$, budget $b$, ball-size $\delta$.
\Ensure Updated memory buffer $\mathcal{M}$.

\State $B^{(m)}_{\delta}(x_j) \gets \{x_i \in C \mid D_m(z^{(m)}_{x_i}, z^{(m)}_{x_j}) \le \delta\}$ 

\For{$t = 1$ to $b$} \algorithmiccomment{Greedy Set Cover selection}
    \State $x_t \gets \arg\max_{x_j \in C \setminus S} \sum_{m=1}^{M} \alpha_m \left| B_{\delta}^{(m)}(x_j) \cap \mathcal{U} \right|$
    \State $S \gets S \cup \{x_t\}$
    
    \For{$m = 1$ to $M$} 
    \algorithmiccomment{Update uncovered set for all embeddings}
        \State $\mathcal{U} \gets \mathcal{U} \setminus B_{\delta}^{(m)}(x_t)$
    \EndFor
\EndFor

\State $\mathcal{M} \gets \mathcal{M} \cup S$; \quad \Return $\mathcal{M}$
\end{algorithmic}
\end{algorithm}
\subsection{\method \probcover Main results}
We next evaluate \method instantiated with \probcover, following the same experimental protocol described in Section~\ref{subsec:main_results}. 

As shown in Fig.~\ref{fig:mers_probcover_cifar100_faa}, \method\ \probcover improves performance over the corresponding replay methods and selection strategies, particularly under tight memory constraints.
While \probcover can outperform the Max-Herding selection strategy in some configurations, \method\ \maxherding consistently achieves the strongest results overall.

\begin{figure*}[h]
    \centering
\begin{subfigure}[t]{0.33\textwidth}
        \centering
        \includegraphics[width=\linewidth]{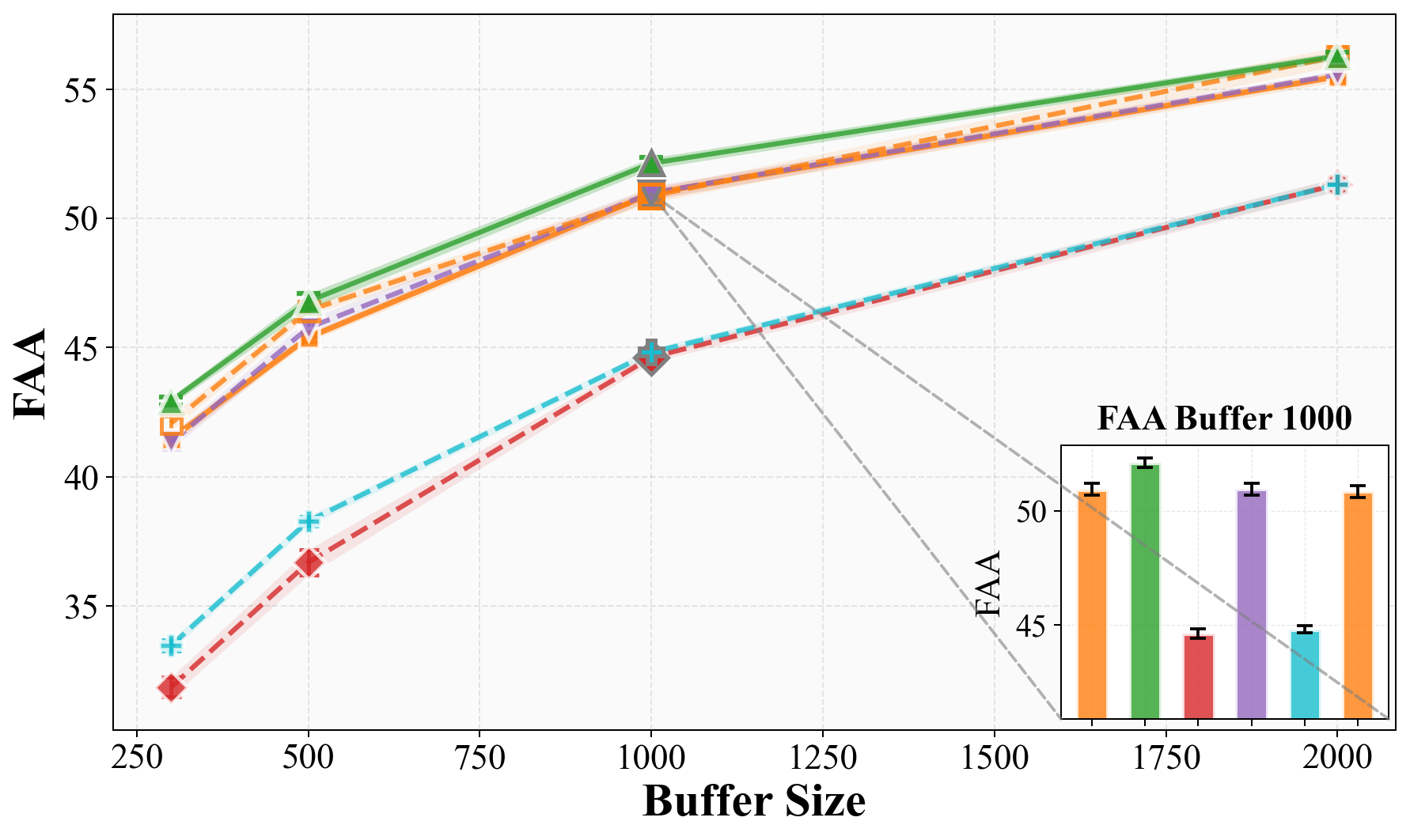}
        \caption{ER-ACE-STAR}
    \end{subfigure}
    \hfill
    \begin{subfigure}[t]{0.33\textwidth}
        \centering
        \includegraphics[width=\linewidth]{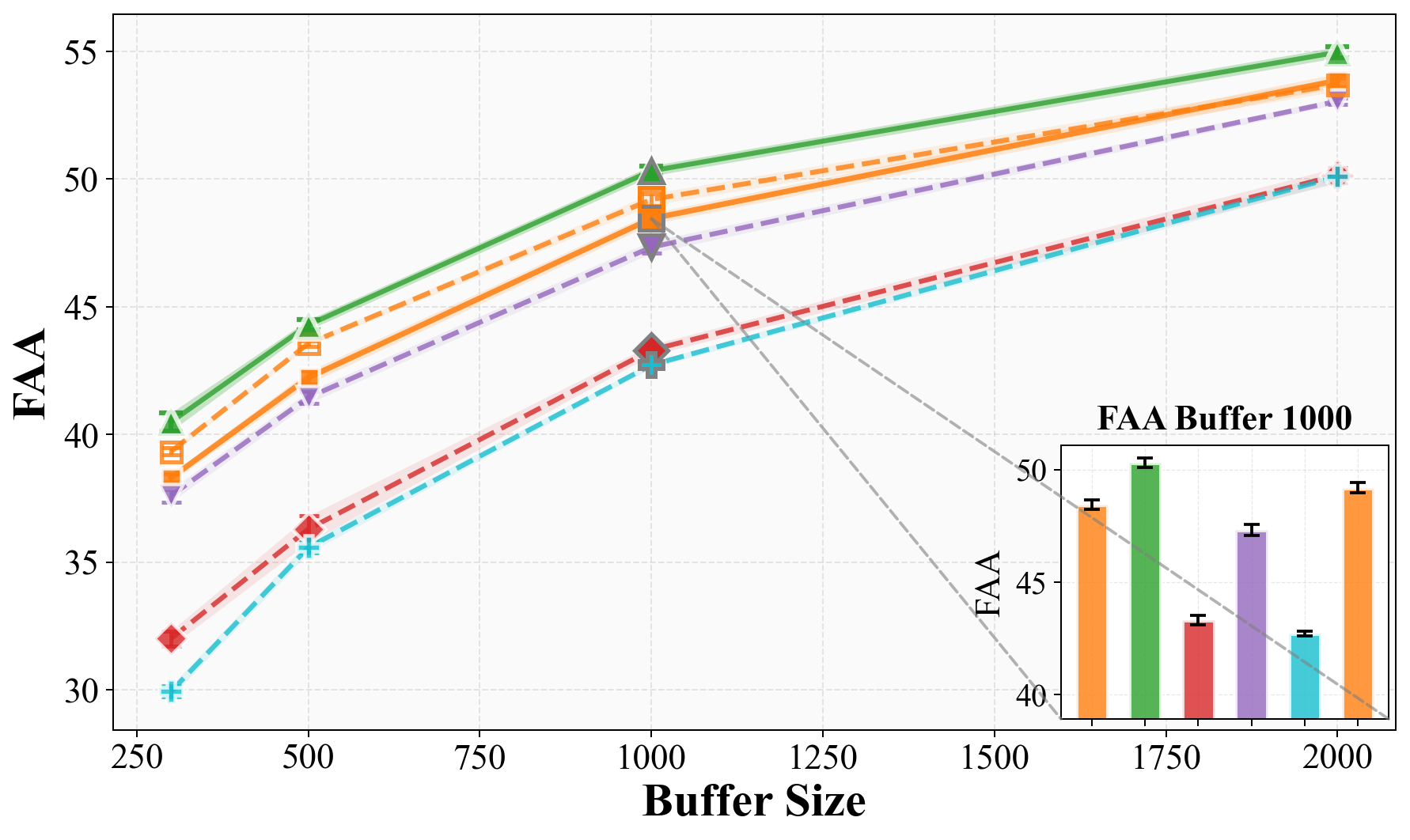}
        \caption{ER-ACE}
    \end{subfigure}
    \begin{subfigure}
    [t]{0.33\textwidth}
        \centering    \includegraphics[width=\linewidth]{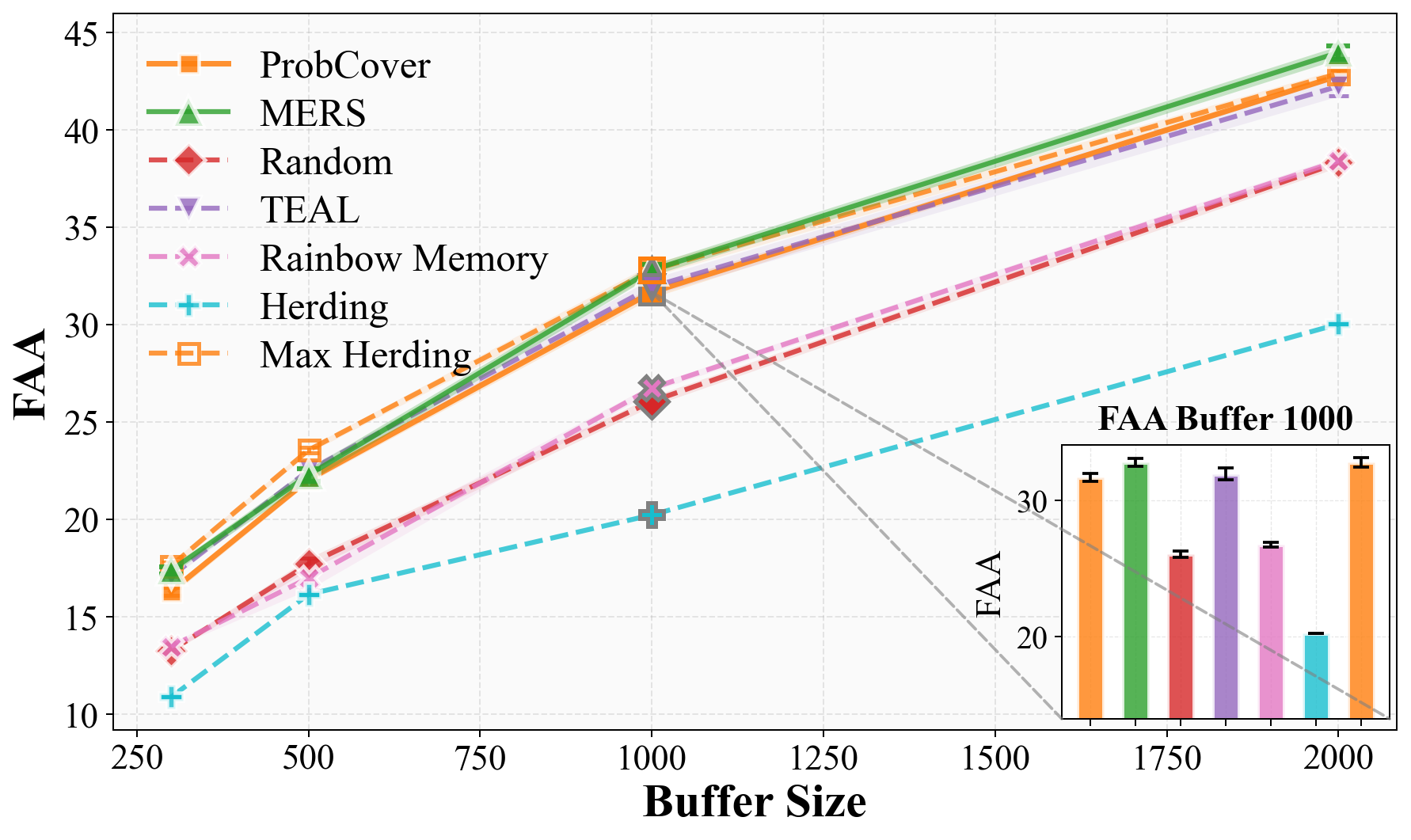}
        \caption{ER}
    \end{subfigure}
    \vspace{0.5em}

\caption{\textbf{\method \probcover}: FAA as a function of memory size $|M|$ on Split CIFAR-100 for three continual learning algorithms, see Section~\ref{sec:ER-methods}. Results with \method are compared against alternative selection strategies, see Section~\ref{sec:selection-methods}. }
    \label{fig:mers_probcover_cifar100_faa}
\end{figure*}
\begin{figure*}[h]
    \centering
\begin{subfigure}[t]{0.49\textwidth}
        \centering
        \includegraphics[width=\linewidth]{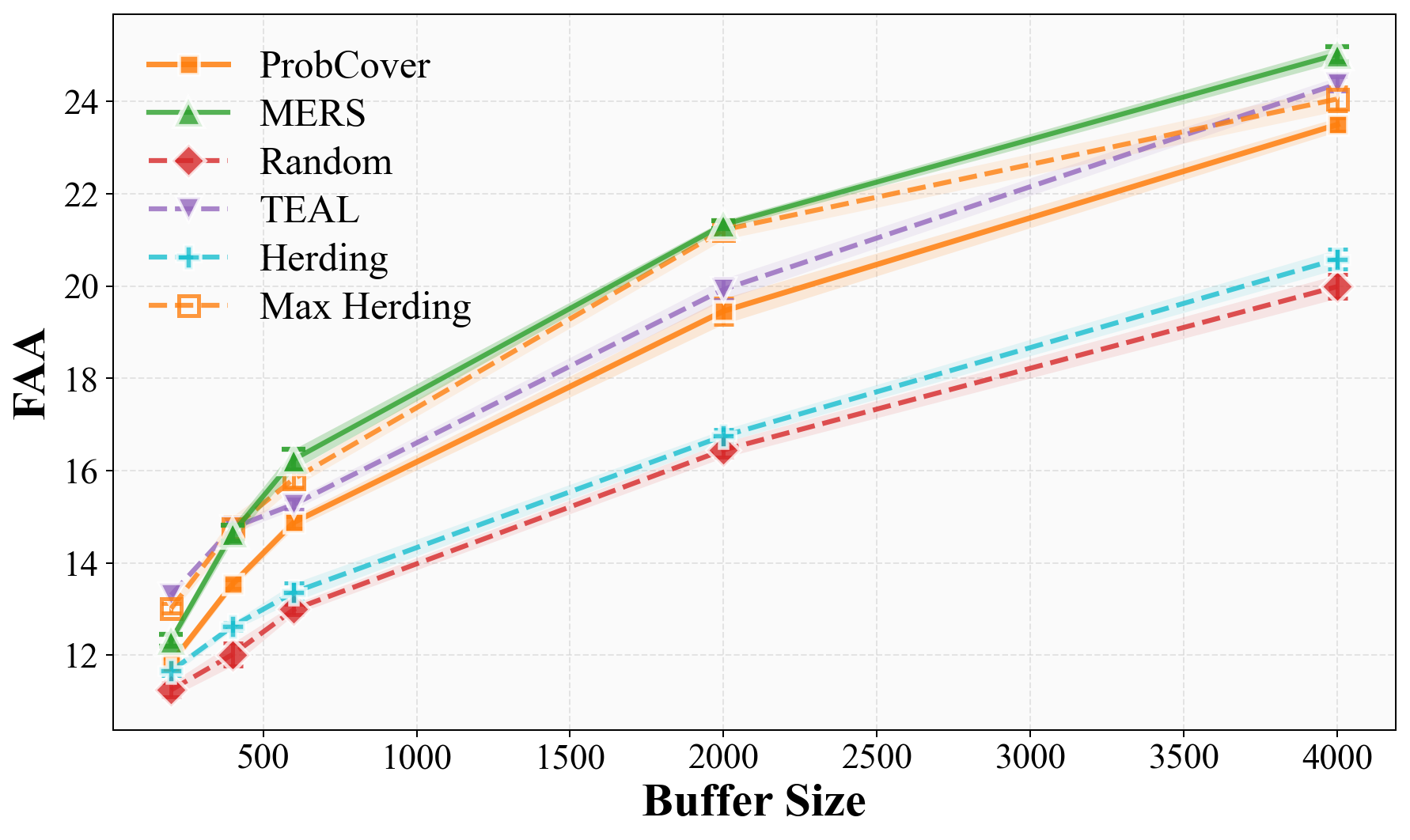}
        \caption{FAA}
    \end{subfigure}
    \hfill
    \begin{subfigure}[t]{0.49\textwidth}
        \centering
          \includegraphics[width=\linewidth]{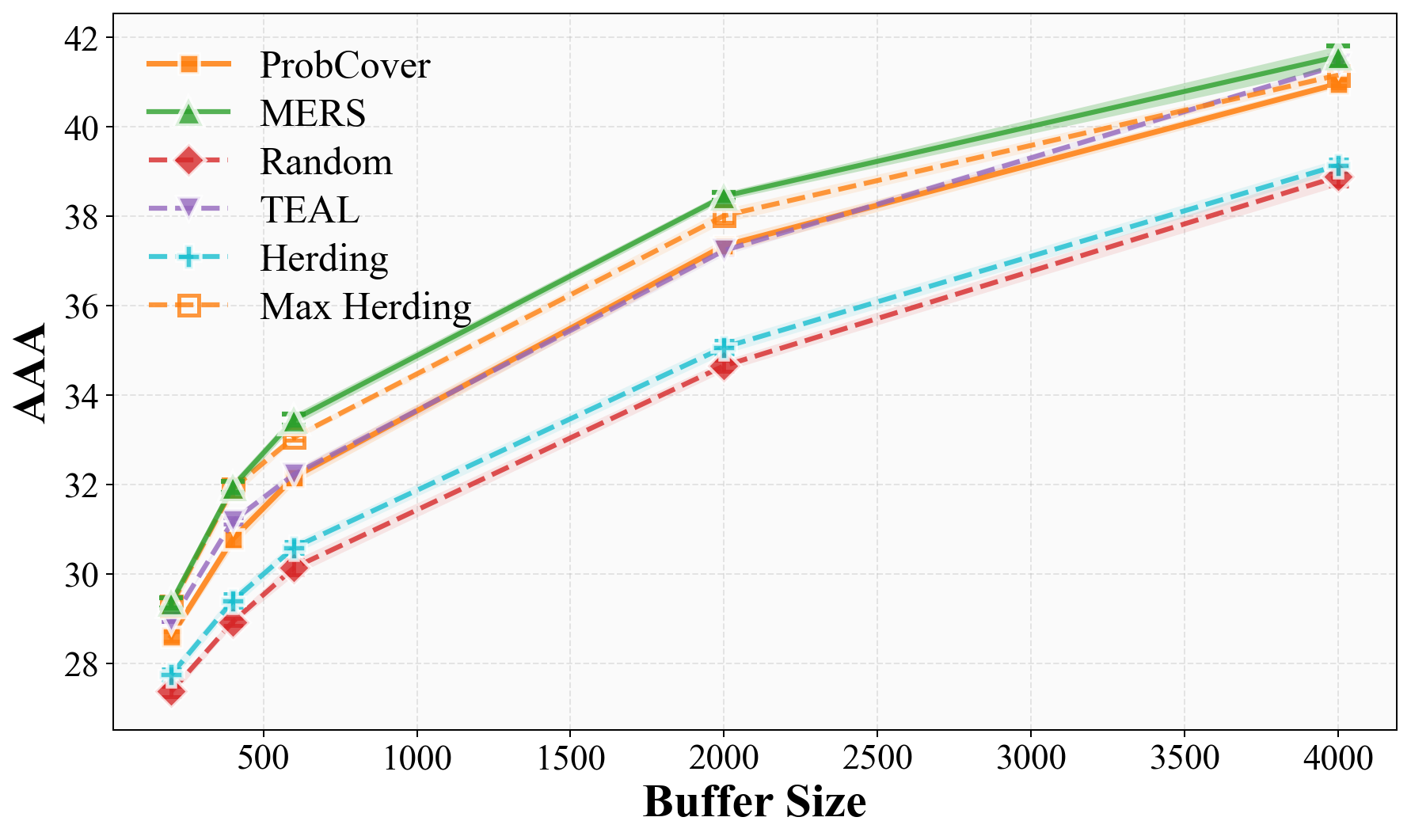}
        \caption{AAA}
    \end{subfigure}

\caption{\textbf{\method \probcover}: FAA (right) and AAA (left) as a function of memory size $|M|$ on  Split CIFAR-100 with ER-ACE. Results with \method are compared against alternative selection strategies. }
    \label{fig:mers_probcover_tinyimg_er_ace}
\end{figure*}
\subsection{Selection stability}

\begin{figure*}
\begin{minipage}[t]{0.99\textwidth}
    \centering
    \begin{subfigure}[t]{0.48\textwidth}
        \centering
        \includegraphics[width=\linewidth]{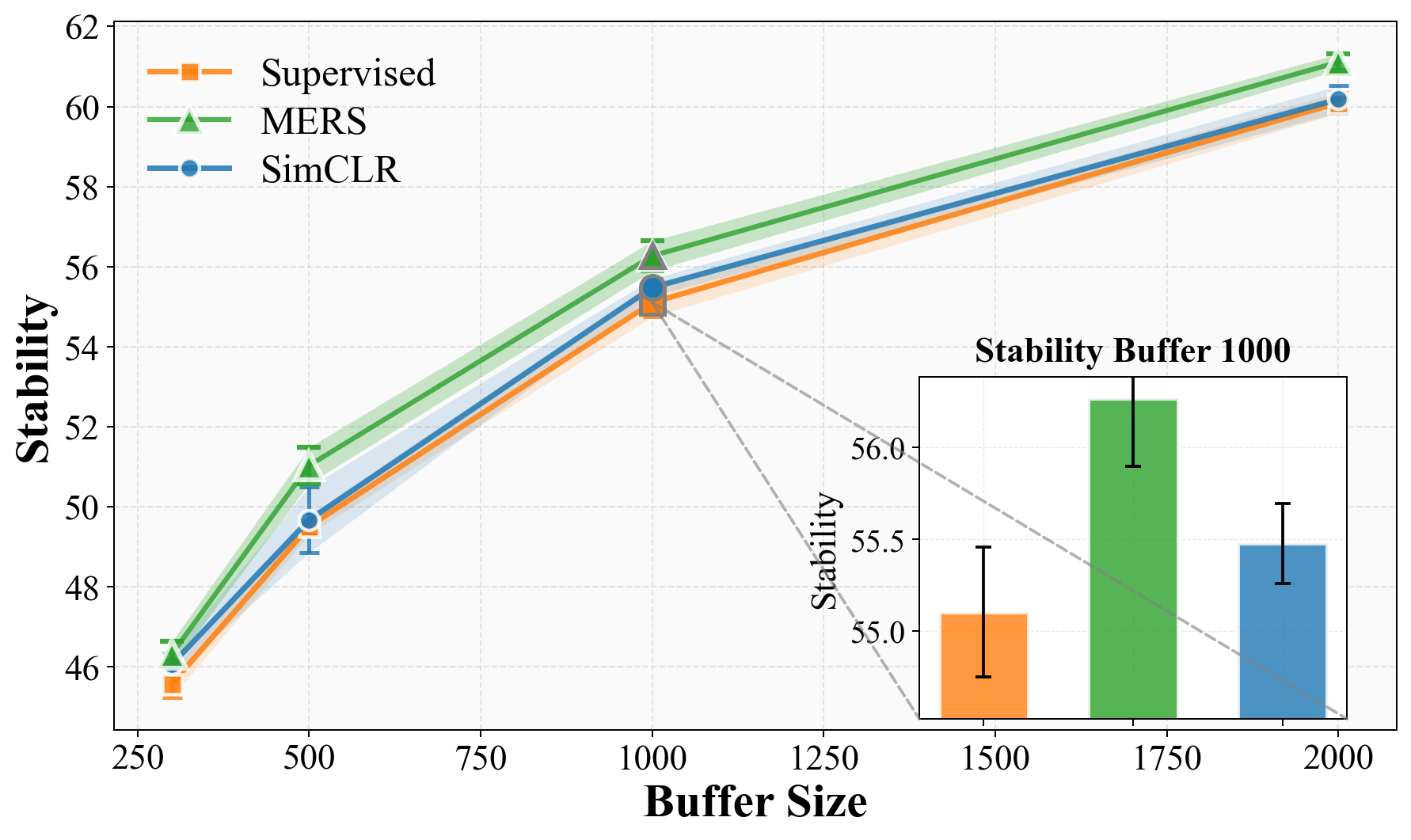}
        \caption{Stability}
    \end{subfigure}
    \hfill
    \begin{subfigure}[t]{0.48\textwidth}
        \centering
        \includegraphics[width=\linewidth]{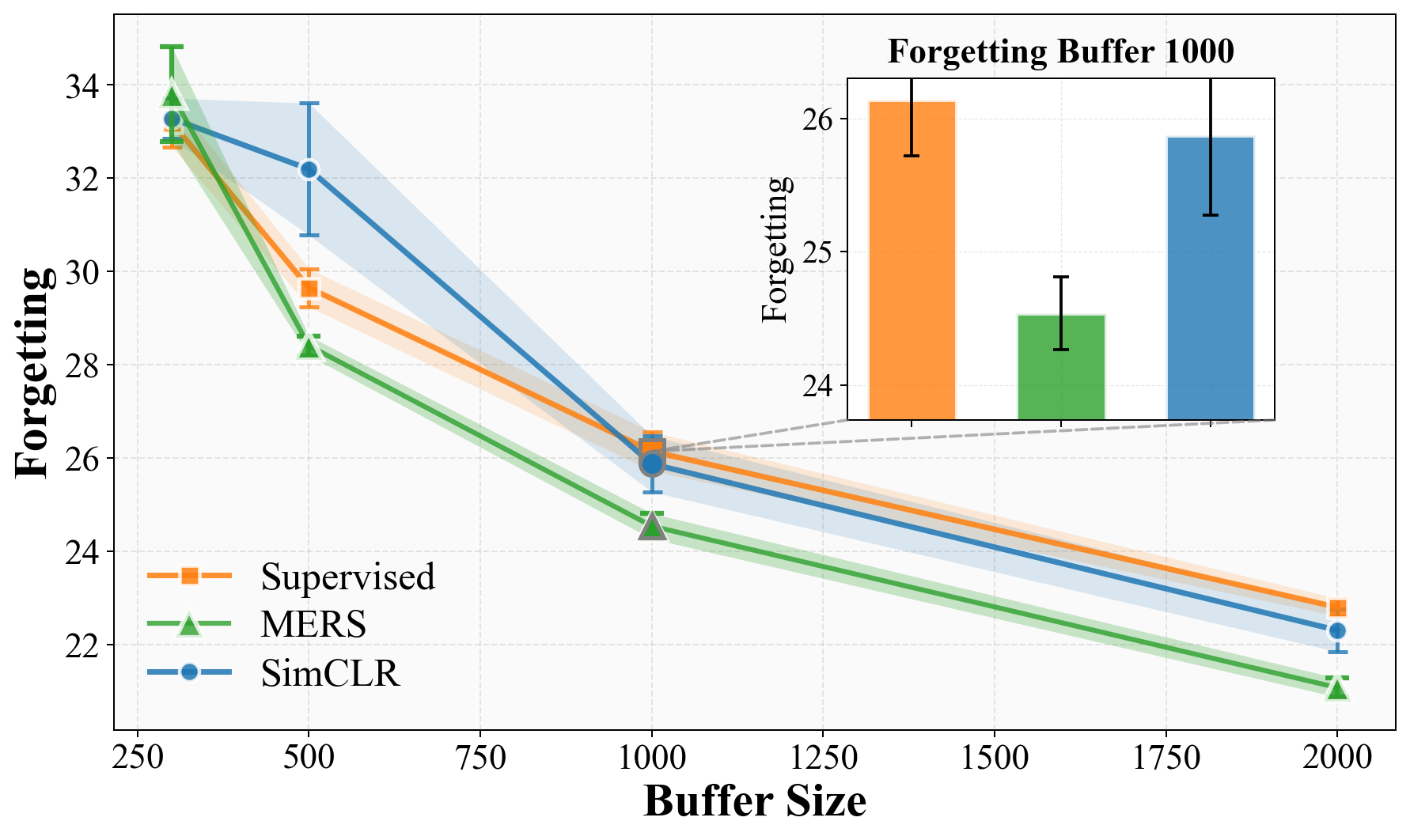}
        \caption{Forgetting}
    \end{subfigure}

    \caption{\textbf{\method \probcover}: Stability and forgetting of ER-ACE-STAR with \method as a function of $|M|$ on Split CIFAR-100.}\label{fig:stability_forgetting_er_ace_star_cifar100_probcover}

\end{minipage}
\end{figure*}
\begin{figure*}
\begin{minipage}[t]{0.99\textwidth}
    \centering
    \begin{subfigure}[t]{0.48\textwidth}
        \centering
        \includegraphics[width=\linewidth]{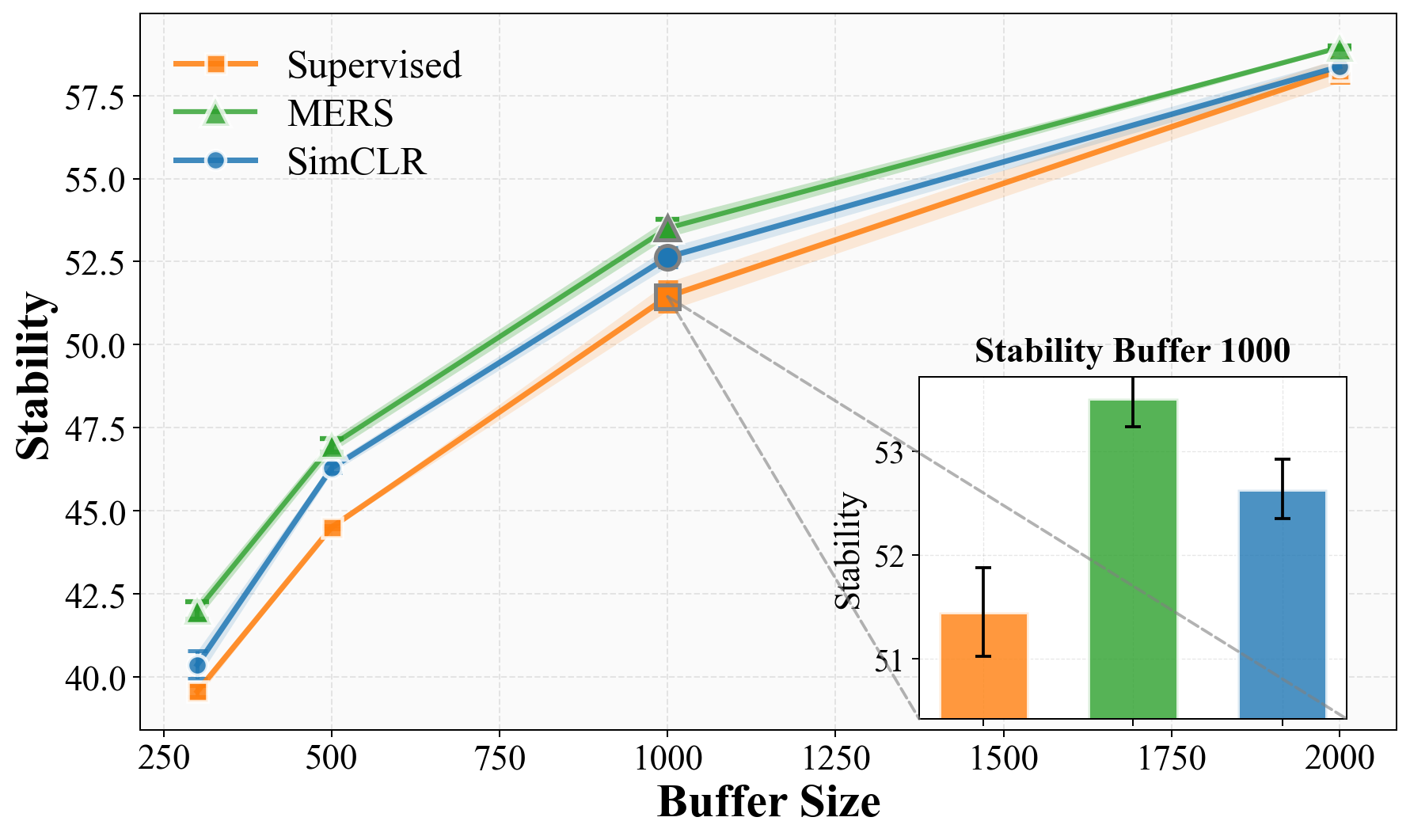}
        \caption{Stability}
    \end{subfigure}
    \hfill
    \begin{subfigure}[t]{0.48\textwidth}
        \centering
        \includegraphics[width=\linewidth]{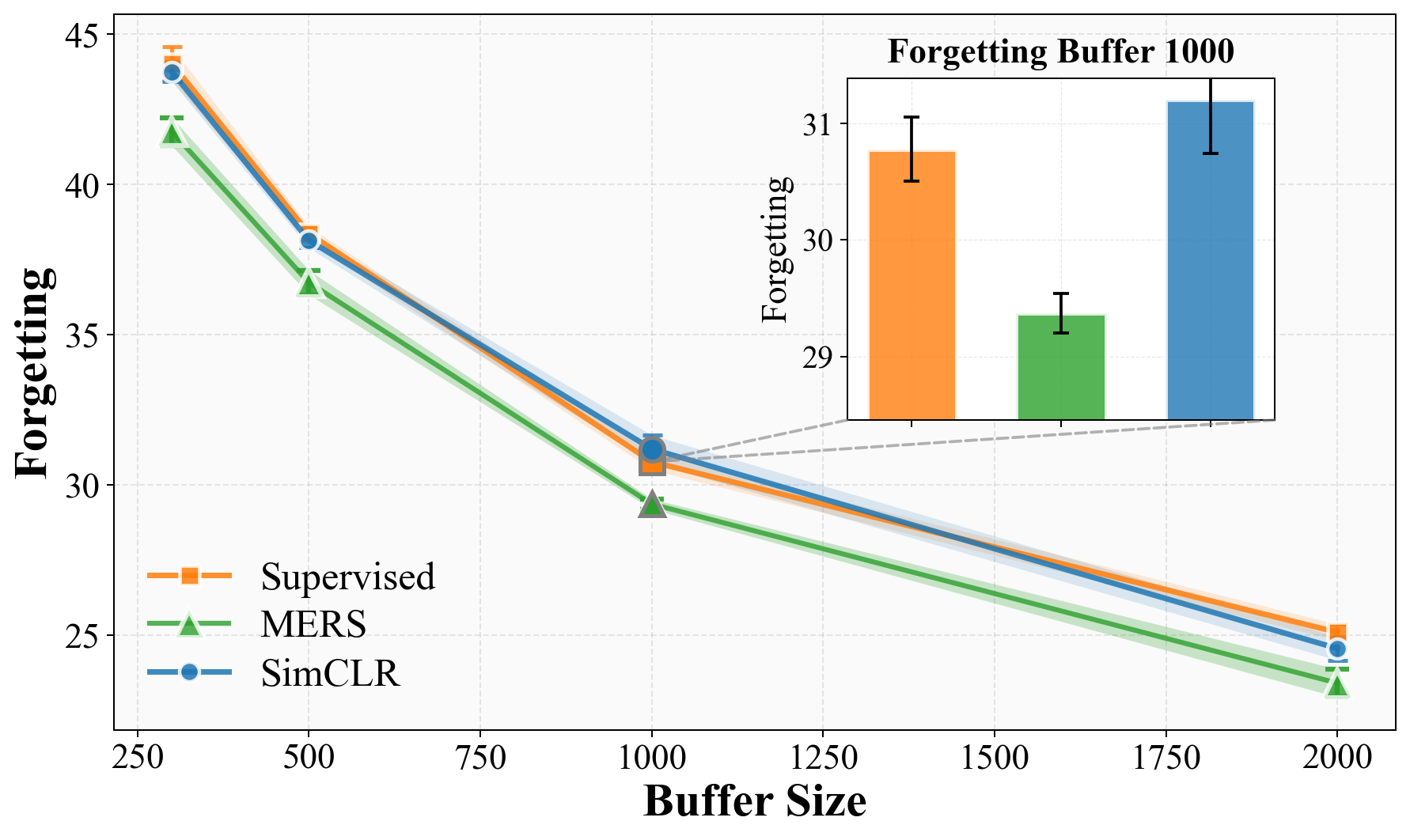}
        \caption{Forgetting}
    \end{subfigure}

    \caption{\textbf{\method \probcover}: Stability and forgetting of ER-ACE with \method as a function of $|M|$ on Split CIFAR-100.}\label{fig:stability_forgetting_er_ace_cifar100_probcover}

\end{minipage}
\end{figure*}
\begin{figure*}
\begin{minipage}[t]{0.99\textwidth}
    \centering
    \begin{subfigure}[t]{0.48\textwidth}
        \centering
        \includegraphics[width=\linewidth]{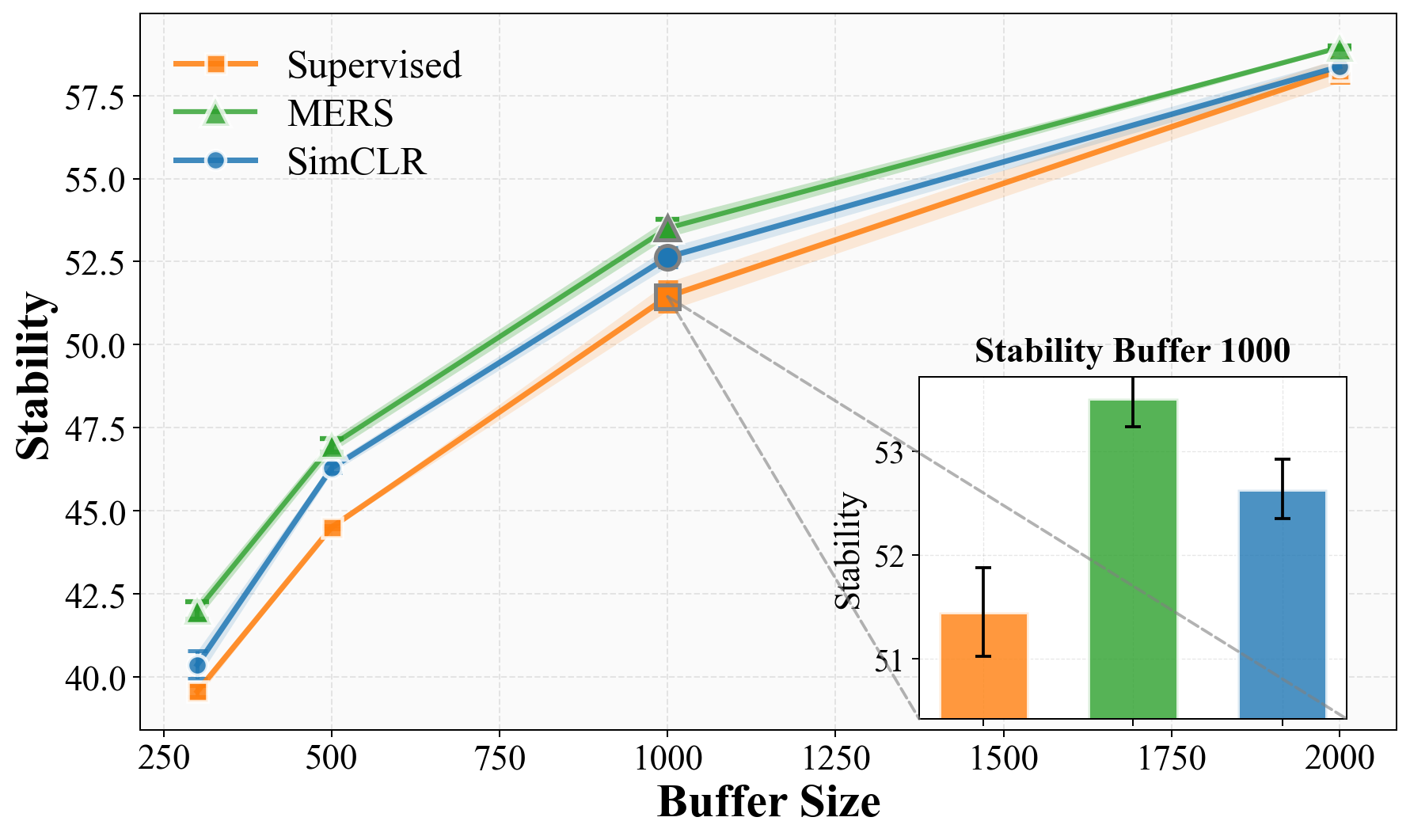}
        \caption{Stability}
    \end{subfigure}
    \hfill
    \begin{subfigure}[t]{0.48\textwidth}
        \centering
        \includegraphics[width=\linewidth]{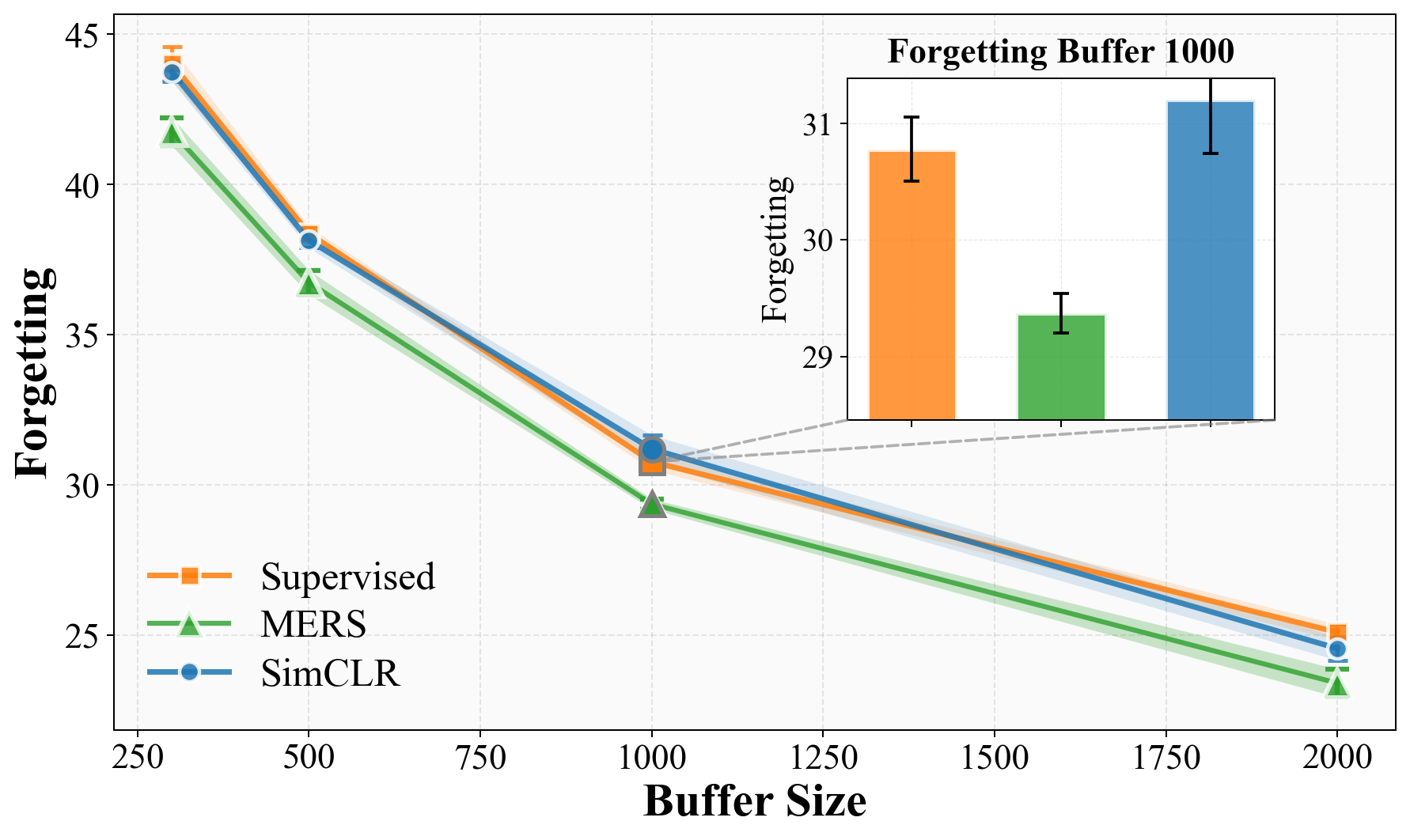}
        \caption{Forgetting}
    \end{subfigure}

    \caption{\textbf{\method \probcover}: Stability and forgetting of ER with \method as a function of $|M|$ on Split CIFAR-100.}\label{fig:stability_forgetting_er_cifar100_probcover}

\end{minipage}
\end{figure*}
Following the selection stability and forgetting analysis presented in Section~\ref{sec:selection_stability}, we analyze selection stability and forgetting for \method\ \probcover under varying memory budgets. Results for ER-ACE-STAR, ER-ACE, and ER on Split CIFAR-100 are shown in Figs.~\ref{fig:stability_forgetting_er_ace_star_cifar100_probcover}–\ref{fig:stability_forgetting_er_cifar100_probcover}.

Consistent with the trends observed for Max-Herding, \probcover based on self-supervised SimCLR embeddings exhibits higher selection stability and lower forgetting compared to \probcover using supervised embeddings.
This indicates that self-supervised representations lead to more consistent buffer composition over time, independent of the specific coverage objective.
While \probcover remains less stable than the corresponding Max-Herding variant, it improves over supervised embedding-based selection and reinforces the evidence presented in the main text regarding the stabilizing effect of self-supervised embeddings.
\section{Submodularity and greedy approximation for Multiple Embedding coverage}
\label{app:submodular}
\begin{proposition}
The function $F : 2^X \to \mathbb{R}_{\ge 0}$ defined in Definition~\ref{def:weighted-coverage}
is non-negative, normalized, monotone, and submodular.
\end{proposition}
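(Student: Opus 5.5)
The plan is to decompose $F$ into a nonnegative combination of single-embedding coverage functions and check each property for one such summand. Write
\[
F(L)=\sum_{m=1}^M \alpha_m F_m(L),\qquad F_m(L)=\Bigl|\bigcup_{x_i\in L} B^{(m)}_{\delta_m}(x_i)\Bigr|.
\]
It then suffices to show that each $F_m$ is non-negative, normalized, monotone and submodular. All four properties are preserved under nonnegative linear combinations, because the defining inequalities are linear in the function and $\alpha_m\ge 0$.

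Three of the properties are immediate for a fixed $m$. $F_m$ is the cardinality of a set, so it is non-negative. The union over the empty family is empty, so $F_m(\emptyset)=0$. For $A\subseteq B$ the union over $A$ is contained in the union over $B$, so $F_m(A)\le F_m(B)$, which gives monotonicity.

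For submodularity I will use the diminishing-returns form. Write $C_m(L)=\bigcup_{x_i\in L}B^{(m)}_{\delta_m}(x_i)$. For $A\subseteq B\subseteq X$ and $x\in X\setminus B$,
\[
F_m(A\cup\{x\})-F_m(A)=\bigl|B^{(m)}_{\delta_m}(x)\setminus C_m(A)\bigr|.
\]
Since $C_m(A)\subseteq C_m(B)$, we have $B^{(m)}_{\delta_m}(x)\setminus C_m(B)\subseteq B^{(m)}_{\delta_m}(x)\setminus C_m(A)$. Therefore the marginal gain at $A$ is at least the marginal gain at $B$. Summing these inequalities with weights $\alpha_m$ gives the same inequality for $F$.

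Alternatively, one can work directly with the ground-set encoding of Eq.~\eqref{eq:weighted-coverage}. There $F(L)=w\bigl(\bigcup_{x_i\in L}S_i\bigr)$, with $w$ the modular weight on $U=\biguplus_m U_m$ that assigns $\alpha_m$ to elements of $U_m$. This exhibits $F$ as a weighted coverage function, and the same set-inclusion argument applies verbatim.

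No step here is a real obstacle. The only point needing care is that the weights satisfy $\alpha_m\ge 0$; the ratio of medians of positive density estimates in Eq.~\eqref{eq:weight} guarantees this. Without it, monotonicity and submodularity could fail.
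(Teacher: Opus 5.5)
Your proof is correct and is essentially the paper's argument. The paper writes $F(L)=\sum_{u\in U} w_u\,\mathbf{1}[L\cap C_u\neq\emptyset]$ over the disjoint-union ground set, which is your ``alternatively'' paragraph, and checks the four properties directly; submodularity there comes from the same inclusion of newly covered elements, $\{u: x\in C_u,\ B\cap C_u=\emptyset\}\subseteq\{u: x\in C_u,\ A\cap C_u=\emptyset\}$. Your main route differs only in splitting $F$ by embedding and using closure under nonnegative combinations, which lets you work with the balls $B^{(m)}_{\delta_m}(x_i)$ directly instead of the sets $C_u$ that the paper never writes out.
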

\begin{proof}
By Definition~\ref{def:weighted-coverage}, there exists a finite index set $U$,
non-negative weights $\{w_u\}_{u \in U}$, and subsets $\{C_u \subseteq X\}_{u \in U}$
such that for every $L \subseteq X$,
\begin{equation*}
  F(L)
  \;=\;
  \sum_{u \in U} w_u \, \mathbf{1}\bigl[\,L \cap C_u \neq \emptyset\,\bigr],
\end{equation*}
where $\mathbf{1}[\cdot]$ is the indicator function.

\paragraph{Non-negativity and normalization.}
Since all weights $w_u$ are non-negative and indicators are in $\{0,1\}$, we have
$F(L) \ge 0$ for all $L \subseteq X$. For $L = \emptyset$ we have
$\emptyset \cap C_u = \emptyset$ for every $u \in U$, hence all indicators are zero and
$F(\emptyset) = 0$. Thus $F$ is non-negative and normalized.

\paragraph{Monotonicity.}
Let $A \subseteq B \subseteq X$. If an index $u \in U$ is covered by $A$, i.e.,
$A \cap C_u \neq \emptyset$, then since $A \subseteq B$ we also have
$B \cap C_u \neq \emptyset$. Therefore,
\[
  \{u \in U : A \cap C_u \neq \emptyset\}
  \;\subseteq\;
  \{u \in U : B \cap C_u \neq \emptyset\},
\]
and by non-negativity of the weights,
\[
  F(A)
  \;=\;
  \sum_{u : A \cap C_u \neq \emptyset} w_u
  \;\le\;
  \sum_{u : B \cap C_u \neq \emptyset} w_u
  \;=\;
  F(B).
\]
Thus $F$ is monotone.

\paragraph{Submodularity.}
To show submodularity, let $A \subseteq B \subseteq X$ and $x \in X \setminus B$.
Consider the marginal gains
\[
  \Delta_x(A) := F(A \cup \{x\}) - F(A),
  \]
\[  \Delta_x(B) := F(B \cup \{x\}) - F(B).
\]
By the definition of $F$,
\begin{equation*}
\begin{aligned}
  \Delta_x(A)
  &= \sum_{u \in U} w_u
     \Bigl(
       \mathbf{1}\bigl[(A \cup \{x\}) \cap C_u \neq \emptyset\bigr] \\
 &\quad - \mathbf{1}\bigl[A \cap C_u \neq \emptyset\bigr]
     \Bigr) \\
  &= \sum_{u \in U : x \in C_u,\; A \cap C_u = \emptyset} w_u.
\end{aligned}
\end{equation*}
Indeed, $u$ contributes to the marginal gain for $A$ if and only if $u$ was not
covered by $A$ (so $A \cap C_u = \emptyset$) but becomes covered after adding $x$,
which happens precisely when $x \in C_u$.

Analogously,
\[
  \Delta_x(B)
  = \sum_{u \in U : x \in C_u,\; B \cap C_u = \emptyset} w_u.
\]
Since $A \subseteq B$, we have
\begin{equation*}
\begin{aligned}
\
  &\{u \in U : x \in C_u,\; B \cap C_u = \emptyset\}
  \;\subseteq\; \\
  &\{u \in U : x \in C_u,\; A \cap C_u = \emptyset\},
\
\end{aligned}
\end{equation*}
and all weights are non-negative. Therefore
\[
  \Delta_x(B)
  \;\le\;
  \Delta_x(A),
\]
which is exactly the submodularity inequality
\[
  F(A \cup \{x\}) - F(A)
  \;\ge\;
  F(B \cup \{x\}) - F(B).
\]

\paragraph{Greedy approximation guarantee.}
Since $F$ is non-negative, normalized, monotone, and submodular, the greedy
algorithm yields a $(1 - 1/e)$-approximation under a cardinality constraint
\cite{nemhauser1978analysis}, i.e.,
\[
  F(L_b) \;\ge\; (1 - 1/e)\,F(L^*).
\]
\end{proof}

\section{Time and Space complexity of \method}
We analyse the computational cost under the standard setting in which the
selection strategy is invoked \emph{once per training episode}.
Let $n$ be the number of examples from the current episode that belong to
class~$c$, $M$ the number of distinct embedding spaces, $d$ the dimensionality
of each embedding, and $b$ the class-wise memory-buffer budget (\ the
number of items that \buffer\ may store for class~$c$).
\paragraph{Self-supervised stage.}During every episode, \method{} is called exactly once. Running SimCLR for $E_{\text{ssl}}$ epochs on $A=2$ views of the $n$ episode images costs  
$$T_{\text{SimCLR}} = O(E_{\text{ssl}}\,A\,n\,P)$$
with $P$ trainable parameters.  
Self-supervised training consumes $$S_{\text{SimCLR}} = O(P + s\,f)$$ space model parameters $P$ plus the current batch’s $s$ activations of size $f$, and the batch size $s$.
The SimCLR weights are discarded after each episode, persistent memory
is dominated by the replay images.
\subsubsection{\method \probcover}  The algorithm consists of two stages:

\paragraph{(i) Ball-graph construction.}
For every embedding $m\in\{1,\dots,M\}$ we compute all pairwise cosine
distances in $\mathbb{R}^{d}$ to obtain the $\delta$-neighbourhoods
$B^{(m)}_{\!\delta}(x)$.  
This step costs $T_{\text{graph}}=O\!\bigl(M\,n^{2}\,\max\{d,b\}\bigr)$
and stores 
$S_{\text{graph}}=O(M\,n^{2})$
adjacency edges.

\paragraph{(ii) Greedy covering.}
Across $b$ iterations we repeatedly pick the vertex that covers the largest
number of still-uncovered neighbours.  
The work per iteration yields
$T_{\text{cover}}=O(|E|+b\,n)\subseteq O(M\,n^{2}+b\,n).$

\paragraph{Overall complexity.}

$$
\begin{aligned}
T_{\text{MERS-\probcover}} &= O\!\bigl(M n^{2} \max{d,b}\bigr),\\[2pt]
S_{\text{MERS-\probcover}} &= O(M n^{2}),
\end{aligned}
$$

The original \emph{\probcover} analysis~\cite{yehuda2022active} reports a
running time of $O\!\bigl(n^{2}\max\{d,b\}\bigr)$.  Our derivation
shows that the Multiple Embedding extension, \textsc{\method–\probcover}, retains the
same quadratic dependence on~$n$ and on $\max{d,b}$, differing only
by the multiplicative factor~$M$ (which equals~2 in all of our experiments).
\paragraph{(ii) Greedy \maxherding selection.}
\paragraph{(i) Integrated-kernel construction.}
We assemble the Gram matrix
$$K_{ij}=k(x_i,x_j)=\sum_{m=1}^{M}\alpha_m\,k_m\!\bigl(x_i^{(m)},x_j^{(m)}\bigr).$$
Forming its $\tfrac{1}{2}n(n-1)$ entries costs
$$T_{\text{kernel}}=O(mn^{2}d),\qquad S_{\text{kernel}}=O(n^{2}).$$

\paragraph{(ii) Greedy selection.}
Each of the $b$ iterations scans all candidates ($\le n$) and exploits the
pre-computed kernel:
$$T_{\text{MaxHerding}}=O(b\,n^{2}),\qquad S_{\text{MaxHerding}}=O(n).$$

\paragraph{Overall complexity.}
$$
\begin{aligned}
T_{\text{MERS--\maxherding}} &= O\!\bigl(mn^{2}(d+b)\bigr),\\[2pt]
S_{\text{MERS--\maxherding}} &= O\!\bigl(n^{2}+n d\bigr).
\end{aligned}
$$

\section{Detailed Theoretical Analysis}
\label{sec:Detailed_Theoretical_Analysis}

In this section we present a theoretical analysis that motivates sampling from a mixture of supervised and self-supervised representations. While the benefits of supervised embeddings are clear - they capture class-discriminative structure, the goal here is to formalize the complementary value of self-supervised representations and explain when they can improve robustness to future classes. Specifically, in Section~\ref{sec:benefits} we show that sampling from SSL embeddings is likely to yield a tighter (smaller) bound on the train-test risk gap than sampling from SL embedding.

To this end we make the following assumptions:\begin{enumerate}
    \item \label{ass:1}
    \textbf{Geometry under supervision vs.\ self-supervision.}
    Supervised learning (SL) tends to concentrate representation variability in a relatively low-dimensional, class-discriminative subspace, whereas self-supervised learning (SSL) tends to preserve a broader set of non-label factors that are stable across views and yields representations that are universally good for images (or domain objects), regardless of class label.
    
    \item \label{ass:2}
    \textbf{Matched global scale (equal compression).}
    When comparing SL and SSL for coverage-based selection, we normalize the embeddings so that both have the same global scale/compression level. 
\end{enumerate}

Assumption~\ref{ass:1} is motivated by standard information-theoretic and geometric perspectives: (i) supervised training encourages \emph{label-sufficient} compression of representations \citep{tishby2000information,alemi2017deep}; (ii) contrastive self-supervision can be viewed as maximizing agreement (shared information) between augmented views while simultaneously promoting spread/uniformity (or decorrelation) of representations \citep{oord2018representation,wang2020understanding}.

Assumption~\ref{ass:2} follows from the scale handling in our selection objectives. Both \probcover\ and \maxherding\ include an explicit length-scale hyper-parameter ($\delta$ and $\sigma$, respectively) that is chosen so as to make the procedure effectively scale-invariant. Therefore, when comparing the selected sets under two different embeddings, we first align their global scale to ensure a fair comparison and to prevent trivial differences caused by an overall rescaling.

For the purposes of the following analysis, we assume there exists a feature space $\R^n$ in which the class-conditional distribution of each class, past and future, can be approximated by a Gaussian $N(\mu,\Sigma)$ in $\R^n$ with $\Sigma$ positive-definite. We interpret this space as emphasizing class-relevant factors of variation, abstracting away label-irrelevant features due to such factors as illumination, pose, or background.

\subsection{Selective feature compression increases class conditional divergence}
\label{sec:selective}

In this section we show that the probabilistic distortion induced by an anisotropic embedding is typically larger, as measured by KL divergence, than the distortion induced by an isotropic embedding, or by an embedding that preserves the isotropy of the original distribution.

Consider a single class from the current episode. Without loss of generality, assume its mean is at the origin and its class-conditional distribution in the reference feature space is $$G_o := \cN(0,\Sigma).$$

\paragraph{Using Assumption~\ref{ass:1}.}
Our method \method\ selects a representative set for this class using an alternative embedding, which induces a (potentially) different class-conditional distribution in $\R^n$. By Assumption~\ref{ass:1}, we model the class-conditional distribution under SSL and SL as follows:
\paragraph{SSL.} As idealized proxies for a representation that preserves broad, view-stable factors and avoids label-induced anisotropy, we consider two SSL-induced class-conditional models:
\begin{equation*}
\begin{aligned}
G^{(1)}_{\mathrm{SSL}}
&:= \mathcal{N}(0, \sigma \Sigma),
\qquad \sigma \in (0,1), \\
G^{(2)}_{\mathrm{SSL}}
&:= \mathcal{N}(0, \sigma I_n).
\end{aligned}
\end{equation*}
    The first model, $G^{(1)}_{\mathrm{SSL}}$, corresponds to the idealized case in which SSL recovers the \emph{true} class geometry up to a global rescaling; while optimistic, it yields cleaner expressions and serves as a convenient analytic baseline. The second model, $G^{(2)}_{\mathrm{SSL}}$, represents an isotropic (whitened) geometry - a more faithful proxy for the ``uniformity'' pressure in contrastive objectives, which encourages representations to spread approximately uniformly on (or near) a sphere \citep{wang2020understanding}.
        
    \paragraph{SL.} We model label-driven selective compression by an anisotropic rescaling of the covariance. For some $m\in\{1,\dots,n-1\}$,
    \begin{equation*}
    \begin{split}
      &G_{\mathrm{SL}} := \cN\!\bigl(0,\Sigma^{1/2}D\Sigma^{1/2}\bigr),\\&D=\diag(\underbrace{\alpha,\ldots,\alpha}_{m\ \text{times}}, \underbrace{\beta,\ldots,\beta}_{n-m\ \text{times}}), \quad \alpha>\beta>0.
    \end{split}
    \end{equation*}
    Here, the $m$ directions scaled by $\alpha$ represent class-discriminative variability retained by supervision, while the remaining $n-m$ directions are compressed by $\beta$.

\paragraph{Enforcing Assumption~\ref{ass:2}.}

We match the \emph{volume} of the covariance ellipsoids, i.e., the Mahalanobis level sets
\begin{equation*}
E_{\Sigma'} \;:=\;\{x\in\R^n:\ x^\top(\Sigma')^{-1}x \le 1\}.
\end{equation*}
Since $\Vol(E_{\Sigma'})=\Vol(B_1)\sqrt{\det(\Sigma')}$, where $B_1$ is the unit ball in $\R^n$, equal volume is equivalent to matching determinants:
\begin{equation*}
\Vol(E_{\Sigma_1})=\Vol(E_{\Sigma_2}) \quad\Longleftrightarrow\quad \det(\Sigma_1)=\det(\Sigma_2).
\end{equation*}
Thus, this constraint is equivalent to
\begin{equation*}
\label{eq:equal-vol}
\begin{split}
i\! = \!\! 1\!\!:~~~  &\detm(\Sigma^{1/2}D\Sigma^{1/2})=\detm(\sigma\Sigma) ~\Longleftrightarrow~ \\&\detm(D)=\sigma^n ~\Longleftrightarrow~ \alpha^m\beta^{n-m}=\sigma^n. \\
i\! = \!\! 2\!\!:~~~  &\detm(\Sigma^{1/2}D\Sigma^{1/2})=\detm(\sigma I_n) ~\Longleftrightarrow~ \\&\detm(D)\cdot\detm(\Sigma)=\sigma^n ~\Longleftrightarrow~ \\&\alpha^m\beta^{n-m}=\frac{\sigma^n}{\detm(\Sigma)}.
\end{split}
\end{equation*}

\begin{lemma}[KL-divergence]
\label{lem:kl-closed-forms}
The KL-divergence between the true class conditional distribution $G_o$ and the SSL-induced distribution can be expressed as follows:
\begin{equation*}
\begin{aligned}
\label{eq:kl_i=1_SSL}
i\! = \!\! 1\!\!:\quad &\KL(G_o\|G^{(1)}_{\mathrm{SSL}})=\frac12\Big(\frac{n}{\sigma}-n+n\ln\sigma\Big),\quad 
\end{aligned}
\end{equation*}
\begin{equation*}
\label{eq:kl_i=2_SSL}
\begin{aligned}
i=2:\quad \KL(G_o\|G^{(2)}_{\mathrm{SSL}}) = \frac{1}{2} \biggl( &\frac{1}{\sigma}\text{tr}(\Sigma) - n \\ 
+ n\ln\sigma - \ln\det(\Sigma) \biggr);
\end{aligned}
\end{equation*}
The KL-divergence between $G_o$ and the SL-induced distribution is:

{\small 
\begin{align}
\label{eq:kl_i=1_SL}
i=1:\quad &\KL(G_o\|G_{\text{SL}}) = \frac{1}{2}\Big(\frac{m}{\alpha}+\frac{n-m}{\beta}-n+n\ln\sigma\Big), \\ 
\label{eq:kl_i=2_SL}
i=2:\quad &\KL(G_o\|G_{\text{SL}}) = \frac{1}{2}\bigg(\frac{m}{\alpha}+\frac{n-m}{\beta}-n+n\ln\sigma - \nonumber \\ 
&\quad \ln\det(\Sigma)\bigg).
\end{align}
}
\end{lemma}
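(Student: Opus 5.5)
The plan is to derive all four expressions from the standard closed form for the KL divergence between two centered Gaussians in $\R^n$. For positive-definite $\Sigma_0,\Sigma_1$,
\begin{equation*}
\KL\bigl(\cN(0,\Sigma_0)\,\|\,\cN(0,\Sigma_1)\bigr)=\frac12\Bigl(\tr(\Sigma_1^{-1}\Sigma_0)-n+\ln\frac{\det\Sigma_1}{\det\Sigma_0}\Bigr).
\end{equation*}
The mean term vanishes because every distribution in the statement is centered at the origin, as fixed in Section~\ref{sec:selective}. I would quote this formula as a standard fact, or derive it in one line from $\E_{G_o}[x^\top A x]=\tr(A\Sigma)$ applied to the log-density ratio. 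I then apply it with $\Sigma_0=\Sigma$ and the four choices of $\Sigma_1$ in turn.

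For the SSL proxies the computation is direct. With $\Sigma_1=\sigma\Sigma$ we have $\Sigma_1^{-1}\Sigma=\sigma^{-1}I_n$, so the trace is $n/\sigma$ and the log-determinant ratio is $\ln(\sigma^n\det\Sigma/\det\Sigma)=n\ln\sigma$. This gives the $i=1$ formula. With $\Sigma_1=\sigma I_n$ the trace is $\tr(\Sigma)/\sigma$ and the log-determinant ratio is $n\ln\sigma-\ln\det\Sigma$. This gives the $i=2$ formula.

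For the SL proxy, $\Sigma_1=\Sigma^{1/2}D\Sigma^{1/2}$, so $\Sigma_1^{-1}=\Sigma^{-1/2}D^{-1}\Sigma^{-1/2}$. By cyclicity of the trace,
\begin{equation*}
\tr(\Sigma_1^{-1}\Sigma)=\tr(D^{-1}\Sigma^{-1/2}\Sigma\,\Sigma^{-1/2})=\tr(D^{-1})=\frac{m}{\alpha}+\frac{n-m}{\beta}.
\end{equation*}
Multiplicativity of the determinant gives $\det\Sigma_1=\det(D)\det(\Sigma)$, so the log-determinant ratio is $\ln\det D=m\ln\alpha+(n-m)\ln\beta$. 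The remaining step is to substitute the equal-volume constraint of Assumption~\ref{ass:2}. Under the $i=1$ matching, $\alpha^m\beta^{n-m}=\sigma^n$, so $\ln\det D=n\ln\sigma$, which yields~\eqref{eq:kl_i=1_SL}. Under the $i=2$ matching, $\alpha^m\beta^{n-m}=\sigma^n/\det\Sigma$, so $\ln\det D=n\ln\sigma-\ln\det\Sigma$, which yields~\eqref{eq:kl_i=2_SL}.

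There is no real analytic obstacle; the only point needing care is interpretive. The index $i$ in the SL formulas does not label a different SL model. It records which volume-matching constraint (against $G^{(1)}_{\mathrm{SSL}}$ or against $G^{(2)}_{\mathrm{SSL}}$) has been used to eliminate $\alpha,\beta$ from the log-determinant. I would state this explicitly, so that each SL expression is clearly paired with the SSL expression it is compared to in the subsequent inequalities. I would also note that $\Sigma\succ 0$ and $\alpha,\beta,\sigma>0$ make all inverses and logarithms well defined.
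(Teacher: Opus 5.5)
Your proposal is correct and follows the same route as the paper, which simply invokes the zero-mean Gaussian KL formula together with the equal-volume constraints. You additionally spell out the trace computation $\tr(D^{-1})=\frac{m}{\alpha}+\frac{n-m}{\beta}$, the determinant factorization, and the observation that the index $i$ in the SL formulas records which volume constraint was used; the paper leaves these implicit.
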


\begin{proof}
These identities follow from the standard KL-divergence formula for zero-mean Gaussians with positive definite covariance matrices:
\begin{equation*}
\resizebox{1\columnwidth}{!}{%

$\KL\big(\cN(0,\Sigma_0)\,\|\,\cN(0,\Sigma_1)\big) =\frac12\left(\tr(\Sigma_1^{-1}\Sigma_0)-n+\ln\frac{\detm(\Sigma_1)}{\detm(\Sigma_0)}\right),$
}
\end{equation*}
and the equal-volume constraints in (\ref{eq:equal-vol}).
\end{proof}

\begin{proposition}
\label{prop:kl-ordering-amgm}
For $i=1$, anisotropy increases $\KL(G_o\|\,\cdot\,)$ under equal volume:
\begin{equation*}
\resizebox{1\columnwidth}{!}{%
$\KL(G_o\|G_{\mathrm{SL}})\;\ge\;\KL(G_o\|G^{(1)}_{\mathrm{SSL}}), \quad\frac{\KL(G_o\|G_{\mathrm{SL}})}{\KL(G_o\|G^{(1)}_{\mathrm{SSL}})}\xrightarrow[\beta\to 0]{} \infty,$}
\end{equation*}
with equality in the first expression iff $\alpha=\beta=\sigma$.
\end{proposition}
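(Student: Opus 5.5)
Using the closed forms of Lemma~\ref{lem:kl-closed-forms} for $i=1$, the difference of the two divergences is
\[
\KL(G_o\|G_{\mathrm{SL}})-\KL(G_o\|G^{(1)}_{\mathrm{SSL}})=\frac12\Big(\frac{m}{\alpha}+\frac{n-m}{\beta}-\frac{n}{\sigma}\Big),
\]
because the terms $-n+n\ln\sigma$ cancel. The first claim therefore reduces to the purely algebraic inequality
\[
\frac{m}{\alpha}+\frac{n-m}{\beta}\;\ge\;\frac{n}{\sigma}
\qquad\text{subject to the equal-volume constraint } \alpha^m\beta^{n-m}=\sigma^n .
\]
I will prove it by the weighted AM--GM inequality applied to the $n$ numbers $1/\alpha$ ($m$ copies) and $1/\beta$ ($n-m$ copies). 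Their arithmetic mean is $\frac1n\big(\frac{m}{\alpha}+\frac{n-m}{\beta}\big)$. Their geometric mean is $(\alpha^m\beta^{n-m})^{-1/n}=(\sigma^n)^{-1/n}=1/\sigma$. Multiplying AM $\ge$ GM by $n$ gives the desired inequality.

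Equality in AM--GM holds iff all the numbers coincide, i.e.\ $\alpha=\beta$. The constraint then forces $\alpha^n=\sigma^n$, hence $\alpha=\beta=\sigma$. Conversely, $\alpha=\beta=\sigma$ clearly gives equality. One caveat: the model assumes $\alpha>\beta$, so within it the inequality is strict. The ``iff'' is then to be read for the boundary case $\alpha=\beta$, and I will state this explicitly.

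For the limit, I fix $\sigma\in(0,1)$, $n$ and $m\in\{1,\dots,n-1\}$, and let $\beta\to0$. The constraint then forces $\alpha=\sigma^{n/m}\beta^{-(n-m)/m}\to\infty$, which is consistent with $\alpha>\beta$. The denominator $\KL(G_o\|G^{(1)}_{\mathrm{SSL}})=\frac n2\big(\frac1\sigma-1+\ln\sigma\big)$ does not depend on $\beta$. It is strictly positive, since $g(\sigma)=1/\sigma-1+\ln\sigma$ satisfies $g(1)=0$ and $g'(\sigma)=(\sigma-1)/\sigma^2<0$ on $(0,1)$. In the numerator, $\frac{n-m}{\beta}\to\infty$ because $n-m\ge1$, while $m/\alpha\to0$ and the remaining terms are constant. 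Hence the ratio tends to $\infty$.

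The only real subtleties are these: making sure the denominator is positive and constant, so the ratio is well defined and the limit is meaningful, and keeping track of the fact that $\alpha$ must vary along the constraint as $\beta\to0$. The rest is the one-line AM--GM argument.
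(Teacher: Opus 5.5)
Your proposal is correct and follows the paper's proof essentially step for step. It uses the same closed-form difference $\tfrac12\bigl(\tfrac{m}{\alpha}+\tfrac{n-m}{\beta}-\tfrac{n}{\sigma}\bigr)$, weighted AM--GM combined with the equal-volume constraint $\alpha^m\beta^{n-m}=\sigma^n$, and $\alpha\to\infty$ along the constraint for the limit; your extra checks fill in details the paper leaves implicit, namely the equality case (noting that $\alpha>\beta$ makes the inequality strict within the model) and the strict positivity of the $\beta$-independent denominator for $\sigma\in(0,1)$, which the paper only says ``remains finite''.
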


\begin{proof}
By Lemma~\ref{lem:kl-closed-forms}, together with \eqref{eq:kl_i=1_SSL} and \eqref{eq:kl_i=1_SL},
\begin{equation*}
\KL(G_o\|G_{\mathrm{SL}})-\KL(G_o\|G^{(1)}_{\mathrm{SSL}}) =\frac12\Big(\frac{m}{\alpha}+\frac{n-m}{\beta}-\frac{n}{\sigma}\Big).
\end{equation*}
To show that this expression is nonnegative, we apply the weighted AM--GM inequality to $\frac{1}{\alpha}$ and $\frac{1}{\beta}$ with weights $\frac{m}{n}$ and $\frac{n-m}{n}$:
\begin{equation*}
\begin{aligned}
\frac{m}{n}\frac{1}{\alpha}+\frac{n-m}{n}\frac{1}{\beta} \;\ge\; \Big(\frac{1}{\alpha}\Big)^{m/n}\Big(\frac{1}{\beta}\Big)^{(n-m)/n} \\=\frac{1}{\alpha^{m/n}\beta^{(n-m)/n}} =\frac{1}{\sigma}, 
\end{aligned}
\end{equation*}
where the last equality uses the equal-volume constraint
$\alpha^m\beta^{n-m}=\sigma^n$ in \eqref{eq:equal-vol}.

To see the asymptotic result, note that as $\beta\to 0$ under $\alpha^m\beta^{n-m}=\sigma^n$, we have $\alpha\to\infty$ and $\frac{m}{\alpha}\to 0$ while $\frac{n-m}{\beta}\to\infty$, which implies that $\KL(G_o\|G_{\mathrm{SL}})\to\infty$ whereas $\KL(G_o\|G^{(1)}_{\mathrm{SSL}})$ remains finite.
\end{proof}

\begin{proposition}
\label{prop:smallbeta-dominance}
For $i=2$, there exists $\beta_0>0$ such that 
\begin{equation*}
\begin{aligned}
\KL(G_o\|G_{\mathrm{SL}})\;\ge\;\KL\!\bigl(G_o\|G^{(2)}_{\mathrm{SSL}}\bigr)~~~\forall \beta<\beta_0,\\ \frac{\KL(G_o\|G_{\mathrm{SL}})} {\KL\!\bigl(G_o\|G^{(2)}_{\mathrm{SSL}}\bigr)} \xrightarrow[\beta\to 0]{} \infty.
\end{aligned}
\end{equation*}
\end{proposition}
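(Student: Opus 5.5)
Take the closed forms of Lemma~\ref{lem:kl-closed-forms} for $i=2$ and view both divergences as functions of $\beta$, with $\Sigma$, $\sigma$, $n$, $m$ fixed. The SSL side,
\[
\KL(G_o\|G^{(2)}_{\mathrm{SSL}})=\tfrac12\bigl(\tfrac{1}{\sigma}\tr(\Sigma)-n+n\ln\sigma-\ln\det\Sigma\bigr)=:K_2,
\]
does not depend on $\alpha,\beta$. It is a finite constant, and it is strictly positive unless $\Sigma=\sigma I_n$, since it is a KL divergence between distinct Gaussians. On the SL side, the $\ln\det$ term of the Gaussian KL formula is $\ln\det(\Sigma^{1/2}D\Sigma^{1/2})-\ln\det\Sigma=\ln\det D$. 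Under the $i=2$ equal-volume constraint $\alpha^m\beta^{n-m}=\sigma^n/\det\Sigma$ this equals $n\ln\sigma-\ln\det\Sigma$, so the SL divergence is
\[
\tfrac12\bigl(\tfrac{m}{\alpha}+\tfrac{n-m}{\beta}-n+n\ln\sigma-\ln\det\Sigma\bigr),
\]
exactly as stated in the lemma. The difference is therefore
\[
\KL(G_o\|G_{\mathrm{SL}})-K_2=\tfrac12\Bigl(\tfrac{m}{\alpha}+\tfrac{n-m}{\beta}-\tfrac{1}{\sigma}\tr(\Sigma)\Bigr).
\]

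The key step is to parametrize $\alpha$ by $\beta$ through the constraint: $\alpha=\bigl(\sigma^n/(\det\Sigma\,\beta^{n-m})\bigr)^{1/m}$. Since $n-m\ge1$, $\alpha\to\infty$ as $\beta\to0$, so $m/\alpha\to0$ while $(n-m)/\beta\to\infty$. Drop the nonnegative term $m/\alpha$ to get the lower bound
\[
\KL(G_o\|G_{\mathrm{SL}})-K_2\ge\tfrac12\bigl(\tfrac{n-m}{\beta}-\tfrac{1}{\sigma}\tr\Sigma\bigr).
\]
This is nonnegative whenever $\beta\le\beta_0:=\sigma(n-m)/\tr(\Sigma)$, which gives an explicit $\beta_0>0$ and proves the first claim for all $\beta<\beta_0$.

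One side condition must be checked: the model requires $\alpha>\beta$. For small $\beta$ this holds automatically, because $\alpha\to\infty$ while $\beta\to0$. If needed, shrink $\beta_0$ so that $\alpha(\beta)>\beta$ on $(0,\beta_0)$, which is possible since $\alpha(\beta)/\beta\to\infty$.

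For the ratio claim, $\KL(G_o\|G_{\mathrm{SL}})\ge\tfrac12\bigl(\tfrac{n-m}{\beta}-n+n\ln\sigma-\ln\det\Sigma\bigr)\to\infty$, while the denominator $K_2$ stays a fixed constant. The ratio therefore diverges, provided $K_2>0$. The main subtlety is the degenerate case $\Sigma=\sigma I_n$, where $K_2=0$ and the ratio is undefined or $+\infty$. I would either exclude this case explicitly (it coincides with the $i=1$ setting) or interpret the ratio as $+\infty$. Otherwise the argument reduces to the elementary limits above, and unlike Proposition~\ref{prop:kl-ordering-amgm} no AM--GM step is needed. The price is that the inequality holds only for $\beta$ below a threshold rather than globally.
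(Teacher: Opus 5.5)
Your argument is correct and matches the paper's proof: both take the difference $\tfrac12\bigl(\tfrac{m}{\alpha}+\tfrac{n-m}{\beta}-\tfrac{1}{\sigma}\tr(\Sigma)\bigr)$ from the lemma and use that the volume constraint forces $\alpha\to\infty$ as $\beta\to0$. Your explicit threshold $\beta_0=\sigma(n-m)/\tr(\Sigma)$ sharpens the paper's "sufficiently small $\beta$", and your remark that the ratio claim needs the denominator $K_2$ to be a positive constant makes explicit a step the paper leaves implicit; one slip there is that when $\Sigma=\sigma I_n$ the proxy $G^{(2)}_{\mathrm{SSL}}$ equals $G_o$ itself rather than the $i=1$ model $\mathcal{N}(0,\sigma\Sigma)$, so that degenerate case should simply be excluded (or the ratio read as $+\infty$).
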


\begin{proof}
By Lemma~\ref{lem:kl-closed-forms}, together with \eqref{eq:kl_i=2_SSL} and \eqref{eq:kl_i=2_SL},
\begin{equation*}
\begin{aligned}  
&\KL(G_o\|G_{\mathrm{SL}})-\KL(G_o\|G^{(2)}_{\mathrm{SSL}}) \\&=\frac12\Big(\frac{m}{\alpha}+\frac{n-m}{\beta}-\frac{\tr(\Sigma)}{\sigma}\Big).
\end{aligned}
\end{equation*}
As $\beta\to 0$ under $\alpha^m\beta^{n-m}=\sigma^n/\det\Sigma$, necessarily $\alpha\to\infty$, so $\frac{m}{\alpha}\to 0$ while $\frac{n-m}{\beta}\to\infty$. Therefore the difference above is positive for all sufficiently small $\beta$, proving the existence of $\beta_0$ and the asymptotic result.
\end{proof}

\noindent
The asymptotic results show that, in the highly anisotropic regime (e.g., when $\beta$ is very small, as suggested by a strong form of ``neural collapse'' \citep{PapyanHanDonohoPNAS2020pol}), the KL gap between the SL and SSL proxies can become arbitrarily large.

\subsection{Class conditional shift and domain adaptation}
\label{sec:domain-adapt}

In this section we cast class-incremental learning as a domain adaptation problem, where the effective data distribution shifts between episodes. Since \method selects representatives on a per-class basis within the current episode, we focus on the resulting class-conditional shift and study how it affects a downstream classification task: distinguishing the current class from the $K-1$ new classes that will appear in the next episode.

\paragraph{Single-class conditional shift assumption.}
Let $\R^n$ denote the input space and let $\cY=\{1,\dots,K\}$ be the label space. Label $Y=1$ corresponds to a class from the current episode; without loss of generality we assume its mean satisfies $\mu_1=0$. Labels $Y=2,\dots,K$ correspond to the $K-1$ classes that will appear in the next episode. We write $C_i$ for the class associated with label $Y=i$, for $i\in[K]$.

When constructing the training set for the next episode, classes $\{C_i\}_{i=2}^K$ are sampled from their original class-conditional distributions $\cN(\mu_i,\Sigma_i)$ in $\R^n$, as assumed above. In contrast, class $C_1$ is represented by the exemplars stored in the replay buffer, which reflect the (possibly distorted) class-conditional distribution induced by the new embedding.

As customary in domain adaptation, let $S := P_{\mathrm{tr}}(X,Y)$ denote the \emph{source/train} distribution and $T := P_{\mathrm{te}}(X,Y)$ the \emph{target/test} distribution. In our setting the two distributions coincide except for the class-conditional distribution of $C_1$.\footnote{The CIL training procedure rebalances the class prior, ensuring that $P(Y=1)$ matches between train and test regardless of the buffer size.} In particular,
\begin{equation*}
\begin{aligned}
&P_{\mathrm{tr}}(Y=y)=P_{\mathrm{te}}(Y=y)\ \ \forall y\in\cY, \qquad 
\\ &P_{\mathrm{tr}}(X\mid Y=y)=P_{\mathrm{te}}(X\mid Y=y)\ \ \forall y\neq 1,
\end{aligned}
\end{equation*}
but $P_{\mathrm{tr}}(X\mid Y=1)\neq P_{\mathrm{te}}(X\mid Y=1)$. Let $\pi_1 := P_{\mathrm{te}}(Y=1)=P_{\mathrm{tr}}(Y=1)$.

\paragraph{Domain adaptation bound}

For a classifier $h:\cX\to\cY$, the $0$--$1$ loss is $\ell_{01}(h(x),y) := \mathbf{1}_{\{h(x)\neq y\}}\in[0,1]$, and the corresponding risk is
\begin{equation*}
\resizebox{1\columnwidth}{!}{%
$R_D(h) := \Prb_{(X,Y)\sim D}\big[h(X)\neq Y\big]= \E_{(X,Y)\sim D}\big[\ell_{01}(h(X),Y)\big].$
}
\end{equation*}

\begin{theorem}[Train--test risk gap controlled by the shifted class]
\label{cor:tv-gap}
For any classifier $h$,
\begin{equation*}
\resizebox{1\columnwidth}{!}{%
$|R_T(h)-R_S(h)|\le\pi_1\,\TV\!\Big(P_{\mathrm{tr}}(X\mid Y=1),\,P_{\mathrm{te}}(X\mid Y=1)\Big).$}
\end{equation*}
where $\TV$ denotes the total variation distance.
\end{theorem}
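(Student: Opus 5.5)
The plan is to decompose both risks by conditioning on the label and then use the fact that the two distributions agree on everything except the conditional law of $X$ given $Y=1$. By the law of total expectation, for $D\in\{S,T\}$,
\begin{equation*}
R_D(h)=\sum_{y\in\cY} P_D(Y=y)\,\E_{X\sim P_D(X\mid Y=y)}\big[\mathbf{1}_{\{h(X)\neq y\}}\big].
\end{equation*}
The single-class conditional shift assumption says two things. First, the class priors coincide. Second, the conditionals coincide for every $y\neq 1$. Hence all terms with $y\neq 1$ cancel in $R_T(h)-R_S(h)$, and we are left with
\begin{equation*}
R_T(h)-R_S(h)=\pi_1\Big(\E_{P_{\mathrm{te}}(X\mid Y=1)}\big[g(X)\big]-\E_{P_{\mathrm{tr}}(X\mid Y=1)}\big[g(X)\big]\Big),\qquad g(x):=\mathbf{1}_{\{h(x)\neq 1\}}.
\end{equation*}

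Next I will bound this difference of expectations using the variational characterization of total variation. For any measurable $g:\cX\to[0,1]$ and probability measures $P,Q$,
\begin{equation*}
\big|\E_P g-\E_Q g\big|\le \sup_{A}|P(A)-Q(A)|=\TV(P,Q).
\end{equation*}
Here $g$ is the indicator of the measurable set $A_h=\{x: h(x)\neq 1\}$, so the difference is exactly $P_{\mathrm{te}}(A_h\mid Y=1)-P_{\mathrm{tr}}(A_h\mid Y=1)$. Its absolute value is at most the supremum over sets, which gives the claim after multiplying by $\pi_1$.

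There is no real obstacle; the argument is a two-line computation. The only points needing care are the following.
\begin{itemize}
\item The total variation must be taken with the convention $\TV(P,Q)=\sup_A|P(A)-Q(A)|$, i.e.\ half the $L^1$ distance. With the unnormalized $L^1$ convention one would instead use $|\E_Pg-\E_Qg|\le \tfrac12\|P-Q\|_1$ for $g\in[0,1]$.
\item $h$ must be measurable, so that $A_h$ is an event.
\item The equal-prior condition (guaranteed by the rebalancing footnote) is exactly what allows the $y\neq1$ terms to cancel, rather than leaving a prior-shift term.
\end{itemize}

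Finally, to connect with the KL form quoted in Section~\ref{sec:theory}, I would remark afterward that Pinsker's inequality gives
\begin{equation*}
\TV\le\sqrt{\tfrac12\KL\big(P_{\mathrm{te}}(X\mid Y=1)\,\|\,P_{\mathrm{tr}}(X\mid Y=1)\big)}.
\end{equation*}
This lets Propositions~\ref{prop:kl-ordering-amgm} and~\ref{prop:smallbeta-dominance} be fed into the bound.
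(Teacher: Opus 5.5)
Your proposal is correct and follows essentially the same route as the paper. Both arguments factor the joint laws as $\pi_y P(x\mid y)$, cancel every $y\neq 1$ term using the equal priors and equal conditionals, and bound the remaining class-1 difference of expectations of the $[0,1]$-valued loss by the total variation of the class-1 conditionals. The only cosmetic difference is that you apply the set-supremum definition directly to the event $\{h(x)\neq 1\}$, whereas the paper uses the equivalent functional form $\sup_{0\le g\le 1}|\E_S g-\E_T g|$.
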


\begin{proof}
It is known (see, e.g., \citealp{levinpereswilmer2017}) that for probability measures $S,T$ on the same measurable space and any measurable $f:\cX\times\cY\to[0,1]$,
\begin{equation*}
\resizebox{1\columnwidth}{!}{%
$
\begin{aligned}
&\big|\E_S f-\E_T f\big|\le \TV(S,T):=\sup_{A}|S(A)-T(A)|\\&= \sup_{0\le g\le 1}\big|\E_S g-\E_T g\big|.
\end{aligned}
$
}
\end{equation*}

Moreover, since by assumption $S(x,y)=\pi_y P_{\mathrm{tr}}(x\mid y)$,  $T(x,y)=\pi_y P_{\mathrm{te}}(x\mid y)$ and $P_{\mathrm{tr}}(x\mid y)=P_{\mathrm{te}}(x\mid y)$ for all $y\neq 1$, we get
\begin{equation*}
\resizebox{1\columnwidth}{!}{%

$\E_S[f(X,Y)]-\E_T[f(X,Y)]=\pi_1\Big(\E_{P_{\mathrm{tr}}(X\mid 1)}[f(X,1)]-\E_{P_{\mathrm{te}}(X\mid 1)}[f(X,1)]\Big).$
}
\end{equation*}
Taking $f(X,Y)=\ell_{01}(h(X),Y)$, we obtain
\begin{equation*}
\resizebox{0.95\columnwidth}{!}{%
$
\begin{aligned}
|R_T(h)-R_S(h)|=\pi_1\,\big|\E_{P_{\mathrm{tr}}(X\mid 1)}[\ell_{01}(h(X),1)]-\\ \E_{P_{\mathrm{te}}(X\mid 1)}[\ell_{01}(h(X),1)]big|\le\pi_1\,\TV\!\big(P_{\mathrm{tr}}(X\mid 1),P_{\mathrm{te}}(X\mid 1)\big),
\end{aligned}
$
}
\end{equation*}
which proves the claim.
\end{proof}

\begin{corollary}[KL-controlled train--test risk gap]
\label{cor:kl-gap}
For any classifier $h$,
\begin{equation*}
\resizebox{0.95\columnwidth}{!}{%
$|R_T(h)-R_S(h)| \le \pi_1 \sqrt{\frac12\,\KL\!\Big(P_{\mathrm{te}}(X\mid Y=1)\,\big\|\,P_{\mathrm{tr}}(X\mid Y=1)\Big)}.$}
\end{equation*}
Equivalently,
\begin{equation*}
\resizebox{0.95\columnwidth}{!}{
$R_T(h)\le R_S(h)+ \pi_1 \sqrt{\frac12\,\KL\!\Big(P_{\mathrm{te}}(X\mid Y=1)\,\big\|\,P_{\mathrm{tr}}(X\mid Y=1)\Big)}.$
}
\end{equation*}

\end{corollary}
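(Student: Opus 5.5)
The plan is to combine Theorem~\ref{cor:tv-gap} with Pinsker's inequality. Theorem~\ref{cor:tv-gap} already gives, for any classifier $h$,
\[
|R_T(h)-R_S(h)|\le \pi_1\,\TV\big(P_{\mathrm{tr}}(X\mid Y=1),P_{\mathrm{te}}(X\mid Y=1)\big).
\]
So it only remains to bound the total variation term by a KL divergence.

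First, I note that $\TV$ is symmetric in its arguments, so
\[
\TV\big(P_{\mathrm{tr}}(X\mid 1),P_{\mathrm{te}}(X\mid 1)\big)=\TV\big(P_{\mathrm{te}}(X\mid 1),P_{\mathrm{tr}}(X\mid 1)\big).
\]
This lets me match the argument order of the KL in the statement, with $P_{\mathrm{te}}$ in the first slot.

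Second, I apply Pinsker's inequality, $\TV(P,Q)\le\sqrt{\tfrac12\KL(P\|Q)}$, with $P=P_{\mathrm{te}}(X\mid Y=1)$ and $Q=P_{\mathrm{tr}}(X\mid Y=1)$. Here $\TV$ is normalized as $\sup_A|P(A)-Q(A)|$, exactly as in the proof of Theorem~\ref{cor:tv-gap}. I check that the constant $\tfrac12$ is the right one for this normalization, and it is.

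If the KL is infinite, the bound holds trivially. This covers the case where the buffered distribution is not absolutely continuous w.r.t.\ the test conditional, e.g.\ an empirical buffer. For the Gaussian proxies of Section~\ref{sec:selective}, the KL is finite anyway.

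Multiplying by $\pi_1\ge 0$ and chaining the inequalities gives the first display. The second display follows by dropping the absolute value, since $R_T(h)-R_S(h)\le|R_T(h)-R_S(h)|$, and then rearranging.

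There is no real obstacle. The only care points are the normalization of $\TV$ (the factor $\tfrac12$ in Pinsker's inequality) and the direction of the KL, and symmetry of $\TV$ resolves the latter.
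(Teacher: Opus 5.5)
Your proposal is correct and follows the paper's proof, which likewise just chains Theorem~\ref{cor:tv-gap} with Pinsker's inequality $\TV(P,Q)\le\sqrt{\frac12\KL(P\|Q)}$. Your extra care about the symmetry of $\TV$, the constant $\tfrac12$, and the infinite-KL case is sound, and slightly more complete than the paper, whose Pinsker statement assumes finite KL. One minor slip in a side remark: $\KL(P_{\mathrm{te}}\|P_{\mathrm{tr}})=\infty$ arises when the test conditional is not absolutely continuous with respect to the buffered one, which is the reverse of what you wrote; for an empirical buffer against a Gaussian both directions fail, so nothing in the argument is affected.
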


\begin{proof}
The result follows from Pinsker's inequality \citep{cover2006elements}, which states that for distributions $P,Q$ with finite $\KL(P\|Q)$,
\begin{equation*}
\TV(P,Q)\le \sqrt{\frac12\,\KL(P\|Q)}.
\end{equation*}
\end{proof}

\subsection{The benefits of using the SSL embedding}
\label{sec:benefits}

\begin{proposition}[SSL yields a tighter DA-style bound than SL]
\label{prop:ssl-tighter}
Under the setup of Section~\ref{sec:selective} and the equal-volume normalization, the SSL embedding yields a tighter (smaller) bound on the train-test risk gap than the SL embedding.
\end{proposition}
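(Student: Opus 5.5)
The plan is to combine Corollary~\ref{cor:kl-gap} with Propositions~\ref{prop:kl-ordering-amgm} and~\ref{prop:smallbeta-dominance}. The shifted class is $C_1$, whose test class-conditional is the true distribution $P_{\mathrm{te}}(X\mid Y=1)=G_o=\cN(0,\Sigma)$. Its training class-conditional $P_{\mathrm{tr}}(X\mid Y=1)$ is the distribution induced by the embedding used to fill the buffer. We model this as $G_{\mathrm{SL}}$ for supervised selection and as $G^{(i)}_{\mathrm{SSL}}$, $i\in\{1,2\}$, for self-supervised selection, as in Section~\ref{sec:selective}.

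Substituting each choice into Corollary~\ref{cor:kl-gap}, valid for every classifier $h$, gives two bounds on $|R_T(h)-R_S(h)|$:
\[
B_{\mathrm{SL}}:=\pi_1\sqrt{\tfrac12\KL(G_o\|G_{\mathrm{SL}})},\qquad B^{(i)}_{\mathrm{SSL}}:=\pi_1\sqrt{\tfrac12\KL(G_o\|G^{(i)}_{\mathrm{SSL}})}.
\]
All these KL divergences are finite, since they are divergences between nondegenerate Gaussians computed in Lemma~\ref{lem:kl-closed-forms}. Because $t\mapsto \pi_1\sqrt{t/2}$ is monotone nondecreasing on $[0,\infty)$, comparing the bounds reduces to comparing the KL divergences. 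The prior $\pi_1$ is the same in both cases, since the class prior is rebalanced.

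\textbf{Case $i=1$.} Proposition~\ref{prop:kl-ordering-amgm} gives $\KL(G_o\|G_{\mathrm{SL}})\ge\KL(G_o\|G^{(1)}_{\mathrm{SSL}})$ under the equal-volume constraint $\alpha^m\beta^{n-m}=\sigma^n$. Hence $B^{(1)}_{\mathrm{SSL}}\le B_{\mathrm{SL}}$, with equality iff $\alpha=\beta=\sigma$. Since $\alpha>\beta$ by assumption, the inequality is strict.

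\textbf{Case $i=2$.} Proposition~\ref{prop:smallbeta-dominance} gives the same ordering for all $\beta<\beta_0$, and hence $B^{(2)}_{\mathrm{SSL}}\le B_{\mathrm{SL}}$ in that regime. The divergence ratio tends to $\infty$, so $B_{\mathrm{SL}}/B^{(2)}_{\mathrm{SSL}}\to\infty$ as $\beta\to0$; the square root preserves divergence to infinity. This gives the "especially in the small-$\beta$ regime" refinement.

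The main obstacle is interpretive rather than computational. The statement asserts a comparison of \emph{bounds}, not of the actual risk gaps. The proof should make explicit that "tighter" means a smaller value of the Corollary~\ref{cor:kl-gap} upper bound. It should also state that the $i=2$ claim holds only in the anisotropic regime $\beta<\beta_0$, not unconditionally. I would phrase the conclusion accordingly: unconditional for the $G^{(1)}_{\mathrm{SSL}}$ proxy, and for $\beta<\beta_0$ for the $G^{(2)}_{\mathrm{SSL}}$ proxy.

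A secondary subtlety is the direction of the KL. Corollary~\ref{cor:kl-gap} uses $\KL(P_{\mathrm{te}}\|P_{\mathrm{tr}})=\KL(G_o\|\cdot)$, which matches the orientation computed in Lemma~\ref{lem:kl-closed-forms}, so no additional computation is needed.
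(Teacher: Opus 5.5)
Your proposal is correct and follows the paper's proof essentially step for step. You identify $P_{\mathrm{te}}(X\mid Y=1)=G_o$ and $P_{\mathrm{tr}}(X\mid Y=1)\in\{G^{(i)}_{\mathrm{SSL}},G_{\mathrm{SL}}\}$, apply Corollary~\ref{cor:kl-gap}, and transfer the KL orderings of Propositions~\ref{prop:kl-ordering-amgm} and~\ref{prop:smallbeta-dominance} (the latter only for $\beta<\beta_0$) through the monotone map $t\mapsto\pi_1\sqrt{t/2}$; your extra remarks on strictness when $\alpha>\beta$ (which also needs $\pi_1>0$), on the KL orientation, and on restricting the $i=2$ case to small $\beta$ are accurate refinements that the paper leaves implicit.
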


\begin{proof}
In the notation of Section~\ref{sec:selective}, the test conditional for class $1$ is $P_{\mathrm{te}}(X\mid Y=1)=G_o$, while the corresponding training conditional is $P_{\mathrm{tr}}(X\mid Y=1)=G^{(i)}_{\mathrm{SSL}}$ (under SSL) or $G_{\mathrm{SL}}$ (under SL). Applying Corollary~\ref{cor:kl-gap} gives, for any classifier $h$,
\begin{equation*}
\begin{aligned}
|R_T(h)-R_S(h)| \le \pi_1 \sqrt{\frac12\,\KL\!\bigl(G_o\|G^{(i)}_{\mathrm{SSL}}\bigr)} \\
\text{(SSL embedding),}
\end{aligned}
\end{equation*}
and
\begin{equation*}
\begin{aligned}
|R_T(h)-R_S(h)| \le \pi_1 \sqrt{\frac12\,\KL\!\bigl(G_o\|G_{\mathrm{SL}}\bigr)} \\
\text{(SL embedding).}
\end{aligned}
\end{equation*}
Under equal volume, Proposition~\ref{prop:kl-ordering-amgm} implies $\KL(G_o\|G_{\mathrm{SL}})\ge \KL(G_o\|G^{(1)}_{\mathrm{SSL}})$, and Proposition~\ref{prop:smallbeta-dominance} shows that for sufficiently small $\beta$, $\KL(G_o\|G_{\mathrm{SL}})\ge \KL(G_o\|G^{(2)}_{\mathrm{SSL}})$. In either case, the KL term, and hence the right-hand side of the bound, is smaller under SSL than under SL, which proves the claim.
\end{proof}
\section{Hyperparameters}

\subsection{classification model}
we employ a ResNet-18 backbone trained for 100 epochs with a batch size of 10. The \textbf{ER-ACE} configuration begins with a learning rate of 0.01. The \textbf{ER} and \textbf{MIR} configuration begins with a learning rate of 0.1, for all configurations, SGD optimization includes Nesterov momentum of 0.9 and weight decay 0.0002. The learning rate is decayed by a factor of 0.3 every 66 epochs. All experiments were run with five random seeds (0-4).

\subsection{class order} We follow the canonical class order for each benchmark: Split CIFAR-100 uses classes $[1\dots100]$, and Split TinyImageNet uses classes $[1\dots200]$.
\subsection{Self-Supervised Training}
\label{subsec:ssl_training}
Our SimCLR and VICREeg implementation is adapted from solo-learn\cite{JMLR:v23:21-1155}, and is available in the source code. The self-supervised model is trained on the images observed in the current episode only, never on the full dataset. For DINOv2, we extract frozen embeddings from a pretrained foundational model, specifically the \texttt{dinov2\ vitb14} backbone, 768-dimensional, without any further fine-tuning.
\subsection{Feature Normalization}
Each feature vector is divided by its $\ell_2$ norm, yielding unit-norm representations. Similarities are therefore computed with the cosine distance.
\section{Compute resources}
Each experiment trained deep‑learning models on GPUs, consuming up to 22 GB of GPU memory and no more than 20 GB of system RAM.

\section{Source code}The complete source code is provided in the supplementary ZIP file and will be publicly released on GitHub upon acceptance. The source code includes a README that lists the commands required to reproduce all of the experiments described in this paper.
\section{Additional results}
\subsection{Main results tables}
The tables~\ref{tab:faa_cifar-100_er_ace_star}-~\ref{tab:aaa_cifar-100_er_ace} presents the complete tables for the results in Section~\ref{subsec:main_results}, evaluated with both the FAA and AAA metrics.
\begin{table*}[!t]
\caption{Final Averaged Accuracy (FAA) on Split CIFAR-100 with three CL algorithms ,averaged over 5 independent runs (mean ± standard error). For each \buffer, the best FAA is in bold.}
\begin{subtable}[t]{\textwidth}
\centering

\fontsize{9}{11}
\rmfamily
\setlength{\tabcolsep}{1mm}\caption{ER ACE STAR}
\label{tab:faa_cifar-100_er_ace_star}
\rowcolors{2}{tableShade}{white}%
\begin{tabular}{lcccccc}
\toprule
 & \multicolumn{1}{c}{Random} & \multicolumn{1}{c}{\probcover} & \multicolumn{2}{c}{\maxherding} & \multicolumn{1}{c}{Herding} & \multicolumn{1}{c}{TEAL}  \\
\cmidrule(lr){2-2} \cmidrule(lr){3-3} \cmidrule(lr){4-5} \cmidrule(lr){6-6} \cmidrule(lr){7-7} 
Buffer & Supervised & Supervised & Supervised & MERS & Supervised & Supervised  \\
\midrule
100 & 21.93 \std{0.17} & 29.32 \std{0.20} & 32.04 \std{0.32} & \textbf{33.43} \std{0.44} & 21.57 \std{0.30} & 29.68 \std{0.36}  \\
300 & 31.85 \std{0.38} & 41.47 \std{0.24} & 42.07 \std{0.28} & \textbf{44.00} \std{0.18} & 33.47 \std{0.21} & 41.33 \std{0.26}  \\
500 & 36.68 \std{0.49} & 45.39 \std{0.17} & 46.43 \std{0.19} & \textbf{47.81} \std{0.15} & 38.28 \std{0.19} & 45.76 \std{0.34} \\
1000 & 44.62 \std{0.20} & 50.96 \std{0.25} & 50.86 \std{0.27} & \textbf{53.50} \std{0.30} & 44.81 \std{0.15} & 50.98 \std{0.26}  \\
2000 & 51.31 \std{0.27} & 55.49 \std{0.20} & 56.27 \std{0.37} & \textbf{58.44} \std{0.24} & 51.30 \std{0.24} & 55.56 \std{0.15}  \\

\bottomrule
\end{tabular}
\end{subtable}

\begin{subtable}[t]{\textwidth}
\centering
\fontsize{9}{11}
\rmfamily
\setlength{\tabcolsep}{1mm}\caption{ER ACE}
\label{tab:faa_cifar-100_er_ace}
\rowcolors{2}{tableShade}{white}%
\begin{tabular}{lcccccc}
\toprule
 & \multicolumn{1}{c}{Random} & \multicolumn{1}{c}{\probcover} & \multicolumn{2}{c}{\maxherding} & \multicolumn{1}{c}{Herding} & \multicolumn{1}{c}{TEAL}  \\
\cmidrule(lr){2-2} \cmidrule(lr){3-3} \cmidrule(lr){4-5} \cmidrule(lr){6-6} \cmidrule(lr){7-7} 
Buffer & Supervised & Supervised & Supervised & MERS & Supervised & Supervised  \\
\midrule
100 & 21.80 \std{0.34} & 28.13 \std{0.35} & 29.35 \std{0.30} & \textbf{30.95} \std{0.44} & 22.08 \std{0.16} & 29.67\std{0.13} \\
300 & 32.01 \std{0.30} & 38.30 \std{0.15} & 39.33 \std{0.13} & \textbf{40.55} \std{0.28} & 29.94 \std{0.22} & 37.60 \std{0.25}  \\
500 & 36.29 \std{0.52} & 42.22 \std{0.25} & 43.55 \std{0.10} & \textbf{45.26} \std{0.19} & 35.58 \std{0.22} & 41.44 \std{0.23} \\
1000 & 43.30 \std{0.21} & 48.44 \std{0.22} & 49.19 \std{0.23} & \textbf{50.64} \std{0.32} & 42.71 \std{0.12} & 47.33 \std{0.24}  \\
2000 & 50.14 \std{0.30} & 53.85 \std{0.27} & 53.69 \std{0.26} & \textbf{55.42} \std{0.21} & 50.09 \std{0.21} & 53.04 \std{0.12}  \\
\bottomrule
\end{tabular}
\end{subtable}

\begin{subtable}[t]{\textwidth}
\centering
\fontsize{9}{11}
\rmfamily
\setlength{\tabcolsep}{1mm}\caption{ER}
\label{tab:faa_cifar-100_er}
\rowcolors{2}{tableShade}{white}%
\begin{tabular}{lccccccc}
\toprule
 & \multicolumn{1}{c}{Random} & \multicolumn{1}{c}{\probcover} & \multicolumn{2}{c}{\maxherding} & \multicolumn{1}{c}{Herding} & \multicolumn{1}{c}{TEAL} & \multicolumn{1}{c}{Rainbow} \\
\cmidrule(lr){2-2} \cmidrule(lr){3-3} \cmidrule(lr){4-5} \cmidrule(lr){6-6} \cmidrule(lr){7-7} \cmidrule(lr){8-8}
\buffer & Supervised & Supervised & Supervised & MERS & Supervised & Supervised & Supervised \\
\midrule

300 & 13.25 \std{0.10} & 16.29 \std{0.21} & \textbf{17.60} \std{0.18} & \textbf{17.74} \std{0.25} & 16.02\std{0.20} & 17.06 \std{0.13} & 13.46 \std{0.10}  \\
500 & 17.69 \std{0.30} & 22.03 \std{0.17} & \textbf{23.54} \std{0.15} & \textbf{23.78} \std{0.14} & 20.20\std{0.85} & 22.49 \std{0.20} & 16.98 \std{0.60}  \\
1000 & 26.04 \std{0.24} & 31.65 \std{0.29} & \textbf{32.78} \std{0.32} & \textbf{33.26} \std{0.24} & 29.80\std{0.35} & 31.92 \std{0.43} & 26.72 \std{0.17} \\
2000 & 38.30 \std{0.23} & 42.76 \std{0.09} & 42.88 \std{0.22} & \textbf{43.89} \std{0.33} & 41.74\std{0.29} & 42.22 \std{0.51} & 38.40 \std{0.22}  \\

\bottomrule
\end{tabular}
\end{subtable}
\end{table*}

\begin{figure*}[h]
    \centering
\begin{subfigure}[t]{\columnwidth}
        \centering
        \includegraphics[width=\linewidth]{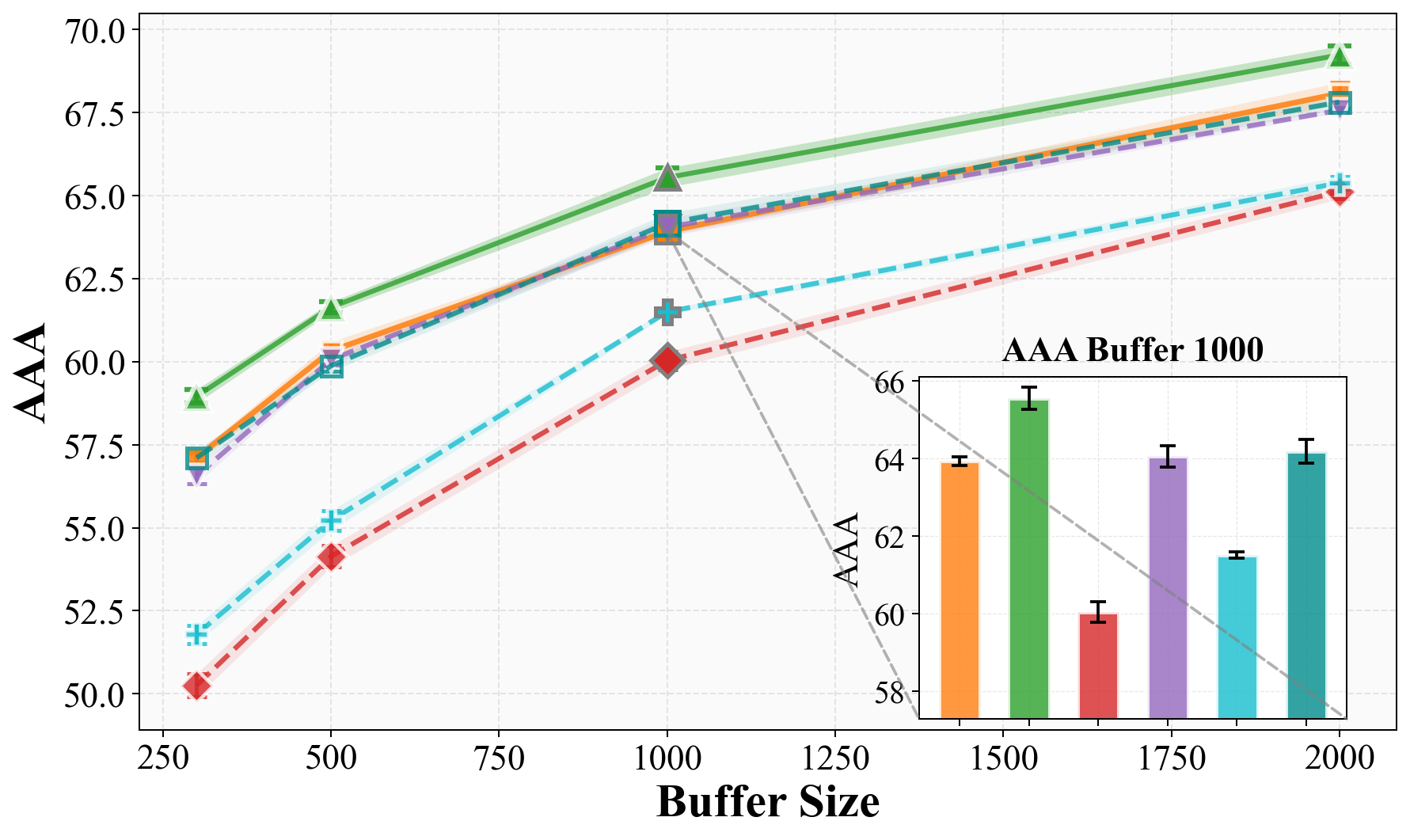}
        \caption{ER-ACE-STAR}
    \end{subfigure}
    \hfill
    \begin{subfigure}[t]{\columnwidth}
        \centering
        \includegraphics[width=\linewidth]{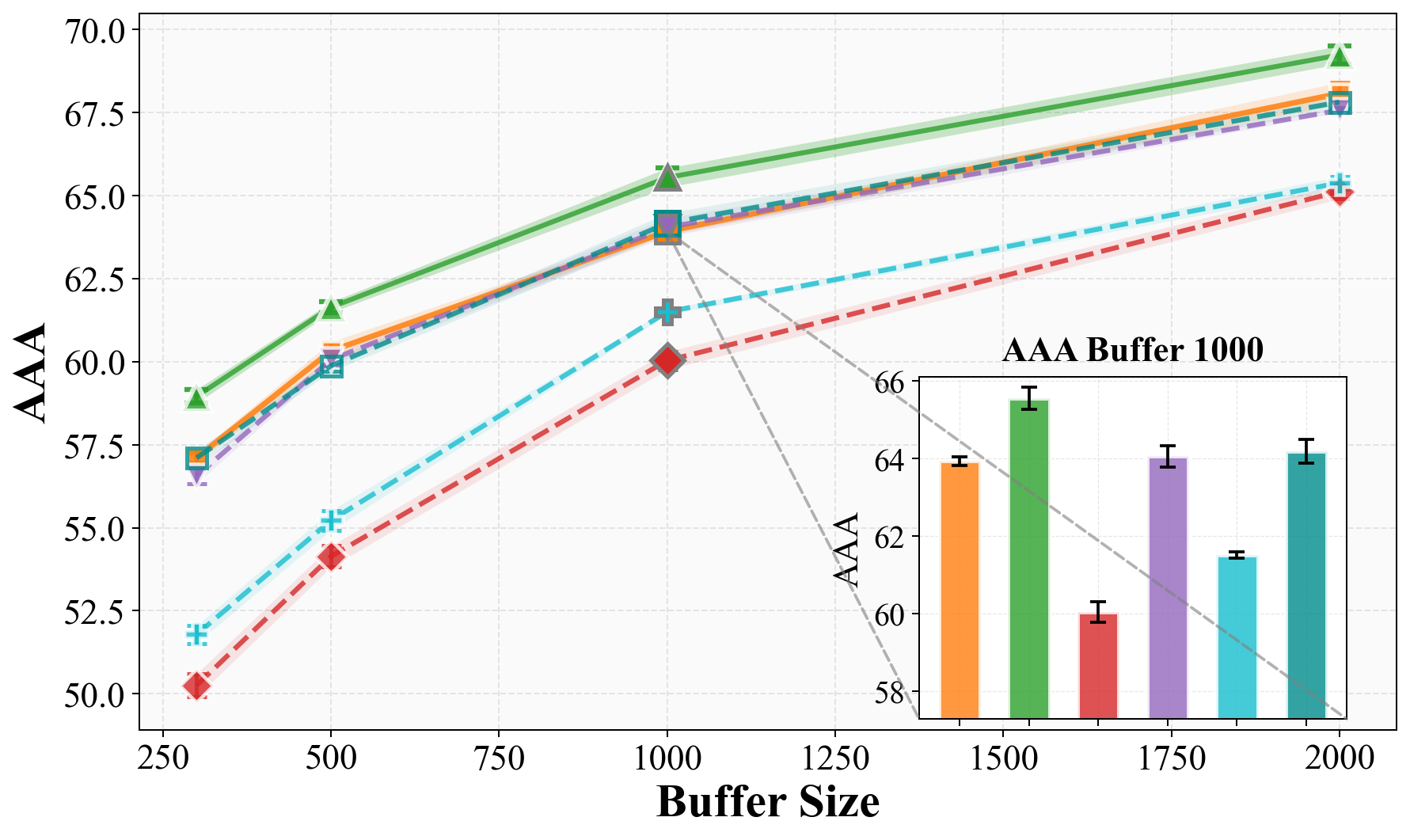}
        \caption{ER-ACE}
    \end{subfigure}

    \vspace{0.5em}

\caption{AAA as a function of memory size $|M|$ on Split CIFAR-100 for different continual learning algorithms. Results with \method are compared against alternative selection strategies.}

    \label{fig:mers_max_herding_er_ace_cifar100_aaa}
\end{figure*}

\subsection{Selection stability}
\label{sec:appendix_selection_stability}

\begin{figure*}
\begin{minipage}[t]{0.99\textwidth}
    \centering
    \begin{subfigure}[t]{0.48\textwidth}
        \centering
        \includegraphics[width=\linewidth]{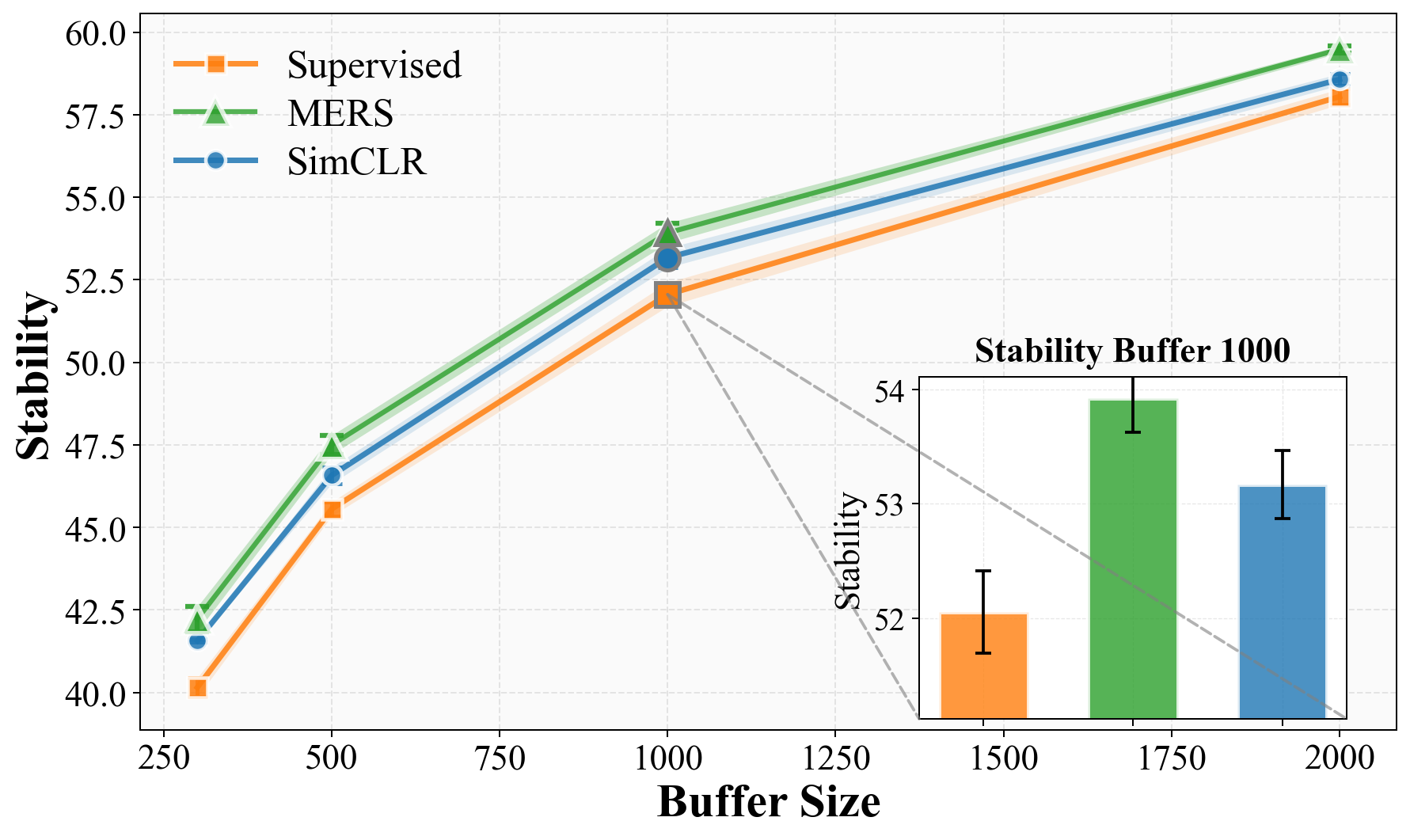}
        \caption{Stability}
    \end{subfigure}
    \hfill
    \begin{subfigure}[t]{0.48\textwidth}
        \centering
        \includegraphics[width=\linewidth]{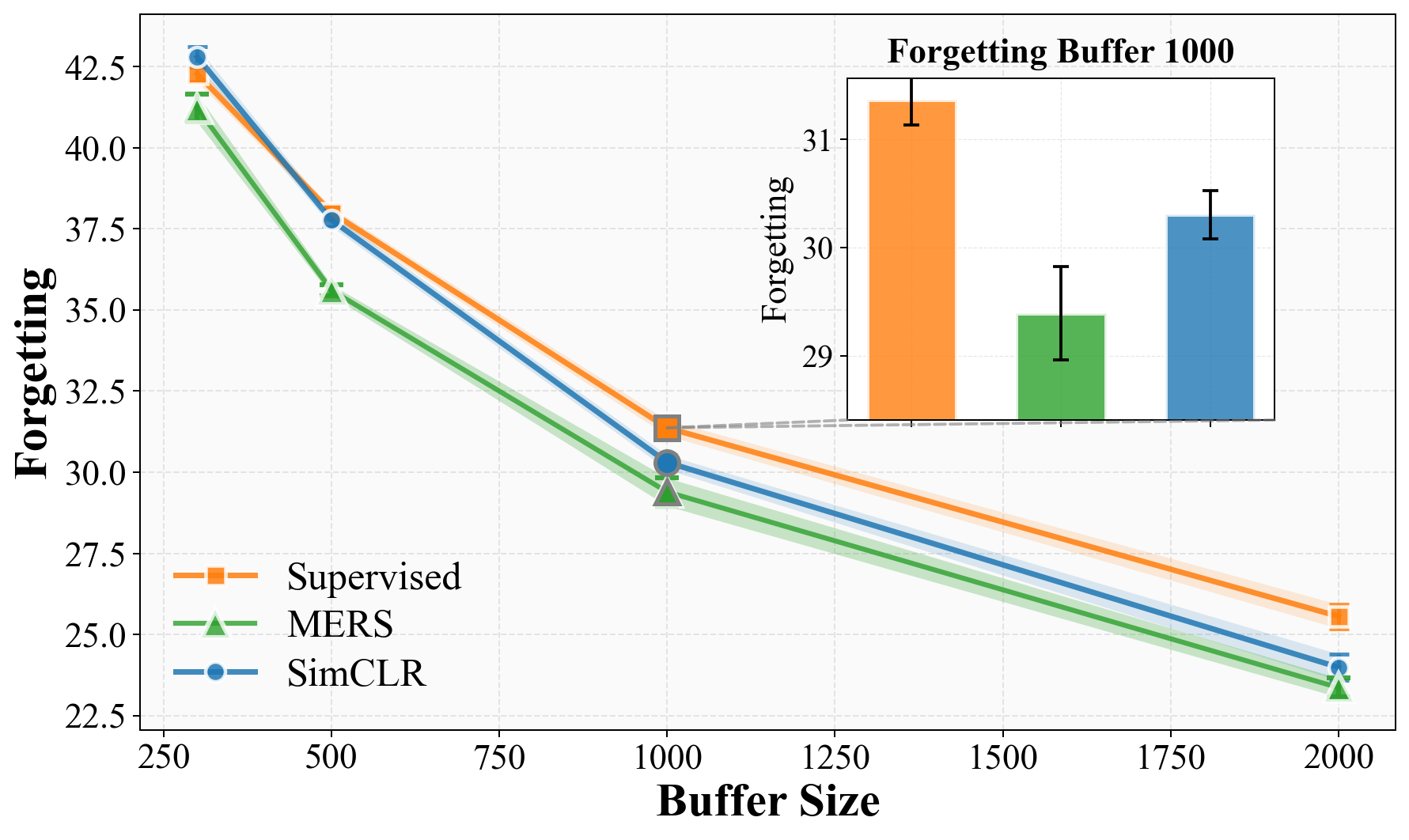}
        \caption{Forgetting}
    \end{subfigure}

    \caption{Stability and forgetting of ER-ACE with \method as a function of $|M|$ on Split CIFAR-100.}\label{fig:stability_forgetting_er_ace_cifar100}

\end{minipage}
\end{figure*}
\begin{figure*}
\begin{minipage}[t]{0.99\textwidth}
    \centering
    \begin{subfigure}[t]{0.48\textwidth}
        \centering
        \includegraphics[width=\linewidth]{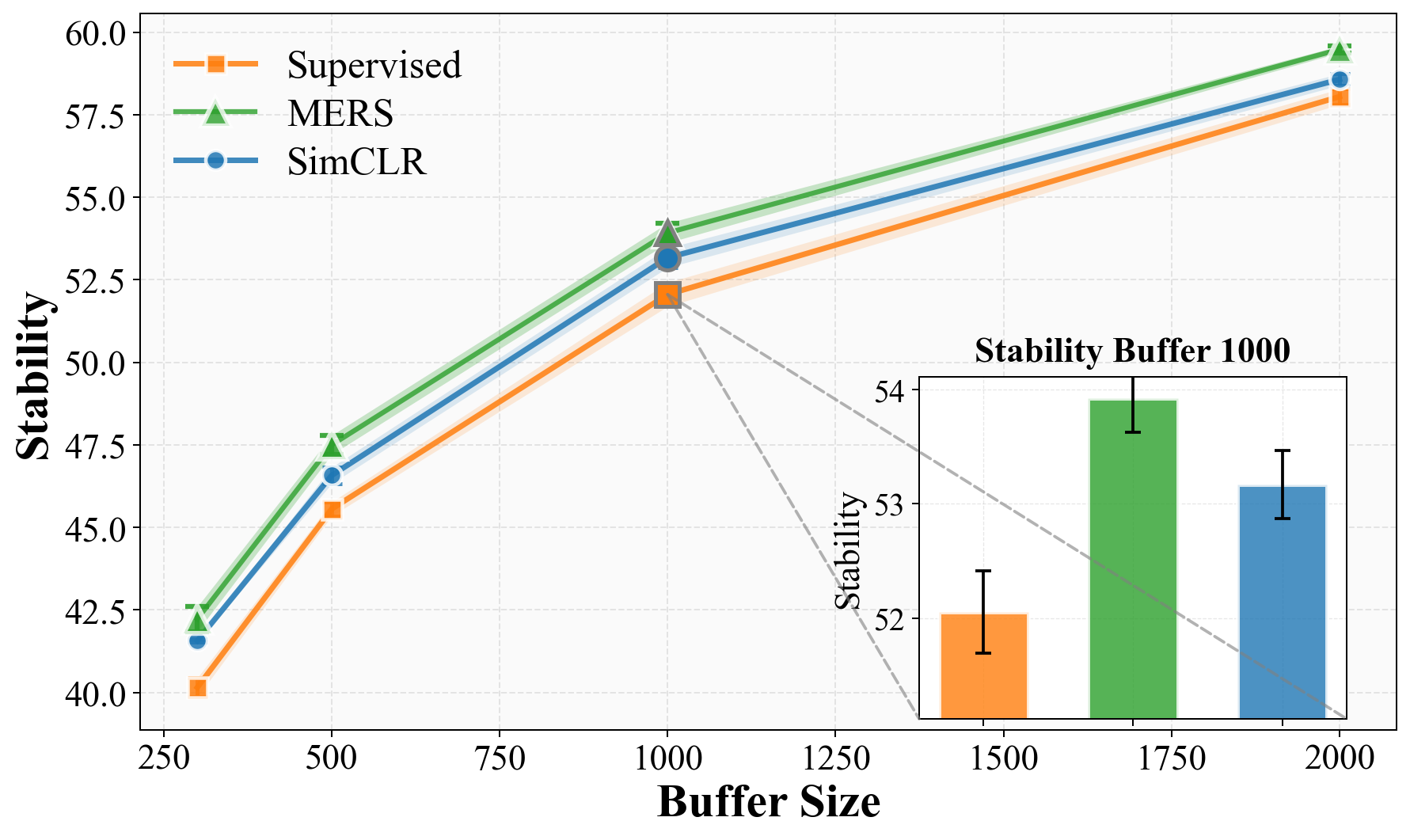}
        \caption{Stability}
    \end{subfigure}
    \hfill
    \begin{subfigure}[t]{0.48\textwidth}
        \centering
        \includegraphics[width=\linewidth]{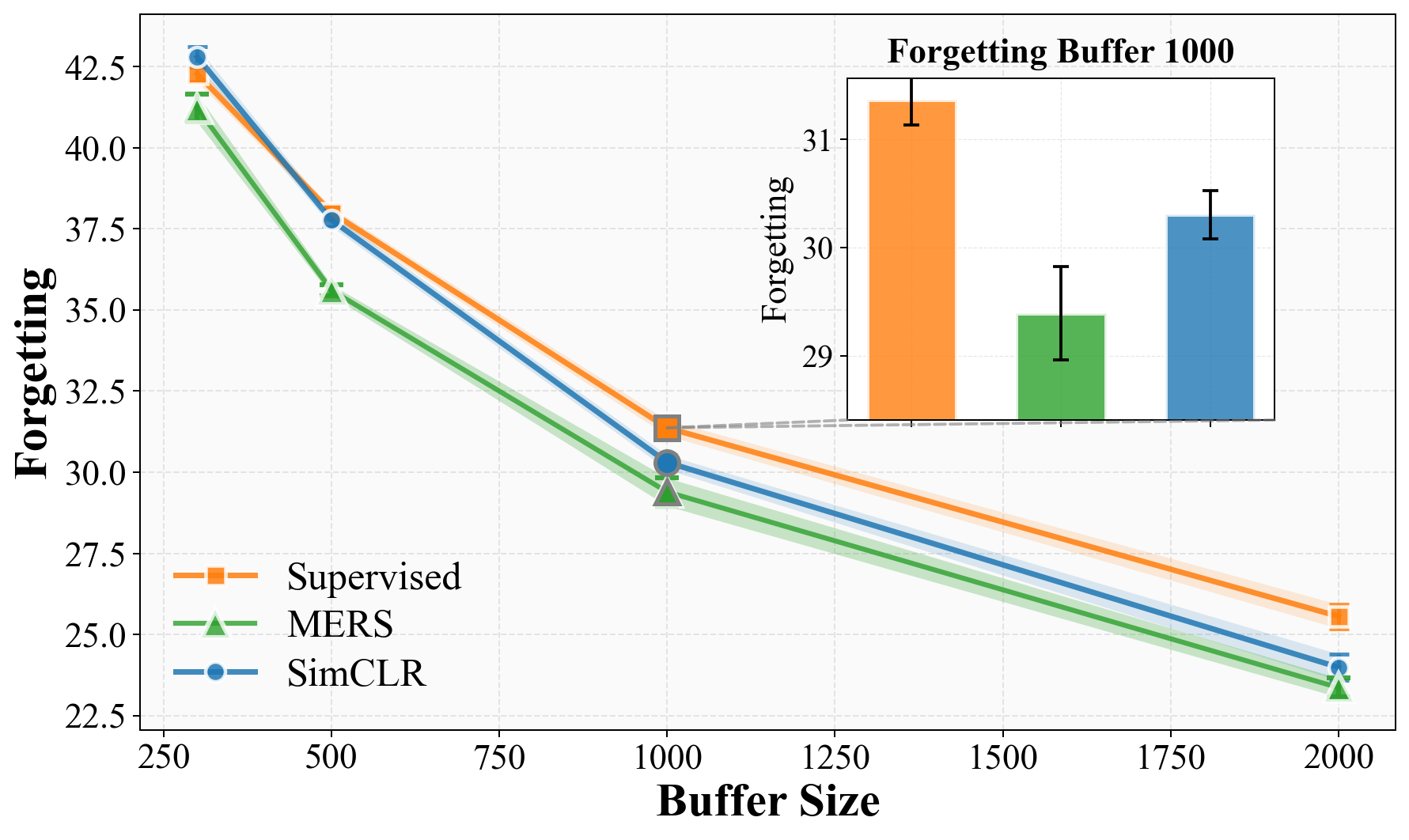}
        \caption{Forgetting}
    \end{subfigure}

    \caption{Stability and forgetting of ER with \method as a function of $|M|$ on Split CIFAR-100.}
    \label{fig:stability_forgetting_er_cifar100}

\end{minipage}
\hfill
\end{figure*}
We provide additional results for \textbf{ER} and We provide additional results for \textbf{ER} and \textbf{ER-ACE} on \textbf{Split CIFAR-100}, reported in Fig.~\ref{fig:stability_forgetting_er_ace_cifar100}-~\ref{fig:stability_forgetting_er_cifar100}

\section{Ablation Study}
\label{app:ablation}

We compare \method against a MaxHerding variant that relies solely on SSL embeddings. As shown in Fig.~\ref{fig:MERS-embeddings}, \method consistently achieves higher FAA and AAA accuracy, highlighting the benefit of combining Self-Supervised and Supervised representations.
Fig.~\ref{fig:weights_ablation} presents an ablation study on the effect of the embedding weight parameter $\alpha$ when using the median K-NN density defined in Eq.~\ref{eq:knn_density}, applied to \method \probcover on Split CIFAR-100 under the ER-ACE setting. The results indicate a slight but consistent improvement when using the formulation of $\alpha$ given in Eq.~\ref{eq:weight}.  


\begin{figure}[H]
    \centering
    \includegraphics[width=\linewidth]{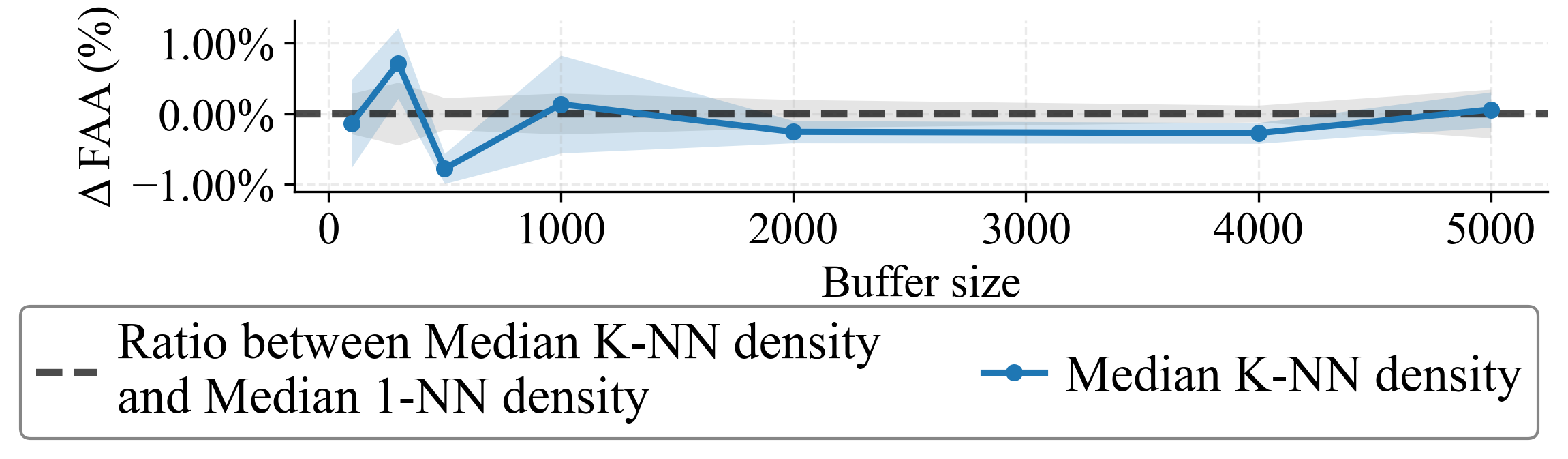}
    \caption{\textbf{\method \maxherding.} Ablation of the embedding weight $\alpha$ using K-NN density estimators on Split CIFAR-100 with ER-ACE. The baseline corresponds to Eq.~(\ref{eq:weight}), and a minor but consistent improvement is observed with this weighting.}
    \label{fig:weights_ablation}
\end{figure}


\section{Robustness to Episode Class Order in Continual Learning}\label{Robustness_to_Episode_Class_Order}
As in the experiments presented in Tables 1–2, we repeated them using different episode Class orders. Below are the Final Averaged Accuracy and the Anytime Averaged Accuracy Tables~\ref{tab:faa_42}-~\ref{tab:aaa_35}.

\begin{table*}
\caption{\textbf{FAA} with a different class ordering, averaged over 5 independent runs (mean ± standard error). Several sample-selection strategies and embedding spaces are compared across multiple replay-buffer sizes (\buffer). For each \buffer, the best AAA is in bold; result within the standard error of the
best are also bolded.}
\label{tab:faa_42}
\begin{subtable}[t]{\textwidth}
\centering
\caption{FAA on Split CIFAR-100 \textbf{ER ACE}.}

\fontsize{9}{11}\selectfont
\rmfamily
\setlength{\tabcolsep}{1mm}
\rowcolors{2}{tableShade}{white}%
\begin{tabular}{l|C{1.5cm}|ccc|ccc|ccc}
\toprule
& \multicolumn{1}{c|}{Random}
& \multicolumn{3}{c|}{\textbf{\method \probcover}}
& \multicolumn{3}{c|}{\textbf{\method \maxherding}}
 \\
\cmidrule(lr){2-2} \cmidrule(lr){3-5} \cmidrule(lr){6-8}
Buffer & Supervised & Supervised & SimCLR & \method & Supervised & SimCLR & \method  \\
\midrule
100 & 20.79 \std{0.27} & 29.81 \std{0.20} & 27.82 \std{0.25} & 29.35 \std{0.24} & 29.42 \std{0.11} & 29.20 \std{0.33} & \textbf{29.89} \std{0.22} &  \\
300 & 31.76 \std{0.07} & 38.84 \std{0.19} & 37.78 \std{0.14} & 39.47 \std{0.28} & 38.16 \std{0.35} & 38.73 \std{0.36} & \textbf{39.60} \std{0.25} & \\
500 & 35.80 \std{0.32} & 42.46 \std{0.23} & 42.25 \std{0.19} & 43.28 \std{0.23} & 42.72 \std{0.23} & 42.82 \std{0.21} & \textbf{43.71} \std{0.21} &\\
1000 & 42.27 \std{0.22} & 47.69 \std{0.23} & 48.11 \std{0.19} & 48.98 \std{0.27} & 47.77 \std{0.21} & 48.89 \std{0.27} & \textbf{50.00} \std{0.16} &  \\
2000 & 49.41 \std{0.18} & 52.99 \std{0.07} & 53.53 \std{0.24} & 54.17 \std{0.19} & 53.18 \std{0.20} & 54.20 \std{0.28} & \textbf{54.80} \std{0.19} &  \\
4000 & 55.32 \std{0.24} & 58.03 \std{0.11} & 58.79 \std{0.24} & 59.28 \std{0.18} & 58.52 \std{0.20} & 58.56 \std{0.34} & \textbf{59.03} \std{0.19} &  \\
5000 & 57.96 \std{0.21} & 60.10 \std{0.11} & \textbf{60.79} \std{0.19} & \textbf{60.84} \std{0.27} & 59.85 \std{0.18} & 59.90 \std{0.11} & \textbf{60.07} \std{0.13} &  \\

\bottomrule
\end{tabular}

\end{subtable}

\begin{subtable}[!t]{\textwidth}
\centering
\caption{FAA on Split CIFAR-100 \textbf{ER}.}

\fontsize{9}{11}
\rmfamily
\setlength{\tabcolsep}{1mm}

\rowcolors{2}{tableShade}{white}%
\begin{tabular}{l|C{1.5cm}|ccc|ccc|ccc}
\toprule
& \multicolumn{1}{c|}{Random}
& \multicolumn{3}{c|}{\textbf{\method \probcover}}
& \multicolumn{3}{c|}{\textbf{\method \maxherding}}
 \\
\cmidrule(lr){2-2} \cmidrule(lr){3-5} \cmidrule(lr){6-8} 

Buffer & Supervised & Supervised & SimCLR & \method & Supervised & SimCLR & \method  \\
\midrule
100 & 10.50 \std{0.14} & \textbf{13.02} \std{0.11} & 11.44 \std{0.13} & 12.18 \std{0.08} & 12.53 \std{0.18} & 12.24 \std{0.07} & 12.79 \std{0.12}  \\
300 & 14.67 \std{0.24} & \textbf{20.32} \std{0.26} & 19.01 \std{0.34} & \textbf{20.33} \std{0.21} & 19.62 \std{0.17} & 18.83 \std{0.56} & 19.52 \std{0.33}  \\
500 & 19.86 \std{0.31} & \textbf{25.37} \std{0.18} & 23.68 \std{0.35} & \textbf{25.01} \std{0.44} & \textbf{25.73} \std{0.22} & 24.25 \std{0.22} & \textbf{25.74} \std{0.55}  \\
1000 & 28.48 \std{0.22} & 34.37 \std{0.33} & 33.62 \std{0.29} & \textbf{35.18} \std{0.21} & 34.54 \std{0.34} & 34.43 \std{0.35} & \textbf{35.40} \std{0.20}  \\
2000 & 40.45 \std{0.23} & 43.84 \std{0.32} & 44.34 \std{0.31} & \textbf{45.38} \std{0.28} & 44.62 \std{0.19} & 44.76 \std{0.37} & \textbf{45.58} \std{0.40}  \\
4000 & 51.23 \std{0.22} & 53.91 \std{0.20} & 54.37 \std{0.20} & \textbf{54.97} \std{0.27} & \textbf{54.69} \std{0.32} & 54.01 \std{0.16} & \textbf{54.81} \std{0.16}  \\
5000 & 55.03 \std{0.20} & 56.75 \std{0.13} & 57.16 \std{0.25} & \textbf{57.79} \std{0.22} & 56.67 \std{0.23} & 56.49 \std{0.21} & 57.06 \std{0.23} \\

\bottomrule
\end{tabular}
\setlength{\tabcolsep}{1mm}
\end{subtable}

\begin{subtable}[t]{\textwidth}
\caption{FAA on  Split CIFAR-100 \textbf{ER ACE}.}

\centering
\fontsize{9}{11}\selectfont
\rmfamily
\setlength{\tabcolsep}{1mm}
\rowcolors{2}{tableShade}{white}%
\begin{tabular}{l|C{1.5cm}|ccc|ccc|ccc}
\toprule
& \multicolumn{1}{c|}{Random}
& \multicolumn{3}{c|}{\textbf{\method \probcover}}
& \multicolumn{3}{c|}{\textbf{\method \maxherding}} \\
\cmidrule(lr){2-2} \cmidrule(lr){3-5} \cmidrule(lr){6-8}
Buffer & Supervised & Supervised & SimCLR & \method & Supervised & SimCLR & \method  \\
\midrule
200 & 11.89 \std{0.13} & \textbf{13.95} \std{0.17} & 12.58 \std{0.05} & 13.33 \std{0.09} & 13.54 \std{0.07} & 13.50 \std{0.14} & 13.91 \std{0.25} \\
400 & 13.27 \std{0.12} & \textbf{15.69} \std{0.12} & 14.33 \std{0.12} & 15.16 \std{0.11} & 14.72 \std{0.21} & 15.00 \std{0.10} & \textbf{15.35} \std{0.19} \\
600 & 13.47 \std{0.08} & \textbf{16.44} \std{0.06} & 15.42 \std{0.19} & \textbf{16.64} \std{0.24} & 15.71 \std{0.18} & 16.07 \std{0.10} & \textbf{16.37} \std{0.31} \\
1000 & 14.50 \std{0.16} & 18.16 \std{0.19} & 16.99 \std{0.19} & \textbf{18.44} \std{0.11} & 17.51 \std{0.16} & 17.26 \std{0.15} & 17.89 \std{0.12} \\
2000 & 16.59 \std{0.15} & 20.50 \std{0.21} & 19.71 \std{0.19} & \textbf{21.03} \std{0.09} & 20.08 \std{0.23} & 19.50 \std{0.20} & 20.26 \std{0.27} \\
4000 & 19.11 \std{0.13} & 23.09 \std{0.15} & 22.94 \std{0.17} & \textbf{24.45} \std{0.20} & 23.18 \std{0.21} & 22.43 \std{0.33} & 22.94 \std{0.20} \\
6000 & 22.57 \std{0.06} & 25.41 \std{0.20} & 25.42 \std{0.30} & \textbf{26.40} \std{0.26} & 25.66 \std{0.19} & 24.66 \std{0.18} & 25.02 \std{0.15} \\

\bottomrule
\end{tabular}
\end{subtable}
\end{table*}
\begin{table*}
\caption{\textbf{FAA} with a different class ordering, averaged over 5 independent runs (mean ± standard error). Several sample-selection strategies and embedding spaces are compared across multiple replay-buffer sizes (\buffer). For each \buffer, the best AAA is in bold; result swithin the standard error of the
best are also bolded.}
\label{tab:faa_35}
\begin{subtable}[!t]{\textwidth}
\caption{FAA on Split CIFAR-100 \textbf{ER ACE}.}

\centering
\fontsize{9}{11}
\rmfamily
\setlength{\tabcolsep}{1mm}
\label{tab:faa_cifar-100_er_ace_seed_35}
\rowcolors{2}{tableShade}{white}%
\begin{tabular}{l|C{1.5cm}|ccc|ccc|}
\toprule
& \multicolumn{1}{c|}{Random}
& \multicolumn{3}{c|}{\textbf{\method \probcover}}
& \multicolumn{3}{c|}{\textbf{\method \maxherding}}
\\
\cmidrule(lr){2-2} \cmidrule(lr){3-5} \cmidrule(lr){6-8} 
Buffer & Supervised & Supervised & SimCLR & \method & Supervised & SimCLR & \method  \\
\midrule
100 & 20.59 \std{0.23} & 27.64 \std{0.44} & 25.67 \std{0.45} & 27.42 \std{0.32} & 28.10 \std{0.30} & 28.30 \std{0.44} & \textbf{29.35} \std{0.25}   \\
300 & 28.61 \std{0.05} & 37.84 \std{0.14} & 36.90 \std{0.31} & 38.88 \std{0.21} & 37.70 \std{0.34} & 37.63 \std{0.31} & \textbf{39.19} \std{0.19}   \\
500 & 35.30 \std{0.19} & 42.23 \std{0.19} & 41.75 \std{0.18} & 43.55 \std{0.23} & 42.02 \std{0.25} & 42.42 \std{0.13} & \textbf{44.02} \std{0.38}   \\
1000 & 41.99 \std{0.16} & 48.18 \std{0.22} & 48.36 \std{0.15} & 48.96 \std{0.25} & 47.89 \std{0.23} & 48.71 \std{0.30} & \textbf{49.63} \std{0.30}  \\
2000 & 49.00 \std{0.23} & 53.20 \std{0.22} & 53.51 \std{0.08} & 54.44 \std{0.28} & 53.67 \std{0.22} & 54.30 \std{0.14} & \textbf{55.11} \std{0.10}  \\
4000 & 56.89 \std{0.11} & 59.01 \std{0.27} & 59.18 \std{0.12} & \textbf{59.73} \std{0.15} & 59.30 \std{0.05} & 59.23 \std{0.14} & \textbf{59.67} \std{0.10}  \\
5000 & 58.75 \std{0.22} & 60.45 \std{0.18} & 60.65 \std{0.11} & \textbf{61.40} \std{0.22} & 60.17 \std{0.11} & 60.37 \std{0.09} & \textbf{60.94} \std{0.07}   \\

\bottomrule
\end{tabular}
\end{subtable}

\begin{subtable}[!t]{\textwidth}

\caption{FAA on Split CIFAR-100 \textbf{ER}.}

\centering
\fontsize{9}{11}
\rmfamily
\setlength{\tabcolsep}{1mm}
\begin{tabular}{l|C{1.5cm}|ccc|ccc|}
\toprule
& \multicolumn{1}{c|}{Random}
& \multicolumn{3}{c|}{\textbf{\method \probcover}}
& \multicolumn{3}{c|}{\textbf{\method \maxherding}}
 \\
\cmidrule(lr){2-2} \cmidrule(lr){3-5} \cmidrule(lr){6-8} 
Buffer & Supervised & Supervised & SimCLR & \method & Supervised & SimCLR & \method  \\
\midrule
100 & 9.95 \std{0.07} & \textbf{11.45} \std{0.11} & 10.32 \std{0.06} & 11.17 \std{0.17} & 11.19 \std{0.05} & 11.05 \std{0.18} & \textbf{11.51} \std{0.01}  \\
\textbf{}\textbf{}300 & 13.71 \std{0.09} & \textbf{18.87} \std{0.11} & 17.16 \std{0.14} & 18.71 \std{0.25} & 18.10 \std{0.34} & 18.01 \std{0.22} & 18.71 \std{0.25}  \\
500 & 17.41 \std{0.38} & 23.75 \std{0.39} & 22.37 \std{0.34} & \textbf{24.66} \std{0.14} & 24.11 \std{0.15} & 23.40 \std{0.15} & \textbf{24.66} \std{0.29}  \\
1000 & 27.44 \std{0.48} & 33.50 \std{0.17} & 32.51 \std{0.48} & 33.99 \std{0.27} & 33.70 \std{0.20} & 33.27 \std{0.16} & \textbf{34.64} \std{0.31}  \\
2000 & 39.78 \std{0.30} & 43.73 \std{0.01} & 43.74 \std{0.30} & 44.02 \std{0.20} & 44.06 \std{0.30} & 44.01 \std{0.20} & \textbf{45.22} \std{0.16}  \\
\bottomrule

\end{tabular}
\label{tab:faa_cifar-100_er_seed_35}

\end{subtable}

\begin{subtable}[t]{\textwidth}
\caption{FAA on  Split CIFAR-100 \textbf{ER ACE}.}

\centering
\fontsize{9}{11}\selectfont
\rmfamily
\setlength{\tabcolsep}{1mm}
\rowcolors{2}{tableShade}{white}%
\begin{tabular}{l|C{1.5cm}|ccc|ccc|}
\toprule
& \multicolumn{1}{c|}{Random}
& \multicolumn{3}{c|}{\textbf{\method \probcover}}
& \multicolumn{3}{c|}{\textbf{\method \maxherding}}
 \\
\cmidrule(lr){2-2} \cmidrule(lr){3-5} \cmidrule(lr){6-8}
Buffer & Supervised & Supervised & SimCLR & \method  & Supervised & SimCLR & \method \\
\midrule
200 & 11.39 \std{0.10} & \textbf{13.23} \std{0.12} & 12.22 \std{0.14} & 12.75 \std{0.13} & 13.11 \std{0.08} & 12.71 \std{0.11} & 12.95 \std{0.09} \\
400 & 11.98 \std{0.24} & \textbf{15.09} \std{0.21} & 13.70 \std{0.16} & 14.71 \std{0.24} & 13.84 \std{0.15} & 14.12 \std{0.21} & 14.51 \std{0.16} \\
600 & 12.90 \std{0.13} & \textbf{16.18} \std{0.12} & 14.68 \std{0.08} & 15.78 \std{0.18} & 14.97 \std{0.19} & 15.48 \std{0.13} & 15.14 \std{0.06}\\
1000 & 14.14 \std{0.09} & \textbf{17.67} \std{0.26} & 16.21 \std{0.22} & \textbf{17.47} \std{0.15} & 16.61 \std{0.11} & 16.32 \std{0.18} & 16.77 \std{0.10} \\
2000 & 15.94 \std{0.16} & 19.88 \std{0.24} & 18.60 \std{0.23} & \textbf{20.42} \std{0.29} & 19.70 \std{0.34} & 19.01 \std{0.14} & 19.21 \std{0.22} \\
4000 & 19.42 \std{0.22} & 22.86 \std{0.13} & 23.05 \std{0.35} & \textbf{24.08} \std{0.07} & 22.80 \std{0.12} & 21.84 \std{0.28} & 21.84 \std{0.23} \\
6000 & 22.05 \std{0.25} & 25.98 \std{0.34} & 25.63 \std{0.30} & \textbf{26.53} \std{0.13} & 25.14 \std{0.23} & 24.43 \std{0.28} & 25.23 \std{0.25} \\
\bottomrule
\end{tabular}
\label{tab:faa_tinyimg_er_ace_seed_35}
\end{subtable}


\end{table*}

\begin{table*}[!h]
\caption{\textbf{AAA} with a different class ordering, averaged over 5 independent runs (mean ± standard error). Several sample-selection strategies and embedding spaces are compared across multiple replay-buffer sizes (\buffer). For each \buffer, the best AAA is in bold; result swithin the standard error of the
best are also bolded.}
\label{tab:aaa_42}

\begin{subtable}[t]{\textwidth}
\caption{AAA on Split CIFAR-100 \textbf{ER ACE}.}

\centering
\fontsize{9}{11}\selectfont
\rmfamily
\setlength{\tabcolsep}{1mm}
\rowcolors{2}{tableShade}{white}%
\begin{tabular}{l|C{1.5cm}|ccc|ccc|ccc}
\toprule
& \multicolumn{1}{c|}{Random}
& \multicolumn{3}{c|}{\textbf{\method \probcover}}
& \multicolumn{3}{c|}{\textbf{\method \maxherding}}
 \\
\cmidrule(lr){2-2} \cmidrule(lr){3-5} \cmidrule(lr){6-8} 
Buffer & Supervised & Supervised & SimCLR & \method & Supervised & SimCLR & \method  \\
\midrule
100 & 39.94 \std{0.09} & 46.09 \std{0.08} & 45.60 \std{0.09} & \textbf{46.90} \std{0.18} & 46.21 \std{0.18} & 46.17 \std{0.11} & \textbf{46.78} \std{0.27}  \\
300 & 49.19 \std{0.10} & 53.33 \std{0.07} & 53.62 \std{0.09} & 54.30 \std{0.13} & 53.46 \std{0.30} & 53.92 \std{0.19} & \textbf{54.68} \std{0.14} \\
500 & 52.85 \std{0.08} & 56.55 \std{0.15} & 56.76 \std{0.12} & 57.07 \std{0.26} & 56.52 \std{0.12} & 57.34 \std{0.10} & \textbf{57.77} \std{0.07}  \\
1000 & 57.60 \std{0.13} & 60.65 \std{0.09} & 60.90 \std{0.10} & 61.46 \std{0.06} & 60.60 \std{0.23} & 61.25 \std{0.06} & \textbf{61.97} \std{0.17}  \\
2000 & 62.35 \std{0.14} & 64.36 \std{0.20} & 64.85 \std{0.11} & 64.91 \std{0.12} & 64.29 \std{0.13} & 65.01 \std{0.08} & \textbf{65.26} \std{0.15} \\
4000 & 66.73 \std{0.17} & 68.22 \std{0.09} & 68.39 \std{0.17} & \textbf{68.67} \std{0.14} & 68.20 \std{0.14} & 67.97 \std{0.08} & 68.16 \std{0.07}  \\
5000 & 68.32 \std{0.16} & 69.36 \std{0.08} & 69.92 \std{0.15} & \textbf{69.80} \std{0.17} & 69.40 \std{0.14} & 69.07 \std{0.04} & 69.20 \std{0.10}  \\

\bottomrule
\end{tabular}
\label{tab:aaa_cifar-100_er_ace_seed_42}
\end{subtable}

\begin{subtable}[!t]{\textwidth}
\caption{AAA on Split CIFAR-100 \textbf{ER}.}
\centering
\fontsize{9}{11}
\rmfamily
\setlength{\tabcolsep}{1mm}
\rowcolors{2}{tableShade}{white}%
\begin{tabular}{l|C{1.5cm}|ccc|ccc|ccc}
\toprule
& \multicolumn{1}{c|}{Random}
& \multicolumn{3}{c|}{\textbf{\method \probcover}}
& \multicolumn{3}{c|}{\textbf{\method \maxherding}}
 \\
\cmidrule(lr){2-2} \cmidrule(lr){3-5} \cmidrule(lr){6-8} 

Buffer & Supervised & Supervised & SimCLR & \method & Supervised & SimCLR & \method  \\
\midrule
100 & 28.19 \std{0.11} & 30.89 \std{0.07} & 29.72 \std{0.24} & \textbf{30.51} \std{0.16} & 30.31 \std{0.11} & 30.37 \std{0.15} & \textbf{30.49} \std{0.15}  \\
\textbf{}300 & 34.42 \std{0.42} & 38.55 \std{0.37} & 38.12 \std{0.33} & \textbf{39.30} \std{0.13} & 38.44 \std{0.25} & 37.92 \std{0.46} & 38.37 \std{0.42}  \\
500 & 40.31 \std{0.21} & 43.90 \std{0.27} & 42.60 \std{0.39} & \textbf{43.55} \std{0.34} & \textbf{43.74} \std{0.09} & 43.58 \std{0.37} & 44.68 \std{0.31} \\
1000 & 48.62 \std{0.24} & 51.78 \std{0.20} & 51.86 \std{0.22} & \textbf{52.58} \std{0.35} & 51.67 \std{0.37} & 52.27 \std{0.31} & \textbf{52.71} \std{0.25}  \\
2000 & 58.66 \std{0.23} & 59.70 \std{0.47} & 60.57 \std{0.20} & \textbf{61.48} \std{0.18} & 60.68 \std{0.13} & 60.73 \std{0.37} & 60.49 \std{0.28}  \\
4000 & 66.49 \std{0.18} & 67.67 \std{0.23} & 67.41 \std{0.17} & \textbf{68.30} \std{0.32} & 68.10 \std{0.12} & 67.35 \std{0.27} & 68.12 \std{0.11}  \\
5000 & 68.89 \std{0.17} & 69.58 \std{0.18} & 69.53 \std{0.22} & \textbf{70.13} \std{0.21} & 69.02 \std{0.21} & 69.21 \std{0.32} & 69.43 \std{0.05}  \\
\bottomrule
\end{tabular}
\setlength{\tabcolsep}{1mm}

\label{tab:aaa_cifar-100_er_seed_42}
\end{subtable}

\begin{subtable}[t]{\textwidth}
\caption{AAA on  Split CIFAR-100 \textbf{ER ACE}.}
\centering
\fontsize{9}{11}\selectfont
\rmfamily
\setlength{\tabcolsep}{1mm}
\rowcolors{2}{tableShade}{white}%
\begin{tabular}{l|C{1.5cm}|ccc|ccc|ccc}
\toprule
& \multicolumn{1}{c|}{Random}
& \multicolumn{3}{c|}{\textbf{\method \probcover}}
& \multicolumn{3}{c|}{\textbf{\method \maxherding}}
 \\
\cmidrule(lr){2-2} \cmidrule(lr){3-5} \cmidrule(lr){6-8} 
Buffer & Supervised & Supervised & SimCLR & \method & Supervised & SimCLR & \method  \\
\midrule
200 & 25.92 \std{0.07} & \textbf{28.32} \std{0.06} & 27.43 \std{0.10} & 28.17 \std{0.11} & 27.91 \std{0.09} & 27.93 \std{0.09} & \textbf{28.20} \std{0.08} \\
400 & 27.78 \std{0.16} & \textbf{30.60} \std{0.13} & 29.61 \std{0.07} & \textbf{30.50} \std{0.09} & 29.73 \std{0.05} & 29.94 \std{0.10} & 30.10 \std{0.10} \\
600 & 28.96 \std{0.07} & 31.60 \std{0.13} & 30.94 \std{0.09} & \textbf{31.82} \std{0.08} & 31.18 \std{0.09} & 31.40 \std{0.15} & 31.56 \std{0.13} \\
1000 & 30.49 \std{0.05} & 33.60 \std{0.15} & 33.05 \std{0.13} & \textbf{34.08} \std{0.10} & 33.43 \std{0.12} & 33.12 \std{0.20} & 33.22 \std{0.11} \\
2000 & 33.23 \std{0.13} & 36.09 \std{0.13} & 35.84 \std{0.11} & \textbf{36.86} \std{0.05} & 36.08 \std{0.14} & 35.51 \std{0.14} & 36.04 \std{0.20} \\
4000 & 36.95 \std{0.13} & 39.32 \std{0.13} & 39.06 \std{0.10} & \textbf{39.87} \std{0.12} & 39.10 \std{0.12} & 38.47 \std{0.09} & 38.57 \std{0.12} \\
6000 & 39.66 \std{0.12} & 40.90 \std{0.12} & 41.19 \std{0.10} & \textbf{41.67} \std{0.08} & 40.94 \std{0.15} & 40.08 \std{0.10} & 40.26 \std{0.16} \\

\bottomrule
\end{tabular}
\end{subtable}
\end{table*}
\begin{table*}[!t]
\caption{\textbf{AAA} with a different class ordering, averaged over 5 independent runs (mean ± standard error). Several sample-selection strategies and embedding spaces are compared across multiple replay-buffer sizes (\buffer). For each \buffer, the best AAA is in bold; result swithin the standard error of the
best are also bolded.}
\label{tab:aaa_35}
\begin{subtable}[t]{\textwidth}
\caption{AAA on Split CIFAR-100 \textbf{ER ACE}.}
\centering
\fontsize{9}{11}\selectfont
\rmfamily
\setlength{\tabcolsep}{1mm}
\rowcolors{2}{tableShade}{white}%
\begin{tabular}{l|C{1.5cm}|ccc|ccc|ccc}
\toprule
& \multicolumn{1}{c|}{Random}
& \multicolumn{3}{c|}{\textbf{\method \probcover}}
& \multicolumn{3}{c|}{\textbf{\method \maxherding}} \\
\cmidrule(lr){2-2} \cmidrule(lr){3-5} \cmidrule(lr){6-8} 
Buffer & Supervised & Supervised & SimCLR & \method & Supervised & SimCLR & \method  \\
\midrule
100 & 41.79 \std{0.14} & 47.76 \std{0.18} & 47.34 \std{0.14} & 48.44 \std{0.08} & 48.51 \std{0.17} & 48.63 \std{0.15} & \textbf{49.18} \std{0.11} \\
300 & 51.26 \std{0.11} & 56.10 \std{0.12} & 56.54 \std{0.18} & \textbf{57.50} \std{0.13} & 56.33 \std{0.17} & 57.15 \std{0.14} & \textbf{57.78} \std{0.13}  \\
500 & 56.12 \std{0.28} & 59.79 \std{0.14} & 60.32 \std{0.15} & \textbf{61.35} \std{0.08} & 60.28 \std{0.12} & 60.63 \std{0.12} & \textbf{61.45} \std{0.11}  \\
1000 & 61.84 \std{0.20} & 64.49 \std{0.20} & 65.41 \std{0.12} & \textbf{65.54} \std{0.14} & 64.56 \std{0.12} & 65.04 \std{0.17} & \textbf{65.79} \std{0.22}  \\
\textbf{}2000 & 66.46 \std{0.05} & 68.70 \std{0.17} & 68.92 \std{0.18} & \textbf{69.02} \std{0.15} & 68.87 \std{0.12} & 69.22 \std{0.09} & \textbf{69.29} \std{0.17}  \\
4000 & 71.57 \std{0.10} & 72.34 \std{0.17} & 72.52 \std{0.11} & 73.00 \std{0.03} & 72.53 \std{0.06} & 72.30 \std{0.21} & 72.44 \std{0.13}  \\
5000 & 72.95 \std{0.09} & 73.61 \std{0.10} & 73.48 \std{0.09} & \textbf{74.22} \std{0.12} & 73.14 \std{0.09} & 73.36 \std{0.14} & 73.66 \std{0.03}  \\
\bottomrule
\end{tabular}
\label{tab:aaa_cifar-100_er_ace_seed_35}

\end{subtable}

\begin{subtable}[!t]{\textwidth}
\caption{AAA on Split CIFAR-100 \textbf{ER}.}
\centering
\fontsize{9}{11}
\rmfamily
\setlength{\tabcolsep}{1mm}

\rowcolors{2}{tableShade}{white}%
\begin{tabular}{l|C{1.5cm}|ccc|ccc|ccc}
\toprule
& \multicolumn{1}{c|}{Random}
& \multicolumn{3}{c|}{\textbf{\method \probcover}}
& \multicolumn{3}{c|}{\textbf{\method \maxherding}}
 \\
\cmidrule(lr){2-2} \cmidrule(lr){3-5} \cmidrule(lr){6-8} 

Buffer & Supervised & Supervised & SimCLR & \method & Supervised & SimCLR & \method  \\
\midrule
\textbf{}\textbf{}100 & 29.82 \std{0.13} & \textbf{32.42} \std{0.04} & 31.62 \std{0.12} & 32.58 \std{0.20} & 32.28 \std{0.08} & 32.09 \std{0.16} & 32.58 \std{0.17}  \\
300 & 37.89 \std{0.08} & 42.18 \std{0.13} & 41.20 \std{0.14} & \textbf{42.32} \std{0.15} & 41.33 \std{0.18} & 41.74 \std{0.14} & \textbf{42.39} \std{0.04}\\
500 & 43.07 \std{0.17} & 47.15 \std{0.13} & 47.20 \std{0.09} & \textbf{48.48} \std{0.09} & 47.70 \std{0.09} & 47.64 \std{0.12} & \textbf{48.52} \std{0.19} \\
1000 & 52.56 \std{0.13} & 55.93 \std{0.05} & 55.92 \std{0.29} & 56.60 \std{0.31} & 56.30 \std{0.08} & 56.35 \std{0.20} & \textbf{57.20} \std{0.11}  \\
2000 & 62.59 \std{0.15} & 64.42 \std{0.21} & 64.67 \std{0.11} & 64.56 \std{0.04} & 64.45 \std{0.13} & 64.52 \std{0.07} & \textbf{64.96} \std{0.10}  \\

\bottomrule
\end{tabular}
\setlength{\tabcolsep}{1mm}

\label{tab:aaa_cifar-100_er_seed_35}
\end{subtable}
\begin{subtable}[t]{\textwidth}
\caption{AAA on  Split CIFAR-100 \textbf{ER-ACE}.}

\centering
\fontsize{9}{11}\selectfont
\rmfamily
\setlength{\tabcolsep}{1mm}
\rowcolors{2}{tableShade}{white}%
\begin{tabular}{l|C{1.5cm}|ccc|ccc|}
\toprule
& \multicolumn{1}{c|}{Random}
& \multicolumn{3}{c|}{\textbf{\method \probcover}}
& \multicolumn{3}{c|}{\textbf{\method \maxherding}}
 \\
\cmidrule(lr){2-2} \cmidrule(lr){3-5} \cmidrule(lr){6-8} 
Buffer & Supervised & Supervised & SimCLR & \method & Supervised & SimCLR & \method \\
\midrule
200 & 26.65 \std{0.04} & \textbf{28.49} \std{0.11} & 27.57 \std{0.06} & 28.24 \std{0.09} & 28.16 \std{0.19} & 28.19 \std{0.05} & \textbf{28.31} \std{0.12} \\
400 & 28.01 \std{0.07} & \textbf{30.93} \std{0.23} & 29.82 \std{0.13} & 30.79 \std{0.15} & 30.06 \std{0.02} & 30.28 \std{0.12} & 30.49 \std{0.12} \\
600 & 29.02 \std{0.12} & \textbf{32.01} \std{0.09} & 31.05 \std{0.16} & \textbf{32.21} \std{0.14} & 31.76 \std{0.18} & 31.45 \std{0.09} & 31.66 \std{0.08} \\
1000 & 31.03 \std{0.15} & 33.92 \std{0.13} & 32.97 \std{0.07} & \textbf{34.43} \std{0.16} & 33.52 \std{0.14} & 33.15 \std{0.16} & 33.11 \std{0.12} \\
2000 & 34.01 \std{0.16} & 36.25 \std{0.22} & 35.97 \std{0.17} & \textbf{36.96} \std{0.10} & 36.65 \std{0.11} & 36.11 \std{0.18} & 36.11 \std{0.19} \\
4000 & 37.83 \std{0.13} & 39.51 \std{0.11} & 39.78 \std{0.21} & \textbf{40.28} \std{0.12} & 39.37 \std{0.13} & 38.70 \std{0.10} & 39.05 \std{0.11} \\
6000 & 40.15 \std{0.24} & 42.05 \std{0.26} & 41.66 \std{0.15} & \textbf{42.57} \std{0.15} & 41.37 \std{0.14} & 40.76 \std{0.24} & 41.43 \std{0.04} \\

\bottomrule
\end{tabular}
\setlength{\tabcolsep}{1mm}
\label{tab:aaa_tinyimg_er_ace_seed_35}
\end{subtable}
\begin{subtable}[t]{\textwidth}
\caption{AAA on  Split CIFAR-100 \textbf{ER}.}

\centering
\fontsize{9}{11}\selectfont
\rmfamily
\setlength{\tabcolsep}{1mm}
\rowcolors{2}{tableShade}{white}%
\begin{tabular}{l|C{1.5cm}|ccc|ccc|ccc}
\toprule
& \multicolumn{1}{c|}{Random}
& \multicolumn{3}{c|}{\textbf{\method \probcover}}
& \multicolumn{3}{c|}{\textbf{\method \maxherding}}
\\
\cmidrule(lr){2-2} \cmidrule(lr){3-5} \cmidrule(lr){6-8} 
Buffer & Supervised & Supervised & SimCLR & \method & Supervised & SimCLR & \method \\
\midrule
200 & 21.09 \std{0.10} & 21.06 \std{0.04} & 21.03 \std{0.02} & 21.00 \std{0.09} & 21.12 \std{0.13} & \textbf{21.22} \std{0.02} & 21.16 \std{0.12} \\
400 & 20.94 \std{0.07} & 21.48 \std{0.11} & 21.06 \std{0.04} & \textbf{21.59} \std{0.06} & 21.56 \std{0.05} & 21.34 \std{0.05} & 21.33 \std{0.09} \\
600 & 21.16 \std{0.10} & \textbf{22.15} \std{0.09} & 21.59 \std{0.09} & 21.91 \std{0.11} & \textbf{22.17} \std{0.10} & 21.75 \std{0.08} & 21.78 \std{0.06} \\
1000 & 21.91 \std{0.15} & 23.30 \std{0.05} & 22.91 \std{0.15} & \textbf{23.64} \std{0.12} & 23.30 \std{0.10} & 22.82 \std{0.09} & 22.83 \std{0.08} \\
2000 & 25.57 \std{0.14} & \textbf{27.72} \std{0.14} & 26.72 \std{0.10} & \textbf{27.64} \std{0.09} & 27.25 \std{0.15} & 26.46 \std{0.16} & 27.03 \std{0.10} \\
4000 & 33.03 \std{0.11} & 35.37 \std{0.30} & 34.41 \std{0.18} & \textbf{36.29} \std{0.15} & 35.24 \std{0.17} & 34.08 \std{0.08} & 34.45 \std{0.23} \\
6000 & 39.88 \std{0.15} & 41.56 \std{0.17} & 41.02 \std{0.17} & \textbf{41.70} \std{0.14} & 40.87 \std{0.24} & 39.76 \std{0.18} & 40.65 \std{0.07} \\

\bottomrule
\end{tabular}
\setlength{\tabcolsep}{1mm}
\end{subtable}
\end{table*}

\end{document}